\theoremstyle{plain}
\newtheorem{theorem}{Theorem}[section]
\newtheorem{proposition}[theorem]{Proposition}
\newtheorem{lemma}[theorem]{Lemma}
\theoremstyle{definition}
\newtheorem{definition}[theorem]{Definition}
\newtheorem{assumption}[theorem]{Assumption}
\newtheorem{remark}[theorem]{Remark}
\theoremstyle{definition}
\newtheorem{example}[theorem]{Example}
\newcommand{\stmis}{\tilde{\bm{\beta}} ^\star}
\newcommand{\E}{\mathbb{E}}
\newcommand{\R}{\mathbb{R}}
\newcommand{\xbf}{\mathbf{x}}
\newcommand{\zbf}{\mathbf{z}}
\newcommand{\ybf}{\mathbf{y}}
\newcommand{\ubf}{\mathbf{u}}
\newcommand{\Xbf}{\mathbf{X}}
\newcommand{\yprebf}{\mathbf{y}_{\mathsf{pre}}}
\newcommand{\Xprebf}{\mathbf{X}_{\mathsf{pre}}}
\newcommand{\that}{\hat{t}}
\newcommand{\what}{\hat{\mathbf{w}}}
\newcommand{\yhat}{\hat{y}}
\newcommand{\xnewbf}{\mathbf{x}_{\mathrm{new}}}
\newcommand{\ynew}{{y}_{\mathrm{new}}}
\newcommand{\betabf}{\bm{\beta}}
\newcommand{\alphabf}{\bm{\alpha}}
\newcommand{\alphahatbf}{\hat{\bm{\alpha}}}
\newcommand{\betahatbf}{\hat{\bm{\beta}}}
\newcommand{\epbf}{{\bm{\varepsilon}}}
\newcommand{\Sigmabf}{\mathbf{\Sigma}}
\newcommand{\betastar}{\mathbf{\beta} ^\star}
\newcommand{\Bstar}{\mathbf{B} ^\star}
\newcommand{\Bstarmis}{\tilde{\mathbf{B}} ^\star}
\newcommand{\BBstar}{\mathbf{B} ^\star {\mathbf{B} ^\star}^\top}
\newcommand{\Bhat}{\widehat{\mathbf{B}}}
\newcommand{\Dhat}{\widehat{\mathbf{D}}}
\newcommand{\Ohat}{\widehat{\mathbf{O}}}
\newcommand{\Qhat}{\widehat{\mathbf{Q}}}
\newcommand{\qbf}{\hat{\mathbf{q}}}
\newcommand{\Ibf}{\mathbf{I}}
\newcommand{\Loss}{\mathcal{L}}
\newcommand{\alphastar}{\bm{\alpha} ^\star}
\newcommand{\qtrue}{q}
\newcommand{\st}{{\bm{\beta} ^\star}}
\newcommand{\lambdaa}{\lambda_\mathbf{\alpha}}
\newcommand{\lambdab}{\lambda_\mathbf{\beta}}
\newcommand{\lambdaB}{\lambda}
\newcommand{\lambdazero}{\lambda_0}
\newcommand{\Gammahat}{\hat{\bm{\Gamma}}}
\newcommand{\Lambdahat}{\hat{\bm{\Lambda}}}
\newcommand{\BSC}{B_\mathsf{SC}}
\newcommand{\VSC}{V_\mathsf{SC}}
\newcommand{\Rcal}{\mathfrak{R}}
\newcommand{\cfrak}{\mathfrak{c}}
\newcommand{\argmin}{\operatornamewithlimits{argmin}}
\newcommand{\dhat}{\hat{d}}
\newcommand{\rhat}{\hat{r}}
\newcommand{\eigtop}{\sigma_{\mathrm{max}}}
\newcommand{\eigbot}{\sigma_{\mathrm{min}}}
\newcommand{\vs}{\varphi}
\newcommand{\xH}{\mathbf{x}_H}
\newcommand{\ho}{h_0}
\newcommand{\hind}{\tilde{h}}
\newcommand{\What}{\hat{\mathbf{W}}}
\newcommand{\Zj}{\mathbf{Z} }
\newcommand{\Zhj}{\mathbf{Z} _h}
\newcommand{\Th}{\hat{\mathbf{T}}_h}
\newcommand{\Ph}{\hat{\mathbf{P}}_h}
\newcommand{\Pbf}{\hat{\mathbf{P}}}
\newcommand{\upbf}{\bm{\upsilon}}
\newcommand{\Phpj}{\hat{\mathbf{P}}_{h,\perp} }
\newcommand{\Phpinf}{\hat{\mathbf{P}}_{h,\perp}^{\infty}}
\newcommand{\upbfj}{\bm{\upsilon} }
\newcommand{\upbfjtilde}{\tilde{\bm{\upsilon}} }
\newcommand{\upbfjbreve}{\breve{\bm{\upsilon}} }
\newcommand{\stj}{{\bm{\beta} ^\star}}
\newcommand{\upbfjtildeT}{\tilde{\bm{\upsilon}}^{\st\top}}
\newcommand{\upbfjbreveT}{\breve{\bm{\upsilon}}^{\st\top}}
\newcommand{\Pw}{\hat{\mathbf{P}}_{h, \perp}^{\varpi,\st}}
\newcommand{\Pwinf}{\hat{\mathbf{P}}_{h, \perp}^{\varpi,\infty}}
\newcommand{\Pz}{\hat{\mathbf{P}}_{h, \perp}^{z,\st}}
\newcommand{\Pzinf}{\hat{\mathbf{P}}_{h, \perp}^{z,\infty}}
\newcommand{\xibf}{\bm{\xi}}
\newcommand{\Xbfj}{{\Xbf} }
\newcommand{\Xbfhj}{{\Xbf} _h}
\newcommand{\XbfhjT}{{{\Xbf} _h}^\top}
\newcommand{\rzero}{\rho_i^{(0)}}
\newcommand{\rone}{\rho_i^{(1)}}
\newcommand{\bigsum}{\mathlarger{\mathlarger{\sum}}}
\newcommand{\Lbf}{\mathbf{L}}
\newcommand{\normop}[1]{\norm{#1}_\mathrm{op}}
\newif\ifshowcomments
\newcommand{\ba}[1]{\textcolor{OliveGreen}{[BA: #1]}}
\newcommand{\yi}[1]{\textcolor{Blue}{[YL: #1]}}
\newcommand{\subha}[1]{\textcolor{Plum}{[SS: #1]}}
\newcommand{\ba}[1]{}
\newcommand{\yi}[1]{}
\newcommand{\subha}[1]{}
\title{ Understanding Optimal Feature Transfer\\via a Fine-Grained Bias-Variance Analysis}
\title{Understanding Optimal Feature Transfer\\via a Fine-Grained Bias-Variance Analysis}
\author[1]{Yufan Li\thanks{yufan\_li@g.harvard.edu}}
\author[1]{Subhabrata Sen\thanks{subhabratasen@fas.harvard.edu}}
\author[2]{Ben Adlam \thanks{adlam@google.com}}
\affil[1]{Department of Statistics, Harvard University}
\affil[2]{Google DeepMind}
\begin{document}

\maketitle

\begin{abstract}

In the transfer learning paradigm models learn useful representations (or features) during a data-rich pretraining stage, and then use the pretrained representation to improve model performance on data-scarce downstream tasks. In this work, we explore transfer learning with the goal of optimizing downstream performance. We introduce a simple linear model that takes as input an arbitrary pretrained feature transform. We derive exact asymptotics of the downstream risk and its \textit{fine-grained} bias-variance decomposition. We then identify the pretrained representation that optimizes the asymptotic downstream bias and variance averaged over an ensemble of downstream tasks. Our theoretical and empirical analysis uncovers the surprising phenomenon that the optimal featurization is naturally sparse, even in the absence of explicit sparsity-inducing priors or penalties. Additionally, we identify a phase transition where the optimal pretrained representation shifts from hard selection to soft selection of relevant features.
\end{abstract}

\section{Introduction}

Data scarcity poses significant challenges across various domains, such as computer vision, audio processing, natural language processing, graph learning, and multi-modal learning to name a few. While large models show immense potential when trained on extensive datasets \cite{kaplan2020scaling,hoffmann2022training,henighan2020scaling}, the specific, high-quality data available for many important applications is limited \cite{hu2019strategies,finn2017model,ren2018meta,sun2019meta,lee2022rethinking,wang2020generalizing,li2021universal}. The transfer learning paradigm has been proposed as a potential solution to overcome these limitations in diverse research areas \cite{achiam2023gpt,zhu2023not,raina2007self,blitzer2006domain,peters2019tune,parisi2022unsurprising,wang2015transfer,jang2019learning,chen2020graph,dai2007co,shie2015transfer,buffelli2020meta,hernandez2021scaling}. In this setting, one learns an effective representation in an upstream pretraining stage from data-rich tasks; this representation is then used to improve model performance on downstream tasks where data are scarce.

Despite the widespread adoption of this paradigm, the mechanisms that produce effective representations remain poorly understood. In particular, one may ask the following question: for a given ensemble of downstream tasks, what constitutes optimal pretrained representation? In this paper, we study a family of downstream linear regression tasks of the form  $\ybf \;=\; \Xbf\,{\st} + \epbf,$ where the true parameter \({\st}\) factors through a shared representation \(\Bstar \) via \({\st} = \Bstar {\alphastar}\). We assume that each task-specific coefficient \({\alphastar}\) is drawn from a known prior, and the data matrix \(\Xbf\) has Gaussian covariates with covariance \(\Sigmabf\). Typically, transfer learning proceeds in two steps: (i) obtain an estimator $\tilde{\mathbf{B}}^\star$ of the representation \(\Bstar\) in an upstream, data-rich stage, and (ii) solve the downstream problem \(\ybf = \Xbf \tilde{\mathbf{B}}^\star \alphastar + \epbf\) for each new task by estimating \(\alphastar\). A key observation of our work is that directly plugging in \(\tilde{\mathbf{B}}^\star\) can be suboptimal even if $\tilde{\mathbf{B}}^\star$ accurately estimates $\Bstar$, because \(\tilde{\mathbf{B}}^\star\) alone does not take into account the covariance structure of \(\Sigmabf\) or the distribution of \(\alphastar\). Instead, one can optimize the feature transform \(\Bhat\) to minimize the average downstream risk—the expected risk over the prior of $\alphastar$. Since the downstream data are not available during pretraining, we derive a closed-form expression for the asymptotic downstream risk in the high-dimensional limit, which then becomes a fully differentiable objective for choosing \(\Bhat\). We then examine the structure of the optimal \(\Bhat\) and its dependence on the shared representation \(\Bstar\) and the data covariance \(\Sigmabf\).

\subsection{Contributions}

Our contributions include: 
\begin{enumerate}
    \item Deriving the exact asymptotics for the downstream risk and its fine-grained bias-variance decomposition given an arbitrary linear representation from upstream. To elucidate these dependencies, we develop a fine-grained bias-variance decomposition of the population risk, showing how different aspects of \(\Bhat\) can reduce bias while inflating variance, and vice versa from a series of ablation studies. In a simpler setting where \(\Bhat\) shares eigenvectors with \(\Sigmabf\), we can explicitly characterize how the balance between aligning to \(\Bstar\) and leveraging \(\Sigmabf\) shapes the form of \(\Bhat\).

\item Proposing optimization methodologies to minimize the asymptotic downstream risk as a function of the pretrained representation; conducting ablation studies comparing the total risk, bias, and variance of the optimally pretrained predictor with predictors with no featurization or ground-truth featurization. We also adapt the methodology for a minimax objective that controls the worst-case performance among downstream tasks.

\item Finding that learning both task-relevant features and structures in data covariates are vital, by interpreting the structure of optimal pretraining. Our theoretical and empirical analysis uncovers the surprising phenomenon that the optimal featurization is naturally sparse, even in the absence of explicit sparsity-inducing priors or penalties. Additionally, we uncover a phase transition: when the effective rank of \(\Bstar\) is below a certain threshold, the optimal transform “hard-selects” principal components (analogous to the classical principal component regression), whereas above that threshold, it “soft-selects” features, smoothly weighting relevant directions for improved performance.

\item Conducting numerical experiments when the assumption of shared eigenvectors is lifted. In these more general settings, we empirically observe how the singular vectors of the optimized \(\Bhat\) align partially with those of both \(\Bstar\) and \(\Sigmabf\).

\end{enumerate}


\noindent
\textbf{Organization:} We introduce our model and the downstream estimation strategy in Section~\ref{Preliminary}. Section~\ref{sec_risk_results} derives the sharp asymptotics for the downstream risk, bias and variance for any given pretrained representation. In Section~\ref{section4}, we optimize the average risk on the downstream task ensemble, and characterize the optimal pre-trained representation. Finally, we investigate the structure of the optimal representation in Section \ref{sec:structure}.

\section{Preliminaries}\label{Preliminary}


\subsection{Setting}

The transfer learning paradigm comprises two main stages: (i) \textbf{Upstream learning:} The model learns a useful data representation (or feature transform) from a large, often diverse dataset; we also refer to this stage as pretraining and the learned representation as the pretrained representation; (ii) \textbf{Downstream model-fitting:} The model is applied to specific downstream tasks using a smaller, task-specific dataset.

We model downstream tasks as linear regression problems with Gaussian covariates:
\begin{equation}\label{formulation}
    y \;=\; \mathbf{x}^{\top}\betastar + \varepsilon,\quad \quad \betastar \;=\; \Bstar \alphastar,
\end{equation}
where $\mathbf{x}\sim N(\bm{0}, \Sigmabf)\in \R^p$, $\varepsilon\sim N(0,\sigma^2)$, and $\Bstar\in \R^{p\times q}$ is a common, low-rank feature matrix for all downstream tasks. Each task differs by having a specific weight vector $\alphastar\in \R^q$ that forms the parameter $\betastar = \Bstar \alphastar.$ This formulation follows past \cite{tripuraneni2020theory,tripuraneni2021provable,Singh_2023,kong2020robust,sun2021towards,chua2021fine,wang2023improved,du2020few} and concurrent work \cite{bilaj2024meta,hu2024revisiting,watkins2024optimistic,zhong2024bridging,jin2024meta} that assumes that downstream regression tasks share a common linear representation that can be learned upstream. 

When specialized downstream data of size $n$ are limited (i.e., $n<p$), directly estimating $\betastar$ can suffer high variance. A standard approach leverages the low-rank representation $\Bstar$: one can regress on $\alphastar$ via $\Xbf \Bstar$, which lives in $\R^{n\times q}$, reducing dimensionality. However, we will see that simply using $\Bstar$ to featurize data—even if it were known—does not always minimize downstream risk. The covariance $\Sigmabf$ of new data and the prior distribution of $\alphastar$ should also matter.

To capture the variability across different downstream tasks, we impose a prior on \(\alphastar\). Specifically, we assume that \(\alphastar\) is drawn independently from a distribution \(P_{\alphastar}\) with zero mean and covariance \(q^{-1}\Sigmabf_{\alphastar}\). Our objective is to determine a pretrained feature matrix \(\Bhat\) that minimizes the expected downstream risk averaged over this ensemble:
\[
R^{\mathsf{avg}} = \E_{\alphastar}\Biggl[\E_{(\Xbf,\ybf, y_{\mathrm{new}},{\xbf}_{\mathrm{new}})}\Bigl[(y_{\mathrm{new}} - \hat{y}_{(\Bhat, \ybf, \Xbf)}(\xbf_{\mathrm{new}}))^2\Bigr]\Biggr].
\]
Here, the predictor \(\hat{y}_{(\Bhat, \ybf, \Xbf)}\) (introduced in \Cref{solvable}) uses the pretrained feature matrix \(\Bhat\) and is fitted on data \((\ybf, \Xbf)\) for each downstream task. Averaging the risk over \(P_{\alphastar}\) allows us to seek a feature matrix \(\Bhat\) that performs well across the entire ensemble of downstream tasks.

To characterize the optimal feature matrix, we will derive the asymptotic limit of the averaged risk,
$\mathfrak{R}^{\text{avg}}(\Bhat, \Bstar, \Sigmabf, \Sigmabf_{\alphastar})$
and provide its bias-variance decomposition (see \Cref{sec_risk_results}). We further analyze the optimization problem
\[
\min_{\Bhat} \mathfrak{R}^{\text{avg}}(\Bhat, \Bstar, \Sigmabf, \Sigmabf_{\alphastar}),
\]
using both analytical and empirical methods (discussed in \Cref{sec:structure}).

Our primary focus in this paper is on deriving and analyzing the optimal \(\Bhat\) as a function of \(\Bstar\), \(\Sigmabf\). While in practice these quantities may be learned during the upstream pretraining stage, we assume throughout that the ground-truth features $\Bstar$ and data covariance $\Sigmabf$ are known. The motivation for this assumption is to detach the estimation effects of $\Bstar$ and $\Sigmabf$  from their role in determining the optimal choice of $\Bhat$. Nevertheless, in \Cref{app:pretrainsamplecomp} we consider a simple setting where \(\Bstar\) is unknown but can be estimated from upstream regression tasks; we then derive the error incurred by using an estimate \(\tilde{\mathbf{B}}^\star\) in place of \(\Bstar\) in \(\mathfrak{R}^{\text{avg}}\).

\subsection{Ridgeless Regression with Pretrained Representation} \label{solvable}

In this section, we introduce a linear predictor, \(\hat{y} = \hat{y}_{(\Bhat, \ybf, \Xbf)}\), which is fitted on data \((\ybf, \Xbf)\) from a specific downstream task. For our purposes here, the feature matrix \(\Bhat \in \mathbb{R}^{p \times k}\) is treated as a fixed, deterministic input. We will delve into the risk decomposition of \(\hat{y}\) in \Cref{sec_risk_results} and, in \Cref{sec:structure}, identify the form of \(\Bhat\) that optimizes the downstream risk of \(\hat{y}\).

\begin{definition}[Predictor for Downstream Tasks]\label{Downstream}
For a downstream task with data $(\Xbf, \ybf)$, define the empirical loss 
\begin{equation}\label{loss}
\begin{aligned}
    \Loss&(\betabf, \alphabf) := \norm{\ybf-\Xbf \betabf}_2^2+\lambdazero\bigg(\lambdaB\norm{\betabf-\Bhat\alphabf}_2^2+\lambdaa \norm{\alphabf}_2^2+\lambdab\norm{\betabf}_2^2 \bigg).
\end{aligned}
\end{equation}
where $\lambda_0,\lambdaa, \lambdab, \lambdaB>0$. Then the downstream predictor is defined as 
$$ \hat{y}=\xnewbf^\top \hat{\betabf} \quad \text{where} \quad \qty(\hat{\betabf}, \hat{\alphabf})=\lim_{\lambda_0 \to 0} \argmin_{\betabf, \alphabf} \Loss (\betabf, \alphabf).$$
\end{definition}




Loss function Eq.~\eqref{loss} contains a penalty term $\lambdaB\|\betabf-\Bhat\alphabf\|_2^2$ that softly enforces the featurization $\betahatbf =\Bhat \alphahatbf $. The strength of featurization is controlled by regularization parameters $\lambda_0,\lambdaa, \lambdab, \lambdaB$. One can adjust these parameters to interpolate from strong featurization to no featurization. For instance, we can set $\lambda =0$ which recovers standard ridgeless estimator. More interestingly, we can consider a strong-featurization limit by letting $\lambdab\to 0$ and $\lambdaB \to +\infty$ for any fixed $\lambda_{\alpha}>0$. It follows from \Cref{explicitsol} that
\begin{equation}\label{feat}
    \betahatbf \to \Bhat \alphahatbf_0 , \qquad \alphahatbf  \to \alphahatbf_0 := \qty(\Xbf  \Bhat)^+\ybf. 
\end{equation}
Therefore, in this limit, $\yhat $ makes predictions by regressing with $\alphahatbf_0 $ composed with the featurization generated by $\Bhat$, which is reminiscent of final-layer head-tuning in neural networks \cite{ba2022high,yang2020feature,nichani2024provable}. Note that $\alphahatbf_0 $ is the minimum $\ell_2$-norm solution, and by a well-known result (see e.g. \cite{hastie2022surprises}), it is also the limit of gradient flow on the objective $\|\ybf -\Xbf \Bhat \alphabf\|^2_2$ when initialized at zero. These interpretations are the main motivation to consider the ridgeless limit, $\lambda_0 \to 0$. 


Next we present explicit expressions for $\qty(\betahatbf , \alphahatbf )$. We first write the singular value decomposition of $\Bhat$ as
\begin{equation}\label{BhatSVD}
    \Bhat=\Qhat^\top \Dhat \Ohat, \qquad   \dhat_i:=\qty(\Dhat \bm{1}_{k\times 1})_{i}, i = 1, \cdots, p, 
\end{equation}
where $\Qhat \in \R^{p\times p}$ and $\Ohat \in \R^{k \times k}$ are orthogonal matrices and $\Dhat \in \R^{p\times k}$ is diagonal. 


See \Cref{app:explicit} for a proof of the following result. 


\begin{proposition}[Explicit Expression of the Optimizers]\label{explicitsol}
    We have 
        \begin{align}
            \betahatbf =\Gammahat^{-1}{\Xbf }^{\top}\qty({\Xbf } \Gammahat^{-1} {\Xbf }^\top)^{+}\ybf , \qquad \quad\alphahatbf =\qty(\Bhat^\top \Bhat + \frac{2 \lambdaa}{\lambdaB})^{-1}\Bhat^\top \betahatbf \label{sol}
        \end{align}
where $(\cdot)^+$ denotes Moore-Penrose pseudo-inverse and $\Gammahat:=\Qhat^\top \Lambdahat \Qhat \in \R^{p\times p}$. Here, $\Lambdahat \in \R^{p\times p}$ is diagonal such that for $i=1,\ldots,p$ and $\bm{\lambda}=(\lambdaa, \lambdab, \lambdaB)$,
\begin{equation}\label{ri}
    \Lambdahat_{ii}=r(\dhat_i^2, \bm{\lambda}):=\lambdab+\lambdaa \cdot \frac{\dhat_i^2 + \frac{4\lambdaa}{\lambdaB}}{\qty(\dhat_i^2 +\frac{2\lambdaa}{\lambdaB})^2}.
\end{equation} 
We will use the notation $\rhat_i := \Lambdahat_{ii}$ for easier exposition.   
\end{proposition}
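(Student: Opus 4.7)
The plan is to carry out a two-stage optimization: first solve for $\alphabf$ at fixed $\betabf$ using its first-order condition, then substitute back to obtain a ridge problem in $\betabf$ alone, and finally take the ridgeless limit $\lambdazero \to 0$. For any $\lambdazero, \lambdaa, \lambdab, \lambdaB > 0$ the objective $\Loss$ is strictly convex and quadratic in $(\betabf, \alphabf)$, so the joint minimizer exists uniquely and is characterized by the two stationary equations $\nabla_\alphabf \Loss = 0$ and $\nabla_\betabf \Loss = 0$.

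First I would solve $\nabla_\alphabf \Loss = 0$, which is linear in $\alphabf$ at fixed $\betabf$ and yields an expression of the form $\alphabf = (\Bhat^\top\Bhat + c\Ibf)^{-1}\Bhat^\top\betabf$ with an explicit constant $c$ depending on the ratio $\lambdaa/\lambdaB$; this gives the second identity in \eqref{sol}. Plugging this back into $\nabla_\betabf \Loss = 0$ collapses the joint problem into a linear system $(\Xbf^\top\Xbf + \lambdazero \Gammahat)\betabf = \Xbf^\top\ybf$, in which the effective regularizer is $\Gammahat = \lambdaB[\Ibf - \Bhat(\Bhat^\top\Bhat + c\Ibf)^{-1}\Bhat^\top] + \lambdab \Ibf$ after combining the contributions of the $\lambdaB$-penalty (evaluated at $\alphabf^\star(\betabf)$) with the $\lambdab$-penalty.

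Next I would diagonalize $\Gammahat$ using the SVD $\Bhat = \Qhat^\top\Dhat\Ohat$ from \eqref{BhatSVD}. A direct computation based on $\Bhat^\top\Bhat = \Ohat^\top\Dhat^\top\Dhat\Ohat$ gives $\Bhat(\Bhat^\top\Bhat + c\Ibf)^{-1}\Bhat^\top = \Qhat^\top \mathrm{diag}\!\left(\dhat_i^2/(\dhat_i^2 + c)\right)\Qhat$, so that $\Gammahat = \Qhat^\top \Lambdahat \Qhat$ with diagonal entries that collapse to the claimed formula $r(\dhat_i^2, \bm{\lambda})$ after algebraic simplification.

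Finally I would take $\lambdazero \to 0$. Since each $r(\dhat_i^2, \bm{\lambda}) > 0$ whenever $\lambdab > 0$, the matrix $\Gammahat$ is positive definite, and I can whiten the problem by setting $\tilde\Xbf := \Xbf \Gammahat^{-1/2}$ and $\ubf := \Gammahat^{1/2}\betabf$. In these coordinates the reduced objective becomes $\|\ybf - \tilde\Xbf\ubf\|_2^2 + \lambdazero \|\ubf\|_2^2$, and the standard identity $(M^\top M + \lambdazero \Ibf)^{-1}M^\top \to M^+$ as $\lambdazero \to 0^+$, applied with $M = \tilde\Xbf$, gives $\ubf \to \tilde\Xbf^+ \ybf = \tilde\Xbf^\top(\tilde\Xbf\tilde\Xbf^\top)^+\ybf$. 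Undoing the whitening recovers $\betahatbf = \Gammahat^{-1}\Xbf^\top(\Xbf\Gammahat^{-1}\Xbf^\top)^+\ybf$, and $\alphahatbf$ is then read off from the first step. The subtlety worth flagging, and what I expect to be the main obstacle, is making the ridgeless limit precise in the under-determined regime $n < p$, where $\Xbf^\top\Xbf$ is singular: a priori the limit could fail to be well-defined, but the whitening trick reduces the question to the standard fact that ridge regression converges to the minimum-norm least-squares solution, which here corresponds to the minimum-$\Gammahat$-norm interpolant; positivity of the $r_i$'s is exactly what makes this reduction legitimate.
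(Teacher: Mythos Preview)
Your approach is correct and essentially identical to the paper's: profile out $\alphabf$ via its first-order condition, reduce to a generalized ridge problem in $\betabf$ with penalty matrix $\Gammahat$, pass to the ridgeless limit by whitening, and diagonalize with the SVD of $\Bhat$. The only cosmetic difference is that you obtain $\Gammahat$ by substituting $\alphabf^\star(\betabf)$ directly into the $\betabf$-stationary equation, yielding the compact form $\lambdaB[\Ibf - \Bhat(\Bhat^\top\Bhat + c\Ibf)^{-1}\Bhat^\top] + \lambdab\Ibf$, whereas the paper substitutes into the loss itself and arrives at the three-term expression $\lambdaB(\Ibf - P)^2 + \lambdaa\Bhat(\Bhat^\top\Bhat + c\Ibf)^{-2}\Bhat^\top + \lambdab\Ibf$; these coincide by the envelope theorem.
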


\begin{remark}[Monotonicity of $r(\cdot)$]\label{rire}
    The function $d^2 \mapsto r(d^2, \bm{\lambda})$ in Eq.~\eqref{ri} is defined on $[0,+\infty]$ and is strictly decreasing and continuous for any $\lambdaa,\lambdab,\lambdaB>0$. It attains a maximum of $\lambdab+\lambdaB$ at $d^2=0$ and a minimum of $\lambdab$ at $d^2=+\infty$. 
\end{remark}

\subsection{Related Work}\label{RW}




The papers \cite{Dicker,dobriban2018high} first studied the asymptotic risk of ridge regression as the number of data points and number of features grow proportionally. In \cite{hastie2022surprises}, this was extended to ridgeless regression with non-asymptotic bounds, which was further extended by \cite{cheng2022dimension} to a dimension-free setting. Our work builds on techniques developed in these papers, while adding a novel fine-grained bias-variance decomposition. Beyond simple ridge regression, a considerable body of work \cite{adlam2019random,adlam2020neural,adlam2020understanding,mei2018mean,mei2022generalization,ghorbani2021linearized,belkin2020two,hu2022universality} is devoted to featurized models, like kernel ridge regression and two-layer neural networks. These works typically adopt the random feature assumption, where only the final layer is trained. Our studies may be seen as an attempt to move away from this assumption, allowing for a learnable linear kernel. 

Our model formulation follows a long line of past \cite{tripuraneni2020theory,tripuraneni2021provable,Singh_2023,kong2020robust,sun2021towards,chua2021fine,wang2023improved,du2020few} and concurrent work \cite{bilaj2024meta,hu2024revisiting,watkins2024optimistic,zhong2024bridging,jin2024meta} that assumes that downstream regression tasks share a common linear representation that can be learned upstream. However, most of the existing work focuses on proposing a methodology to learn the ground-truth representation and establish an associated sample complexity bound downstream. Additionally, there is often some prior assumptions on low-rank structures or sparsity. Our approach differs from these works in two key aspects (i) we characterize the downstream risk exactly and (ii) we do not impose artificial constraints on the existence of low-dimensional structure or artificially impose sparsity-inducing priors or penalties. Rather, our theoretical and empirical results show that sparsity and feature selection naturally emerges as a consequence of optimizing downstream risk. 

The study of asymptotic risk is inherently tied to optimizing model parameters to minimize risk. In ridge regression, \cite{nakkiran2020optimal} demonstrates that careful tuning of \(\ell_2\) regularization can mitigate the double-descent phenomenon. More relevant to our setting, prior and concurrent works \cite{wu2020optimal,jin2024meta} examine generalized ridge regression. \cite{wu2020optimal} analyzes generalized ridge regression in a single-task setting, characterizing risk and classical bias-variance decomposition while studying penalty matrices that optimize bias and variance. However, their asymptotic analysis is limited to cases where the penalty matrix commutes with the data covariance, and their characterization lacks finite-sample error bounds. Moreover, they do not investigate fine-grained bias-variance decomposition, focus on the transfer learning setting, or observe the feature selection and phase transition phenomena we identify. Meanwhile, concurrent work \cite{jin2024meta} studies generalized ridge regression from a meta-learning perspective, assuming a shared structure among tasks, similar to our work. They characterize asymptotic model risk and analyze the feature matrix that optimizes the asymptotic risk. Their results improve upon \cite{wu2020optimal} by removing the commutativity assumption. However, their setting and analysis differ from ours in two key aspects: (i) we focus on the ridgeless regime, whereas their results require a penalty strength bounded away from zero where there are nontrivial differences in conclusions and analysis. Notably, in their setting, the optimal featurization does not exhibit sparsity (i.e. feature selection); (ii) They do not consider bias-variance decomposition, which is central to our characterization of the feature selection and phase transition phenomena. 

\section{Analytic Results for Downstream Risk}
\label{sec_risk_results}


Recall that we introduced a linear predictor $\hat{y} = \hat{y}_{(\Bhat, \ybf, \Xbf)},$ which leverages a pretrained featurization \(\Bhat\) and is fitted using data \((\ybf, \Xbf)\) from a downstream task. We now derive asymptotic expressions for bias-variance decomposition of the risk of \(\hat{y}\), defined as
\[
R := \E_{\ynew, \xnewbf} \left(\ynew - \hat{y}(\xnewbf)\right)^2.
\]
We emphasize the importance of fine-grained bias-variance decomposition because explicitly separating these sub-components of risk provides a clearer understanding of how different design choices in feature selection influence overall risk. In particular, we may investigate which \(\Bhat\) optimizes bias and variance component respectively. In \Cref{sec:structure}, we attribute the observed sparsity in optimal featurization (i.e., feature selection) to the fine-grained bias and precisely characterize the associated phase transition phenomenon. 

In this section, we treat \(\Bhat \in \mathbb{R}^{p \times k}\) as a fixed, deterministic input as we derive asymptotic expressions for risk and its bias-variance decompositions.

The classical bias-variance decomposition of the risk $\E_{\epbf}R$ is defined as $\BSC +\VSC $ with
\begin{equation*}
    \BSC =\E_{ \xnewbf }\left(\ynew -\E_{\epbf } \yhat \right)^2, \qquad \VSC =\E_{ \xnewbf }\mathbb{V}_{\epbf } \yhat 
\end{equation*}
for training data $\ybf=\Xbf\st +\epbf$ and new response $\ynew ={\xnewbf }^\top {\st}$. The bias-variance decomposition above is widely recognized in the statistics literature \cite{hastie2009elements,james1997generalizations,hastie2022surprises,wu2020optimal}. However, as pointed out in \cite{adlam2020understanding}, the above is conditional on $\Xbf $ and only decomposes randomness in label noise $\epbf $. Following \cite{adlam2020understanding}, we consider the \textit{fine-grained} bias-variance decomposition
\begin{equation}
    \begin{aligned}
        &\E_{\Xbf , \epbf} R =B +V ,
    \end{aligned}
\end{equation}
where the bias component is $B :=\E_{\xnewbf } \left(\ynew -\mathbb{E}_{\Xbf \!\!, \epbf } \hat{y} \right)^2$ and variance component is $V := V_{\Xbf} +V_{\Xbf,\epbf} +V_{\epbf} $ with a further decomposition
\begin{equation*}
\begin{aligned}
        &V_{\Xbf} :=\mathbb{E}_{\xnewbf } \mathbb{V}_{\Xbf }\mathbb{E}_{\epbf } \hat{y} , \qquad V_{\epbf} :=\E_{\xnewbf } \mathbb{V}_{\epbf }\mathbb{E}_{\Xbf } \hat{y}, \qquad  V_{\Xbf,\epbf} :=\mathbb{E}_{\xnewbf } \mathbb{V}_{\Xbf , \epbf }(\hat{y} )-V_{\Xbf} -V_{\epbf} .
\end{aligned}
\end{equation*}
Here, the bias-variance decomposition is with respect to randomness in \textit{both} the data and label noise; $V_{\Xbf} $, $V_{\epbf} $, and $V_{\Xbf,\epbf} $ are a two-way ANOVA decomposition of $V $: $V_{\Xbf} $ and $V_{\epbf} $ are the variances explained by data and noise individually, and $V_{\Xbf,\epbf} $ the additional variance explained by data and noise jointly. Unlike the classical decomposition, the fine-grained decomposition provides a clear interpretation of each additive component within the asymptotic risk formula (see \eqref{A1A2} and discussion below). Additionally, as observed in \cite{adlam2020understanding} and \cite{yang2020rethinking} interpreting the classical bias can be challenging, especially as it tends to diverge at the interpolation boundary. The fine grained decomposition resolves this by attributing the divergence to the variance component \(V_{\Xbf}\), while the fine grained bias \(B\) exhibits straightforward monotonic trends in our ablation studies (see \Cref{twoboundaries} and \ref{ablation}).




\begin{figure}
    \centering
    \includegraphics[width=0.8\linewidth]{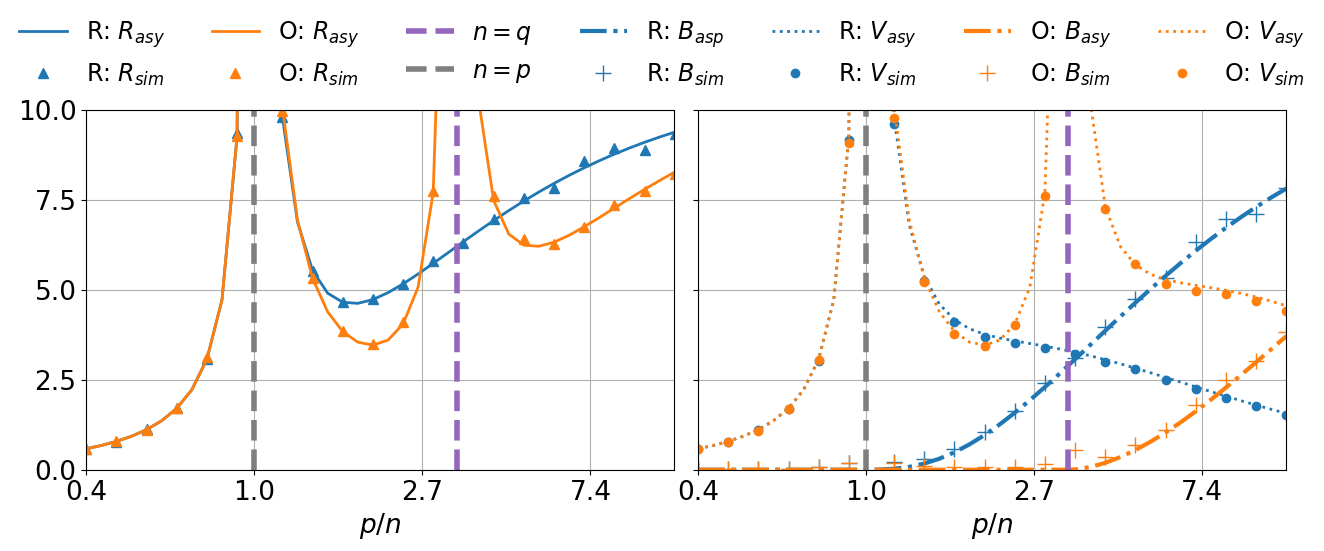}
    \caption{\textbf{(left)}: Compare asymptotic risk $\E_{\Xbf , \epbf }\Rcal $ (denoted $R_{\mathrm{asy}}$) with empirical mean of simulated risk $R $ across 50 sample draws of $\Xbf , \epbf $ ($\st$ fixed) of standard ridgeless predictor (denoted $\mathsf{R}$) and predictor with oracle featurization: $\Bhat\gets \Bstar, \lambdaa=\lambdaB=1,\lambdab=0$ (denoted $\mathsf{O}$). We fix $p=3000$ and vary $n$ from 8200 to 250 (x-axis on log-scale). \textbf{(right)}: Compare asymptotic bias and variance $B , V $ (denoted $B_{\mathrm{asy}}, V_{\mathrm{asy}}$) of the two predictors with their simulated counter-parts (denoted $B_{\mathrm{sim}}, V_{\mathrm{sim}}$). All plots are generated with columns of $\Bstar \in \R^{p \times q}, q=900$ drawn independently from $N(\bm{0}, \Sigmabf^{\Bstar}), \Sigmabf^{\Bstar}_{ij}=0.5^{|i-j|}$, $\Sigmabf\sim \frac{1}{p}\mathbf{W} \mathbf{W}^\top+0.005\cdot \Ibf_p, \mathbf{W}\sim N(\bm{0}, \Ibf_p \otimes \Ibf_p)$ and $\alphastar \sim N(\bm{0},\mathfrak{c}\cdot \Ibf)$. We maintain $\sigma^2=1$ and set $\mathfrak{c}$ such that $\mathsf{SNR}:=\norm{\st}_2/\sigma=10$. }
    \label{twoboundaries}
\end{figure}

We now describe the asymptotics of the downstream objective risk for the fine-tuned predictor from \Cref{Downstream}.

\begin{assumption}\label{Assum}
    Fix $M>0$. Let $\zbf_i\sim N(\bm{0},\Ibf_p)$ i.i.d. and define $\xbf_i= \Sigmabf^{1/2} \zbf_i$. Define $$h:=\mathrm{rank}(\Sigmabf)\leq p$$ and let $\eta_{\min}^+$ be the smallest non-zero eigenvalue of $\Sigmabf$. We assume 
    \begin{equation}\label{asseq}
        {1}/{\eta_{\min}^+}, \;\; \normop{\Sigmabf}, \;\; \normop{\Gammahat}, \;\; \normop{\Gammahat^{-1}} <M.\footnote{Recall \Cref{explicitsol} for the definition of $\Gammahat$.}
    \end{equation}
\end{assumption}

\begin{definition}[Self-consistent equation]\label{def_sce}
Denote the eigendecomposition of  $\Gammahat^{-1/2} \Sigmabf \Gammahat^{-1/2}$ by
\begin{equation}\label{eigprod}
    \Gammahat^{-1/2} \Sigmabf \Gammahat^{-1/2}=\sum_{i=1}^p \that_i \cdot \what_i \what_i^\top.
\end{equation}
Define $H:=\{i\in \{1,\ldots,p\}: \that_i \neq 0\}$,
where $\abs{H}=h=\mathrm{rank}(\Sigmabf)$. Let $b_0 \in \R_+$ be the unique non-negative solution of
\begin{equation}\label{fp}
    1-\frac{n}{h}= \frac{1}{h} \sum_{i\in H} \frac{1}{1+\that_i b_0}.
\end{equation}
when $n<h$. 
We define the quantities
\begin{equation}\label{A1A2}
    \begin{aligned}
        & \mathcal{V}:=\frac{\sum_{i\in H} \frac{(\that_i b_0)^2}{(1+\that_i b_0)^2}}{\sum_{i\in H} \frac{\that_i b_0}{(1+\that_i b_0)^2}}, \quad \mathfrak{B} :=\sum_{i \in H} \frac{\that_i \left\langle\what_i, \Gammahat^{\frac{1}{2}} {\st} \right\rangle^2}{\left(1+\that_i b_0\right)^2}, \quad \Rcal :=\mathfrak{B}+\mathcal{V}\mathfrak{B}+ \sigma^2  \mathcal{V}, \quad \mathcal{U}:=\sigma^2 \frac{h}{n-h}.
    \end{aligned}
\end{equation}
\end{definition}
The following theorem states that in the sample-deficient regime ($n<h=\mathrm{rank}(\Sigmabf)$), the total risk $R$ and its components $B, V_{\Xbf}, V_{\Xbf,\epbf}$ converge to $\Rcal$ and $\mathfrak{B}, \mathcal{V}\mathfrak{B}$ and $\mathcal{V}$ respectively whereas in the sample-rich regime ($n>h=\mathrm{rank}(\Sigmabf)$), the total risk has only one non-zero component $V_{\Xbf,\epbf}$ and it converges to $\mathcal{U}$. We defer the proof to Appendix \ref{app:riskchar}.
\begin{theorem}\label{thm}
Let Assumption \ref{Assum} hold. 
\begin{itemize}
    \item [(i)] \textbf{Sample-deficient regime ($n<h\equiv\mathrm{rank}(\Sigmabf)$).} If in addition $1+M^{-1}<h/n<M$, we have that for any constant $D>0$, there exists $C=C(D,M)$ such that $$\abs{R -\Rcal }\leq Cn^{-1/7}\norm{{\st}}_2^2$$ with probability at least $1-Cn^{-D}$. Moreover, for some $C=C(M)$, 
    \begin{equation*}
        \begin{aligned}
            &\abs{B -\mathfrak{B} }, \;\; \abs{V_\Xbf -\mathcal{V} \mathfrak{B} } \leq Cn^{-1/7}\norm{{\st}}_2^2, \qquad \abs{V_{\Xbf,\epbf} -\sigma^2 \cdot \mathcal{V}}  \leq Cn^{-1/7}.
        \end{aligned}
    \end{equation*}
    \vspace{-5mm}
    \item [(ii)] \textbf{Sample-rich regime ($n>h\equiv\mathrm{rank}(\Sigmabf)$).} If in addition $M^{-1}<h/n<1-M^{-1}$, we have $B =V_{\Xbf} =V_{\epbf} =0$ and for any constant $D>0$, there exists $C=C(D,M)$ such that $$\abs{R -\mathcal{U}}\le Cn^{-1/7}$$ with probability at least $1-Cn^{-D}$. Moreover, for some $C=C(M)$, $\abs{V_{\Xbf, \epbf} -\mathcal{U}}\le Cn^{-1/7}.$
\end{itemize}
\end{theorem}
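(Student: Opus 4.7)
The plan is to reduce the regularized estimator to minimum-norm ridgeless regression in transformed coordinates and then apply anisotropic resolvent/deterministic-equivalent techniques as in \cite{hastie2022surprises,cheng2022dimension}, carefully adapted to a rank-deficient covariance. Using \Cref{explicitsol}, I would introduce $\tilde{\Xbf} := \Xbf\Gammahat^{-1/2}$, $\tilde{\xbf}_{\mathrm{new}} := \Gammahat^{-1/2}\xnewbf$, and $\tilde{\betabf}^\star := \Gammahat^{1/2}\st$, under which $\yhat = \tilde{\xbf}_{\mathrm{new}}^\top \tilde{\betabf}$ with $\tilde{\betabf} = (\tilde{\Xbf}^\top\tilde{\Xbf})^+\tilde{\Xbf}^\top\ybf$ the minimum-norm ridgeless interpolator for design rows $\tilde{\xbf}_i \sim N(\zerobf, \tilde{\Sigmabf})$, where $\tilde{\Sigmabf} := \Gammahat^{-1/2}\Sigmabf\Gammahat^{-1/2}$ has nonzero eigenpairs exactly $(\that_i, \what_i)_{i\in H}$. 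Writing $\tilde{\betabf} - \tilde{\betabf}^\star = (\mathbf{P} - \Ibf)\tilde{\betabf}^\star + (\tilde{\Xbf}^\top\tilde{\Xbf})^+\tilde{\Xbf}^\top\epbf$ for the row-space projector $\mathbf{P} := (\tilde{\Xbf}^\top\tilde{\Xbf})^+\tilde{\Xbf}^\top\tilde{\Xbf}$ and taking the appropriate conditional expectations expresses $B$, $V_\Xbf$, and $V_{\Xbf,\epbf}$ as resolvent-type functionals of $\tilde{\Xbf}^\top\tilde{\Xbf}$ at the origin, while $V_\epbf$ vanishes by a sign-flip symmetry of the Gaussian rows of $\tilde{\Xbf}$.

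For the sample-deficient regime I would introduce the resolvent $\Khat(z) := (\tilde{\Xbf}^\top\tilde{\Xbf}/n - z\Ibf)^{-1}$ on the range of $\tilde{\Sigmabf}$ and establish an anisotropic local law of the form $\what_i^\top \Khat(z)\what_j \approx \delta_{ij}\, m_i(z)$, where $m_i(z)$ solves a self-consistent equation whose $z\downarrow 0$ limit is precisely \eqref{fp}, using the Schur-complement / fixed-point approach underlying \cite{cheng2022dimension}. The separation $h/n > 1 + M^{-1}$ ensures that the fixed point $b_0$ is bounded away from the edge of the spectrum, which pins down the eigenvalues of $\E_\Xbf \mathbf{P}$ at $\that_i b_0/(1+\that_i b_0)$, the trace identity $\E_\Xbf \mathrm{tr}(\tilde{\Sigmabf}(\tilde{\Xbf}^\top\tilde{\Xbf})^+) \approx \mathcal{V}$, and the variance identity $\mathrm{Var}_\Xbf \langle \what_i, \mathbf{P}\tilde{\betabf}^\star\rangle \approx \mathcal{V}\, \langle \what_i,\tilde{\betabf}^\star\rangle^2 / (1+\that_i b_0)^2$. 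Substituting into the formulas of the previous step recovers $\mathfrak{B}$, $\mathcal{V}\mathfrak{B}$, and $\sigma^2\mathcal{V}$.

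For the sample-rich regime, $n > h$ makes $\tilde{\Xbf}^\top\tilde{\Xbf}$ almost surely invertible on the $h$-dimensional range of $\tilde{\Sigmabf}$; since both $\tilde{\xbf}_i$ and $\tilde{\xbf}_{\mathrm{new}}$ live in that range a.s., the null-space component of $\tilde{\betabf}^\star$ is invisible to the risk, and one has $\mathbf{P}\tilde{\betabf}^\star = \tilde{\betabf}^\star$, which kills $B$, $V_\Xbf$, and $V_\epbf$ exactly. The remaining term $V_{\Xbf,\epbf} = \sigma^2\, \E_\Xbf \mathrm{tr}(\tilde{\Sigmabf}(\tilde{\Xbf}^\top\tilde{\Xbf})^+)$ then converges to $\mathcal{U} = \sigma^2 h/(n-h)$ by standard inverse-Wishart moment estimates on the $h$-dimensional range. \textbf{Main obstacle.} The hardest step is the quantitative anisotropic local law above: one must control bilinear forms $\what_i^\top \Khat(z) \what_j$ and $\what_i^\top \Khat(z) \tilde{\betabf}^\star$ uniformly as $z\downarrow 0$ despite the singularity of $\tilde{\Sigmabf}$. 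I would proceed by projecting onto the range of $\tilde{\Sigmabf}$ to reduce to an $h$-dimensional invertible Wishart problem (with spectral gap afforded by $h/n > 1 + M^{-1}$), invoke an existing quantitative local law with error $O(n^{-1/2+\delta})$, and propagate the bound via a grid truncation in $z$ and a union bound; tracking the $\norm{\st}_2^2$ prefactor carried by $\tilde{\betabf}^\star$ under $\normop{\Gammahat} < M$ is what yields the stated $n^{-1/7}\norm{\st}_2^2$ rate.
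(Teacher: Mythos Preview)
Your reduction to minimum-norm ridgeless regression on $\tilde{\Sigmabf}=\Gammahat^{-1/2}\Sigmabf\Gammahat^{-1/2}$, projection onto its $h$-dimensional range, and the anisotropic local law taken to $z\downarrow 0$ is exactly the paper's route (it invokes \cite{knowles2017anisotropic} for the local law and \cite{hastie2022surprises} for the risk and semi-classical pieces); your treatment of $V_\epbf=0$ by sign symmetry and of the sample-rich regime via inverse-Wishart moments matches as well.

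The gap is in $V_\Xbf$. The per-coordinate identity
\[
\mathrm{Var}_\Xbf\,\langle\what_i,\mathbf{P}\tilde{\betabf}^\star\rangle \;\approx\; \mathcal{V}\,\frac{\langle\what_i,\tilde{\betabf}^\star\rangle^2}{(1+\that_ib_0)^2}
\]
is false. In the isotropic case $\that_i\equiv t$ on $H$ with $\tilde{\betabf}^\star\parallel\what_1$, the projector $\mathbf{P}$ is Haar-random of rank $n$ on $\R^h$, and one computes $\mathrm{Var}\langle\what_i,\mathbf{P}\what_1\rangle\asymp n(h-n)/h^3$ for \emph{every} $i\in H$, whereas your formula predicts a $\Theta(1)$ value at $i=1$ and exactly zero for $i\neq 1$. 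Only the $\that_i$-weighted sum recovers $\mathcal{V}\mathfrak{B}$, and that sum is $\E_\Xbf B_{\mathsf{SC}}-B$, a second-order fluctuation quantity that the first-order local law you invoke does not deliver. The paper never computes $V_\Xbf$ directly. It first reduces $B_{\mathsf{SC}}\approx(1+\mathcal{V})\mathfrak{B}$ and $V_{\mathsf{SC}}\approx\sigma^2\mathcal{V}$ to \cite{hastie2022surprises}, Theorem~2, and converts these high-probability statements to expectations by truncation on the good event. For $B$ it uses a two-pass bootstrap: write $B=\E_\Xbf\bigl[\tilde{\upbf}^\top(\Ibf-\mathbf{P})\upbf\bigr]$ with $\tilde{\upbf}=\tilde{\betabf}^\star$ and $\upbf=\tilde{\Sigmabf}\,\E_\Xbf[\Ibf-\mathbf{P}]\,\tilde{\betabf}^\star$, apply the local law once to replace $\Ibf-\mathbf{P}$ by $(\Ibf+b_0\tilde{\Sigmabf})^{-1}$, rewrite the result as $\E_\Xbf\bigl[\breve{\upbf}^\top(\Ibf-\mathbf{P})\tilde{\upbf}\bigr]$ for a new deterministic $\breve{\upbf}=\tilde{\Sigmabf}(\Ibf+b_0\tilde{\Sigmabf})^{-1}\tilde{\betabf}^\star$, and apply the local law a second time to arrive at $\mathfrak{B}$. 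Then $V_\Xbf=\E_\Xbf B_{\mathsf{SC}}-B$ and $V_{\Xbf,\epbf}=\E_\Xbf V_{\mathsf{SC}}$ follow from \eqref{DCb}. This two-pass ``symmetrization'' is the paper's main technical step for the fine-grained terms and is what your proposal needs in place of the variance identity.
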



In \Cref{twoboundaries}, we plot sample and asymptotic risks and bias-variance decomposition from \Cref{thm} for a predictor with ground-truth featurization ($\Bhat= \Bstar$) and a standard ridgeless predictor ($\Bhat=\Ibf$). We observe good agreement between sample and asymptotic quantities for both predictors. 

Interestingly, we see that the variance of the predictor with ground-truth featurization diverges for a second time as $n$ approaches $q$, resulting in a ``double-divergence'' in the risk curve whereas the standard ridgeless predictor only diverges once at the interpolation boundary $n=p$. The first divergence at $n=p$ is well-known \cite{hastie2022surprises} and can be attributed to the fact that the model only has marginally enough parameters to interpolate all the data at the boundary. On the other hand, the second divergence of the risk of the featurized predictor can be attributed the fact that the featurization introduced an approximate intrinsic-dimension into the transformed data as the ground truth featurization $\Bstar$ is not full rank, i.e. $\rank(\Bstar)=q<p$. 

As a result, the featurized predictor is not consistently better than the ridgeless predictor, suggesting an opportunity to improve on both and potentially remove the divergences in variance. Notably, it can be seen from \Cref{thm} that the first divergence cannot be mitigated in our ridgeless setting (i.e. $\lambda_0\to 0$) by adjusting featurization $\Bhat$ and requires tuning $\lambda_0$ as suggested in \cite{nakkiran2020optimal,wu2020optimal}; on the other hand, as shown in \Cref{section4}, the second divergence can indeed be mitigated by choosing $\Bhat$ optimally.



\section{Downstream-Optimal Feature Transfer}\label{section4}

\begin{figure*}[t] 
  \centering

  \includegraphics[width=0.95\columnwidth]{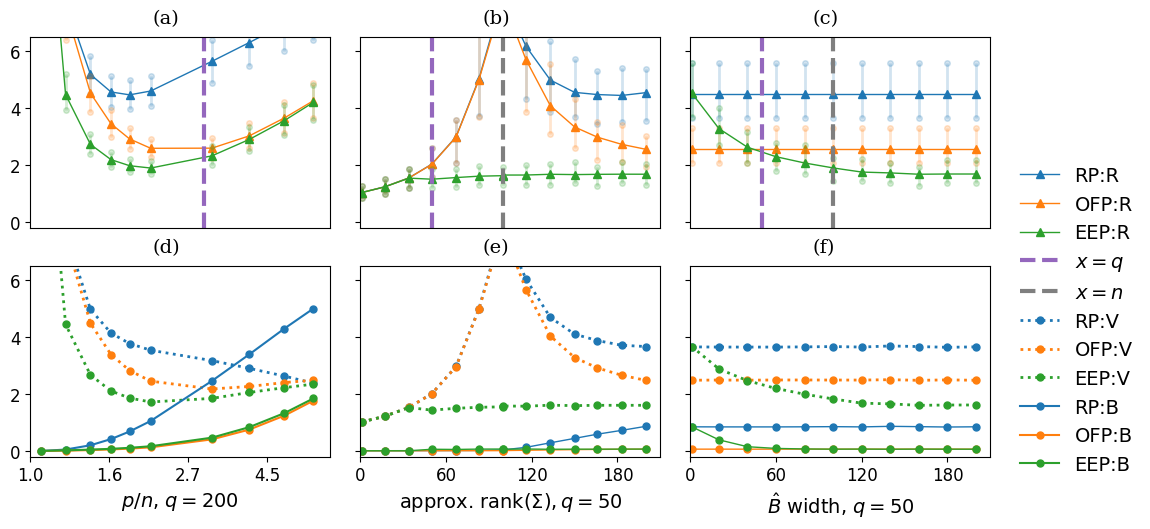} 
  \caption{\textbf{(a)-(c)}: Empirical mean of the asymptotic risk $\Rcal $ (denoted $R$) over $3000$ draws of $\alphastar \sim N(\bm{0},\mathfrak{c}\cdot \Ibf)$, for $\Bhat \in \R^{p\times k}$ as RP, OFP, and EEP. Error bars depict empirical mean and standard deviation of the actual risk $R $, evaluated from simulated data $(\ybf , \Xbf )$ across different $\alphastar$ draws. \textbf{(d)-(f)}: Empirical mean of bias $B $ (denoted $B$) and variance $V $ (denoted $V$) over $\alphastar$ draws. All plots are generated with columns of $\Bstar \in \R^{p \times q}$ drawn independently from $N(\bm{0}, \Sigmabf^{\Bstar}), \Sigmabf^{\Bstar}_{ij}=0.5^{|i-j|}$ and $\Sigmabf$ from $\frac{1}{m}\mathbf{W} \mathbf{W}^\top+0.005\cdot \Ibf_p, \mathbf{W}\sim N(\bm{0}, \Ibf_p \otimes \Ibf_m)$, with $m$ being the approximate rank of $\Sigmabf$. \textbf{(a)} and \textbf{(d)} fix $p=600, q=300$ and vary $n$ from $560$ to $100$. \textbf{(b)} and \textbf{(e)} vary $m$ for $q=50$. \textbf{(c)} and \textbf{(f)} varies $k$, the width of $\Bhat$. \textbf{(d)} and \textbf{(h)} varies $q$.  We maintain $\sigma^2=1$ and set $\mathfrak{c}$ such that $\mathsf{SNR}:=\norm{\st}_2/\sigma=10$. We set $n=100, p=m=k=200$ unless specified otherwise. }
  \label{ablation}
\end{figure*}

In this section, we optimize the average risk across an ensemble of downstream tasks with respect to the pretrained feature matrix \(\Bhat\). In the sample-rich regime, the risk is determined by the limiting expression \(\sigma^2 \frac{h}{n-h}\), where \(h = \operatorname{rank}(\Sigmabf)\), independent of \(\Bhat\). Therefore, our focus here is on the sample-deficient regime where \(n < h\).

Setting the feature matrix to the ground truth, i.e., \(\Bhat = \Bstar\), is not necessarily optimal. Intuitively, the optimal \(\Bhat\) should account for both the distribution of covariates and the characteristics of the downstream task. A principled approach to selecting \(\Bhat\) would be to directly minimize the downstream risk. However, this is often impractical, either because downstream tasks are unknown at the time of pretraining or because the upstream-downstream process is not end-to-end differentiable. However, if we further assume that the task-specific parameter $\alphastar$ are distributed i.i.d. from a prior $P_{\alphastar}$ with zero mean and covariance $q^{-1}\Sigmabf_{\alphastar} \in \R^{q\times q}$, our theoretical results from \Cref{sec_risk_results} give us an analytic expression for the asymptotic behavior of downstream risk averaged across potential downstream tasks $R^{\mathsf{avg}} := \E_{\alphastar} R$:
\begin{equation}\label{avgg}
\begin{aligned}
    &\Rcal^{\mathsf{avg}}:=\E_{\alphastar} \Rcal = \mathfrak{B}^{\mathsf{avg}} +\mathfrak{B}^{\mathsf{avg}}\mathcal{V} + \sigma^2\mathcal{V} ,\\
\end{aligned}
\end{equation}
where
\begin{equation*}
    \begin{aligned}
        \mathfrak{B}^{\mathsf{avg}}:= \E_{\alphastar} \mathfrak{B} \qty(\Bhat, {\alphastar},\Bstar)=\frac{1}{\qtrue} \sum_{i \in H} \frac{\that_i \cdot\what_i^\top \Gammahat^{\frac{1}{2}} \Bstar \Sigmabf_{\alphastar} {\Bstar}^\top \Gammahat^{\frac{1}{2}} \what_i }{\left(1+\that_i b_0\right)^2}.
    \end{aligned}
\end{equation*}
Note that \eqref{avgg} captures typical downstream risk averaged across the tasks while requiring no downstream datasets. We also observe the expectation is free of unknown quantities at the pretraining stage (if we assume $\Sigmabf_{\alphastar}$ is also known\footnote{If not, a practical approach is to use a non-informative prior, $\Sigmabf_{\alphastar}=\mathfrak{c}\cdot\Ibf_q$ for some $\mathfrak{c}>0$.}) and is end-to-end differentiable. Motivated by these, we define the following predictor. 

\begin{definition}[End-to-end predictor (EEP)]\label{EEPdef}
   We find $\Bhat$ and regularization parameters $\bm{\lambda} \in \R_+^3$  by minimizing $\Rcal^{\mathsf{avg}}(\Bhat, \bm{\lambda}, \Sigmabf, \Bstar)$\footnote{$\bm{\lambda}$ are optimized to minimize $\Rcal^{\mathsf{avg}}$ in the OFP also.}
   \begin{equation*}
       \Bhat^{\mathrm{opt}}, \bm{\lambda}^{\mathrm{opt}} = \argmin_{\Bhat, \bm{\lambda}} \Rcal^{\mathsf{avg}}(\Bhat, \bm{\lambda}, \Sigmabf, \Bstar)
   \end{equation*}
   Then the end-to-end predictor is the predictor $\hat{y}$ defined in \Cref{Downstream} with $\Bhat=\Bhat^{\mathrm{opt}}, \bm{\lambda}=\bm{\lambda}^{\mathrm{opt}}$. 
\end{definition}


\begin{remark}[Minimax Optimality]\label{minimaxx}
    A minimax procedure may be developed such that one seeks to  control the risk for the downstream risk for the worst task $R^\mathrm{worst}:=\max_{\norm{\alphastar}_2^2\le \cfrak}R ,\cfrak>0$. See \Cref{appminimax}.
    
\end{remark}


  
\Cref{concenprop} below supports the use of the objective $\Rcal^{\mathsf{avg}}$, as it implies that with a slightly stronger assumption on the prior distribution of $\alphastar$, the risk for specific downstream tasks concentrates to $\Rcal^{\mathsf{avg}}$. See \Cref{app:downstreamrisk} for a proof. 
\begin{proposition}\label{concenprop}
    Suppose that \Cref{Assum} holds and that 
    \begin{equation}\label{concen}
        {\alphastar}=\qtrue^{-1/2}{\Sigmabf^{1/2}_{\alphastar}} \xibf, \qquad \norm{\Sigmabf_{\alphastar}}_{\mathrm{op}}\le M
    \end{equation}
    for the constant $M$ in \Cref{Assum}, where $\xi$ is a random vector with independent, sub-Gaussian, zero-mean, unit-variance entries and the sub-Gaussian norm bounded by $M$. Then, for any $D>0$, there exists $C=C(M,D)$ such that with probability at least $1-C(n^{-D}+\qtrue^{-D})$, 
    \begin{equation*}
        \abs{R -\Rcal^{\mathsf{avg}}}\le C\qty(n^{-1/7}+\sqrt{\frac{\log \qtrue}{\qtrue}} \cdot \norm{\Bstar {\Bstar}^\top }_{\mathrm{op}}).\footnote{Typically, we may take $\|\Bstar {\Bstar}^\top \|_{\mathrm{op}} \sim O(1)$ because then, by Eq.~\eqref{concen}, the signal-to-noise ratio $\|{\st}\|_2^2/\sigma^2$ would also be of constant order.}
    \end{equation*}
\end{proposition}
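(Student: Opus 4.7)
My approach is a triangle-inequality decomposition
\begin{equation*}
|R - \Rcal^{\mathsf{avg}}| \;\le\; |R - \Rcal| \;+\; |\Rcal - \Rcal^{\mathsf{avg}}|,
\end{equation*}
in which the first term captures the sample-fluctuation of the downstream risk (controlled by Theorem~\ref{thm}), and the second captures the prior-averaging fluctuation over $\alphastar$ (controlled by a Hanson--Wright inequality applied to the sub-Gaussian vector $\xibf$). Throughout, the self-consistent quantities $\that_i,\what_i,b_0,\mathcal{V}$ in \Cref{def_sce} are determined by $(\Bhat,\Sigmabf)$ and are therefore \emph{independent} of $\alphastar$; only $\mathfrak{B}$ depends on $\alphastar$ through $\st=\Bstar\alphastar$. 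This clean separation is what makes the plan work.

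\textbf{Step 1 (the term $|R-\Rcal|$).} Theorem~\ref{thm}(i) gives $|R-\Rcal|\le Cn^{-1/7}\|\st\|_2^2$ on an event of probability at least $1-Cn^{-D}$. Writing $\|\st\|_2^2 = \qtrue^{-1}\xibf^\top \Sigmabf_{\alphastar}^{1/2}{\Bstar}^\top\Bstar\Sigmabf_{\alphastar}^{1/2}\xibf$ and applying Hanson--Wright (available because the entries of $\xibf$ are independent sub-Gaussian with bounded $\psi_2$-norm), I obtain
\begin{equation*}
\|\st\|_2^2 \;\le\; \qtrue^{-1}\mathrm{tr}(\Sigmabf_{\alphastar}{\Bstar}^\top\Bstar) + C\sqrt{\tfrac{\log \qtrue}{\qtrue}}\,\|\Sigmabf_{\alphastar}\|_{\mathrm{op}}\|\Bstar{\Bstar}^\top\|_{\mathrm{op}}
\end{equation*}
with probability at least $1-C\qtrue^{-D}$. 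Since $\qtrue^{-1}\mathrm{tr}(\Sigmabf_{\alphastar}{\Bstar}^\top\Bstar)\le \|\Sigmabf_{\alphastar}\|_{\mathrm{op}}\|\Bstar{\Bstar}^\top\|_{\mathrm{op}}$ and $\|\Sigmabf_{\alphastar}\|_{\mathrm{op}}\le M$, this yields $\|\st\|_2^2\le C\|\Bstar{\Bstar}^\top\|_{\mathrm{op}}$, which under the typical scaling $\|\Bstar{\Bstar}^\top\|_{\mathrm{op}}=O(1)$ alluded to in the footnote is $O(1)$.

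\textbf{Step 2 (the term $|\Rcal-\Rcal^{\mathsf{avg}}|$).} Since only $\mathfrak{B}$ carries $\alphastar$-dependence, $\Rcal-\Rcal^{\mathsf{avg}}=(1+\mathcal{V})(\mathfrak{B}-\mathfrak{B}^{\mathsf{avg}})$, and Assumption~\ref{Assum} combined with \Cref{def_sce} forces $\mathcal{V}\le C$ (the eigenvalues $\that_i$ and the fixed point $b_0$ are bounded from the assumptions $\normop{\Gammahat},\normop{\Gammahat^{-1}},\normop{\Sigmabf}\le M$). Write $\mathfrak{B}=\st^\top A\st$ with $A:=\sum_{i\in H}\frac{\that_i\,\Gammahat^{1/2}\what_i\what_i^\top\Gammahat^{1/2}}{(1+\that_i b_0)^2}$, substitute $\st=\qtrue^{-1/2}\Bstar\Sigmabf_{\alphastar}^{1/2}\xibf$, and set $M:=\Sigmabf_{\alphastar}^{1/2}{\Bstar}^\top A\Bstar\Sigmabf_{\alphastar}^{1/2}$, so that $\mathfrak{B}=\qtrue^{-1}\xibf^\top M\xibf$ and $\mathfrak{B}^{\mathsf{avg}}=\qtrue^{-1}\mathrm{tr}(M)$. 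Bounding $\|A\|_{\mathrm{op}}\le C$ from the assumptions above, I get $\|M\|_{\mathrm{op}}\le C\|\Bstar{\Bstar}^\top\|_{\mathrm{op}}$ and, using $\mathrm{rank}(M)\le \qtrue$, also $\|M\|_F\le\sqrt{\qtrue}\|M\|_{\mathrm{op}}$. Hanson--Wright with deviation $t=\qtrue\cdot C'\sqrt{\log\qtrue/\qtrue}\,\|\Bstar{\Bstar}^\top\|_{\mathrm{op}}$ gives tail
\begin{equation*}
\exp\!\bigl(-c\min\{t^2/\|M\|_F^2,\;t/\|M\|_{\mathrm{op}}\}\bigr)\;\le\;\exp(-c'\log \qtrue)\;\le\;\qtrue^{-D}
\end{equation*}
for $C'$ large enough, so $|\mathfrak{B}-\mathfrak{B}^{\mathsf{avg}}|\le C\sqrt{\log\qtrue/\qtrue}\,\|\Bstar{\Bstar}^\top\|_{\mathrm{op}}$ with probability $1-C\qtrue^{-D}$.

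\textbf{Conclusion and main obstacle.} Intersecting the two events of probability $1-C(n^{-D}+\qtrue^{-D})$ and combining Steps 1--2 yields the claim. The principal technical hurdle is the uniform boundedness $\|A\|_{\mathrm{op}}\le C$: it requires carefully propagating Assumption~\ref{Assum}'s bounds on $\Sigmabf$ and $\Gammahat$ through the eigenvalues $\that_i$ of $\Gammahat^{-1/2}\Sigmabf\Gammahat^{-1/2}$ and through the implicit fixed point $b_0$ defined by \eqref{fp}. The secondary subtlety is exploiting $\mathrm{rank}(M)\le\qtrue$ to convert the operator-norm bound into a Frobenius-norm bound without incurring a spurious $\|\Bstar{\Bstar}^\top\|_{\mathrm{op}}^2$ factor; the factorization $M=\Sigmabf_{\alphastar}^{1/2}{\Bstar}^\top A\Bstar\Sigmabf_{\alphastar}^{1/2}$ gives the rank bound for free, and the $\sqrt{\log\qtrue}$ factor in the final rate arises from equating the two alternatives in the Hanson--Wright exponent.
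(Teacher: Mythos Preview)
Your proposal is correct and follows essentially the same route as the paper: triangle-inequality split $|R-\Rcal|+|\Rcal-\Rcal^{\mathsf{avg}}|$, Theorem~\ref{thm} for the first term, the identity $\Rcal-\Rcal^{\mathsf{avg}}=(1+\mathcal{V})(\mathfrak{B}-\mathfrak{B}^{\mathsf{avg}})$ with $\mathcal{V}\le C(M)$ for the second, and Hanson--Wright applied to the quadratic form $\xibf^\top(\Sigmabf_{\alphastar}^{1/2}{\Bstar}^\top A\Bstar\Sigmabf_{\alphastar}^{1/2})\xibf$ with the rank-$\qtrue$ trick to control the Frobenius norm. The paper makes the bound $\|A\|_{\mathrm{op}}\le C(M)$ explicit via the elementary inequality $x/(1+xb_0)^2\le 1/(4b_0)$ together with the lower bound $b_0\ge (h/n-1)^{-1}\that_{\max}^{-1}$ from \Cref{bA1A2size} and the eigenvalue bounds in \Cref{bdthat}; you should cite these rather than leave the bound as a hurdle. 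One point where you are actually more careful than the paper: you explicitly control $\|\st\|_2^2$ via a second Hanson--Wright application, whereas the paper's proof invokes Theorem~\ref{thm} (which carries a factor $\|\st\|_2^2$) and leaves the absorption of that factor implicit in the footnoted scaling $\|\Bstar{\Bstar}^\top\|_{\mathrm{op}}=O(1)$. Minor cosmetic issue: you reuse the symbol $M$ for both the Assumption~\ref{Assum} constant and your quadratic-form matrix; rename the latter.
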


Below and in \Cref{ablation}, we compare the end-to-end predictor (EEP) above with two other approaches with a series of ablation studies: (i) Ridgeless Predictor (RP): ignoring the pretrained features altogether and simply using the standard predictor, $\hat{y} (\xnewbf) = {\Xbf }^{\top}({\Xbf } {\Xbf }^\top)^{+}\ybf$; (ii) Oracle-Featurization Predictor (OFP): setting $\Bhat=\Bstar$ and optimizing $(\lambda, \lambdaa, \lambdab)$.



We find that the EEP easily finds $\Bhat$ with lower risk than the RP and OFP. From \Cref{ablation}, we see that the EEP does this by making a better tradeoff of bias and variance. This is despite either the bias or variance of the other predictors being lower in some settings. In (d), the EEP is unbiased when $q<n$ by exploiting $\Bstar$'s low-rank structure, just like the OFP. Adjusting $\bm{\lambda}$ in both the OFP and the EEP removes the divergence in variance (compare (a) and (d) to \Cref{twoboundaries}). However, \Cref{thm} predicts that variance diverges when $h=n$ regardless of $\Bhat$. Even when $\Sigmabf$ includes a jitter term so that it is actually full-rank and only approximately low-rank (see (b) and (e)), the OFP still sees an explosion in variance as $n$ approaches the approximate rank. In contrast, the EEP is free to align its eigenvalues and eigenvectors to those of $\Sigmabf$ and effectively modulate the variance at the expense of a slightly larger bias. 

Finally, we see in (c) and (f), how as the capacity of $\Bhat$ increases, its performance starts by matching that of the RP and eventually matches then exceeds that of the OFP. Notably, when $n>q$ (see (e) and (f)), OFP and EEP can completely remove the bias, which is not true when $n<q$ (see (d)); this corresponds to a phase transition between the hard- and soft-selection regime discussed in the next section. \Cref{SIablation} further demonstrates this by reproducing \Cref{ablation} with different choices of $q$. See \Cref{SIablationA} for detailed comparison of fine-grained risk components (i.e. $B , V _{\Xbf}, V _{\Xbf,\epbf}, V_{\epbf}$) of these predictors, and \Cref{SIablationC} for a series of ablation studies on other model parameters ($q$, SNR, and common structure in $\Bstar$).

\section{Structures of the Optimal Representation}
\label{sec:structure}
In the previous section, we introduced the end-to-end predictor (EEP). This predictor uses a feature matrix \(\Bhat = \Bhat^{\mathrm{opt}}\) and regularization parameters \(\bm{\lambda} = \bm{\lambda}^{\mathrm{opt}}\) chosen to minimize the asymptotic downstream risk averaged over the downstream ensemble \(\Rcal^{\mathsf{avg}}(\Bhat, \bm{\lambda}, \Sigmabf, \Bstar)\). In this section, we explore the structure of the optimal feature matrix and its connection to the bias-variance decomposition..

\subsection{Aligned, Spectrum-Only Case and Connection to PCR}
\label{sectionSO}

Understanding the solution to Eq.~\eqref{avgg}  is challenging in general. To build intuition, we first restrict to optimizing only the eigenvalues of $\Bhat$ and fixing its eigenvectors to align with those of $\Sigmabf$. Recall Eq.~\eqref{BhatSVD} and denote the eigen-decompositions by
\begin{equation*}
    \Sigmabf=\sum_{i=1}^p \eta_i \cdot \ubf_i \ubf_i^\top\qquad\text{and}\qquad \Bhat {\Bhat}^\top = \sum_{i=1}^p \dhat_i^2 \cdot \qbf_i \qbf_i^\top
\end{equation*}
for orthonormal eigenbases $\{\ubf_i\}_{i=1}^p$ and $\{\qbf_i\}_{i=1}^p$ (rows of $\Qhat$). For the discussion below, we restrict to the \textit{spectrum-only} case where we minimize $\Rcal^{\mathsf{avg}}$ over the eigenvalues $\{\dhat_i^2\}_{i=1}^p$ and regularization parameters $\bm{\lambda}$ while fixing the eigenvectors
\begin{equation}\label{matchbase}
    \qbf_i=\ubf_i,\qquad \forall i\in \{1,\ldots,p\}.
\end{equation}
The vectors $\ubf_i$ are then used to specify $\Bhat$ in this restricted setting, where $\Bhat$ is only able to reweight these fixed directions. 

Note that the predictor $\hat{y}$ in \Cref{Downstream} may be written as
$
    \yhat  (\xnewbf) = \tilde{\xbf}_{\mathrm{new}}^{\top} \qty({{\tilde{\Xbf}} }  {{\tilde{\Xbf}} })^{+}  {{\tilde{\Xbf}}^{\top}} \ybf , 
$
where $\tilde{\xbf}_{\mathrm{new}}:={\xnewbf^{\top}}\Gammahat^{-1/2}$ and $\tilde{\Xbf} :=\Xbf \Gammahat^{-1/2}$. The matrix quantity $\Gammahat^{-1/2}=\sum_{i=1}^{p} \rhat^{-1/2}_i \ubf_i \ubf_i^\top$ can then be interpreted as a linear featurization on the data matrix $\Xbf $ before regressing on $\ybf $. This has a natural connection to classical principal component regression (PCR), which projects to the top-$k$ eigenvectors of $\Sigmabf$ ($k<n$), commonly referred to as principal components (PCs), via the transform  $\Gammahat^{-1/2}=\sum_{i=1}^{k} \ubf_i \ubf_i^\top$. Therefore, the optimization of $\rhat_i$ may be understood as a generalization of PCR with soft feature selection: larger values of $\hat{r}_i$ deemphasize the corresponding PC $\mathbf{u}_i$, with $\hat{r}_i = \infty$ signifying that $\mathbf{u}_i$ is not selected. Indeed, choosing $\rhat_i=1,\forall i\le k$ and $\rhat_i=\infty,\forall i> k$ recovers classical PCR.

Observe from Eq.~\eqref{avgg} that $\Rcal^{\mathsf{avg}}$ is a simple function of the variance-component $\mathcal{V}$ and the bias-component $\mathfrak{B}^{\mathsf{avg}}$. We thus first study the properties of $\mathcal{V}$ and $\mathfrak{B}^{\mathsf{avg}}$ with respect to $\{\dhat_i^2\}_{i=1}^p$ and $\bm{\lambda}$. The solutions of the separated optimization problems
    $\min_{\{\rhat_i\}_{i=1}^p} \mathcal{V} \quad\text{and}\quad \min_{\{\rhat_i\}_{i=1}^p} \mathfrak{B}^{\mathsf{avg}}$
can be characterized explicitly, using the results Propositions \ref{Equivalence}-\ref{Convexity} derived in the appendix, and a careful analysis of the Karush–Kuhn–Tucker (KKT) conditions.   

In the aligned spectrum case, we can write the numerator of $\mathfrak{B}^{\mathsf{avg}}$ as $\eta_i \theta_i$, where
\begin{equation}\label{vsdef}
    \theta_i:=\ubf^\top_i \Bstar \Sigmabf_{\alphastar} {\Bstar}^\top \ubf_i, \quad i=1,\ldots,p.
\end{equation}
Roughly, the coefficients $\theta_i$ track alignment of $\Bstar$ to $\ubf_i$ when $\Sigmabf_{\alphastar}$ is non-singular, whereas $\eta_i$ are the eigenvalues of $\Sigmabf$ and track alignment of $\Sigmabf$ to $\ubf_i$. Without loss of generality, we assume that $\eta_i\theta_i$ is nonincreasing. We denote
\begin{equation}\label{pos}
    h_1:=\abs{\{i: \eta_i\theta_i \neq 0\}},
\end{equation}
so that $\eta_i\theta_i=0$ for all $i> h_1$. Note that since $\eta_i=0$ implies $\eta_i\theta_i=0$, we have $h_1\leq h=\rank(\Sigmabf)\leq p$. When $h_1 < n$, there is low-dimensional structure in the problem despite the ambient dimension $p$. The analysis below reveals an interesting \textit{phase transition} phenomenon that to the best of our knowledge has not been known in prior work: classical PCR is optimal when $h_1 < n$, whereas for $h_1 > n$, it is preferable to employ soft selection on the PCs. 

In order to characterize the optimum for $\mathfrak{B}^{\mathsf{avg}} $ in the case where $n<h_1$, we introduce $h_0\in\{n,\ldots,h_1\}$, which is the unique integer such that $\tilde{h} \leq h_0$ if and only if
\begin{equation}\label{ineqho}
    \frac{1}{\hind-n} \sum_{i=1}^{\hind} \frac{\eta_{\hind}\theta_{\hind}}{\eta_i\theta_i}\ge 1.
\end{equation}
We prove the existence and uniqueness of $h_0$ and the following theorem in \Cref{app:characterizesol}.

\begin{theorem}\label{thm2}
    Suppose $h>n$ and Eq.~\eqref{matchbase} holds. Then, $\mathcal{V}$ is minimized by $\rhat_i = c\eta_i$ for $i\leq h$ and any finite $c>0$, to the optimal value $(h/n-1)^{-1}$. Meanwhile, the minimum of $\mathfrak{B}^{\mathsf{avg}}$ undergoes a phase transition w.r.t. $h_1$: 
    \begin{itemize}
        \item [(i)] \textbf{Soft-selection regime.} When $h_1>n$, $\mathfrak{B}^{\mathsf{avg}}$ is minimized by
    \begin{equation}\label{rewrite}
        \rhat_i = \begin{cases}
             c \eta_i \qty(\frac{1}{\ho-n} \sum_{j=1}^{\ho} \frac{\eta_i\theta_i}{\eta_j\theta_j}-1)^{-1} & \text{for } i\leq h_0 \\
            \infty & \text{for } h_0<i\leq h
        \end{cases},
    \end{equation}
    for any finite $c>0$, to the optimal value 
    $q^{-1} \sum_{\ho<i\le h_1} \eta_i\theta_i +\qty(\frac{q}{(\ho-n)^2} \sum_{i\le \ho} \frac{1}{\eta_i\theta_i} )^{-1}.$ 
    \item [(ii)] \textbf{Hard-selection regime.} When $h_1\leq n$, $\mathfrak{B}^{\mathsf{avg}}$ is minimized by
    \begin{equation}\label{PCRest}
        \rhat_i = \begin{cases}
             c_i  & \text{for } i\leq h_1 \\
            \infty & \text{for } h_1<i\leq h
        \end{cases},
    \end{equation}
    for any finite $c_i\ge 0$, to the optimal value zero. 
    \vspace{-3mm}
    \end{itemize}
    The values of $\rhat_i$ for $i>h$ can be set arbitrarily when minimizing both $\mathcal{V}$ and $\mathfrak{B}^{\mathsf{avg}}$.
\end{theorem}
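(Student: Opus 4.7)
The plan is to reparameterize using $u_i := 1/(1+\that_i b_0)$ for $i\in H$, which linearizes the fixed-point equation and casts both objectives into tractable algebraic forms. Under the aligned eigenbasis \eqref{matchbase} we have $\what_i=\ubf_i$ and $\that_i=\eta_i/\rhat_i$, and since $\Gammahat^{1/2}\ubf_i=\sqrt{\rhat_i}\,\ubf_i$ the numerator $\that_i\cdot \what_i^\top \Gammahat^{1/2}\Bstar\Sigmabf_{\alphastar}{\Bstar}^\top \Gammahat^{1/2}\what_i$ equals $\eta_i\theta_i$ in the notation of \eqref{vsdef}, so the bias collapses to $\mathfrak{B}^{\mathsf{avg}} = q^{-1}\sum_{i\in H}\eta_i\theta_i\,u_i^2$. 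Using $\that_i b_0/(1+\that_i b_0)=1-u_i$, the variance simplifies to
\begin{equation*}
    \mathcal{V} \;=\; \frac{(2n-h) + \sum_{i\in H} u_i^2}{(h-n) - \sum_{i\in H} u_i^2},
\end{equation*}
while \eqref{fp} becomes the linear budget $\sum_{i\in H} u_i = h-n$. As $\rhat_i$ ranges over admissible values, $u_i$ sweeps $[0,1]$; moreover a uniform rescaling of $\{\rhat_i\}$ is absorbed by $b_0$ through \eqref{fp}, so the minimization over $(\{\dhat_i^2\},\bm\lambda)$ reduces to one over $\{u_i\}\in[0,1]^h$ at fixed budget. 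Making this reachability and scale-invariance rigorous is the content of Propositions~\ref{Equivalence}--\ref{Convexity} in the appendix.

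The $\mathcal{V}$ half is then immediate: the displayed ratio is strictly increasing in $\sum u_i^2$, so minimizing $\mathcal{V}$ is equivalent to minimizing $\sum u_i^2$ subject to $\sum u_i=h-n$. By Cauchy--Schwarz this minimum equals $(h-n)^2/h$, attained uniquely at the constant profile $u_i=(h-n)/h$; substituting gives $\mathcal{V}_{\min}=n/(h-n)$. Constant $u_i$ on $H$ is equivalent to constant $\that_i$, i.e.\ $\rhat_i\propto \eta_i$; the free scalar $c$ arises because scale-invariance of $b_0$ leaves $u_i$ unchanged. For $\mathfrak{B}^{\mathsf{avg}}$ the problem becomes $\min\sum_{i=1}^{h_1}\eta_i\theta_i u_i^2$ over $u\in[0,1]^h$ with $\sum_{i\in H} u_i=h-n$. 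The $h-h_1$ indices with $\eta_i\theta_i=0$ contribute nothing, so we set $u_i=1$ on them, consuming $h-h_1$ units of budget and leaving $\sum_{i\le h_1} u_i = h_1-n$. If $h_1\le n$ the residual budget is non-positive, so $u_i=0$ for $i\le h_1$ is feasible and achieves the global minimum $0$; via the scale freedom this corresponds to arbitrary finite $\rhat_i=c_i\ge 0$ as stated. If $h_1>n$ the residual problem is strictly convex, and KKT stationarity yields interior solutions $u_i=\lambda_\star/(2\eta_i\theta_i)$ together with upper-boundary solutions $u_i=1$ where the interior formula would exceed $1$; since $\eta_i\theta_i$ is nonincreasing, the interior indices form an initial block $\{1,\ldots,h_0\}$. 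The boundary between interior and capped indices is where the interior formula first exceeds $1$, which after substituting the budget constraint $\sum_{i\le h_0} u_i = h_0-n$ reduces to \eqref{ineqho} at $\hind=h_0$; solving for $\lambda_\star$ from the same budget recovers \eqref{rewrite} and the displayed optimum.

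The main obstacle I anticipate is the bookkeeping between $\{u_i\}$ and $(\{\dhat_i^2\},\bm\lambda)$: one must verify that every optimal $u$-profile, including the limits $u_i\in\{0,1\}$, is realized by some admissible configuration via the monotone map $r(\cdot,\bm\lambda)$ of \eqref{ri} (in particular by letting $\lambdab\to 0$ and $\lambdaB\to\infty$ to accommodate $\rhat_i\in\{0,\infty\}$), and that the free scalar $c$ (respectively $c_i$) really parameterizes a family of minimizers through the scale invariance of $b_0$. A secondary but necessary point is the existence and uniqueness of $h_0$ in \eqref{ineqho}: a short monotonicity argument, exploiting that on the KKT-feasible region the left-hand side of \eqref{ineqho} is nonincreasing in $\hind$, shows the feasible set is an initial interval whose largest element is $h_0$. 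Once these auxiliary facts are in place, the KKT analysis and algebraic manipulations above are routine.
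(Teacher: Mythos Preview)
Your approach is essentially identical to the paper's: both reparameterize via $x_i=1/(1+\that_i b_0)$, reduce to convex optimization on $[0,1]^h$ under the linear budget $\sum x_i = h-n$, and invoke KKT conditions together with the auxiliary propositions on equivalence, convexity, and existence of $h_0$. Your use of Cauchy--Schwarz for $\mathcal{V}$ in place of the paper's Lagrangian is an equivalent shortcut.

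There is, however, a slip in your hard-selection argument. Having set $u_i=1$ for all $h_1<i\le h$, the residual constraint becomes $\sum_{i\le h_1} u_i = h_1-n$, which is \emph{infeasible} (not merely ``feasible at $u_i=0$'') when $h_1<n$ strictly, since the left side is nonnegative. The fix is not to saturate all zero-weight indices at $1$: set $u_i=0$ for $i\le h_1$ and distribute the full budget $h-n$ among the $h-h_1\ge h-n$ zero-weight indices, which is feasible. More substantively, your claim that ``the free scalar $c$ (respectively $c_i$) parameterizes a family of minimizers through the scale invariance of $b_0$'' is incorrect for the per-coordinate $c_i$: scale invariance yields a single global scalar, not per-coordinate freedom. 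The reason arbitrary finite $c_i\ge 0$ work in \eqref{PCRest} is a \emph{limiting} argument, which the paper carries out directly: with $\rhat_i=c_i$ for $i\le h_1$ and $\rhat_i=\eta_i/r$ for $i>h_1$, sending $r\to 0$ forces $b_0\to\infty$, and then $u_i=c_i/(c_i+\eta_i b_0)\to 0$ for every $i\le h_1$ regardless of the fixed value of $c_i$. This is the mechanism you need in the bookkeeping step you flagged, and it is not captured by the Equivalence map $\rhat_i=c\eta_i x_i/(1-x_i)$ alone.
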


\begin{remark}\label{generalA1}
    Choosing $\qbf_i \gets \ubf_i,\forall i$ as in Eq.~\eqref{matchbase} and $\rhat_i \gets c \eta_i,\forall i\le h$ as in \Cref{thm2} is in fact optimal for minimizing $\mathcal{V}$, without a priori restrictions $\qbf_i \gets \ubf_i,\forall i$. See \Cref{propgenA} for details.
\end{remark}

\begin{remark}[Convexity and Optimization]
    We defer the algorithmic question of minimizing $\Rcal^{\mathsf{avg}}$ w.r.t. $\qty{\rhat_i}_{i=1}^p$ to \Cref{sectionOptConv}, where we make further observations on the convexity of $\mathcal{V}$ and $\mathfrak{B}^\mathsf{avg}$ and provide novel convex programs for optimizing relaxations of $\Rcal^\mathsf{avg}$ and $\Rcal^\mathsf{worst}$. 
\end{remark}


We first interpret the minimizer of $\mathfrak{B}^{\mathsf{avg}}$. Firstly, we note that the solution $\{\rhat_i\}$ is in fact sparse in both the soft-selection regime and strong-selection regime even in absence of sparsity-inducing priors or penalties. In the soft-selection regime, note that for $i\leq h_0$ we may write Eq.~\eqref{rewrite} as
\begin{equation}\label{rialternative}
        \rhat_i \gets c \qty(\frac{\theta_i}{\ho-n} \sum_{j\leq h_0,j\neq i} \frac{1}{\eta_j} \frac{1}{\theta_j}-\left( 1 - \frac{1}{h_0-n}\right)\frac{1}{\eta_i})^{-1}.
\end{equation}
Observe that $\rhat_i$ decreases as either $\eta_i$ or $\theta_i$ increase. This means that $\rhat_i$ will emphasize directions that align strongly with either the data $\Sigmabf$ or the ground-truth features $\Bstar$. Our result implies that bias is reduced the most by selecting PCs that align to both $\Sigmabf$ and $\Bstar$. This is in contrast to classical PCR, which simply selects the $k$-top PCs with equal weight. Interestingly, $\ubf_i$'s with comparatively small (but still possibly nonzero) $\eta_i\theta_i$ are completely discarded, i.e. $\hat{r}_i=\infty$ for $h_0<i\le h$, suggesting that the rank of the optimized featurization $\Bhat$ may be less than the intrinsic problem dimension $h_1$. 

In the hard-selection regime, our result suggests discarding ($\hat{r}_i=\infty$) all PCs where $\eta_i$ or $\theta_i$ is zero and selecting the rest with arbitrary (non-zero) weights. This includes the classical PCR, with $h_1$ being the optimal number of PCs to retain. Unlike in the soft-selection regime, the exact values of $\eta_i\theta_i$ and selection weights are inconsequential as long as they are non-zero, hence ``hard selection''. In contrast to the soft-selection regime, the rank of optimized featurization $\Bhat$ is precisely the intrinsic problem dimension $h_1$. Note also that the bias can be completely removed in this regime.

Recall that the total variance is proportional to $\mathcal{V}$, so \Cref{thm2} shows that to control the variance, we should weight the PCs proportional to $\eta_i$ regardless of $\theta_i$. Larger $\eta_i$ therefore have large $\hat{r}_i$ and are deemphasized, which is in contrast to classical PCR.

Optimizing $\Bhat$ against $\Rcal^{\mathsf{avg}}$ is then a trade-off of between optimizing $\mathcal{V}$ and $\mathfrak{B}^{\mathsf{avg}}$, or equivalently finding a way to balance bias and variance. Thus, minimizers of  $\Rcal^{\mathsf{avg}}$ typically combine the structures discussed above. 

\subsection{Fully-Optimized Representation}

\begin{figure*}[t]
  \centering
  \includegraphics[width=0.47\columnwidth] {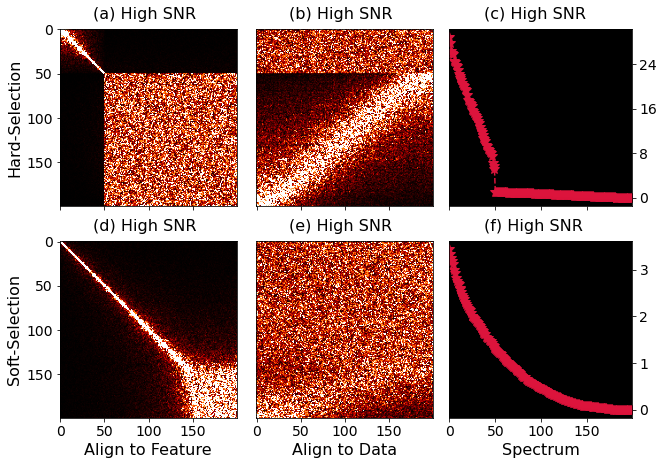} 
  \includegraphics[width=0.46\columnwidth] {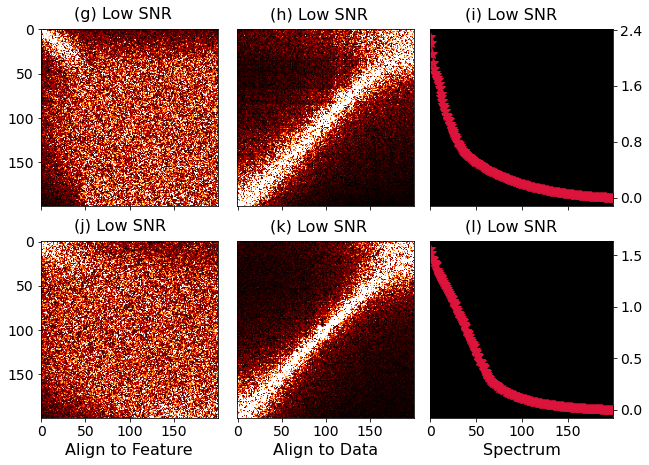} 
  \hspace{0mm}

  \caption{\textbf{(a), (d), (g), (j)}: Heat map of the matrix $\mathbf{M}\in \R^{p\times p}, \mathbf{M}_{ij}=\qbf_i^\top \mathbf{q} ^\star_j
 $ depicting the alignment between eigenvectors $\qty{\qbf_i}_{i=1}^p$ of $\Bhat {\Bhat}^\top$ and eigenvectors $\qty{\mathbf{q}_i ^\star}_{i=1}^p$ of ground-truth feature $\Bstar {\Bstar}^\top$. \textbf{(b), (e), (h), (k)}: Heat map of the matrix $\mathbf{N}\in \R^{p\times p}, \mathbf{N}_{ij}=\qbf_i^\top \ubf_j
 $ depicting the alignment between eigenvectors $\qty{\qbf_i}_{i=1}^p$ of $\Bhat {\Bhat}^\top$ and eigenvectors $\qty{\ubf_i}_{i=1}^p$ and eigenvectors of data covariance $\Sigmabf$. \textbf{(c), (f), (i), (l)}: eigenvalues of $\Bhat \Bhat^\top$. Top row depicts the regime $q=50<n=100$ and bottom row depicts the regime $q=150>n=100$.  Left panel is for $\mathsf{SNR}=\norm{\st}_2/\sigma=25$ and right panel is for $\mathsf{SNR}=0.5$. Throughout, we set $n=100, p=200, \sigma^2=1, \Sigmabf_{ij}=0.5^{|i-j|}$ and draw $\Bstar \sim N(\bm{0}, \Ibf_p \times \Ibf_q), \alphastar\sim N(\bm{0}, \mathfrak{c}\cdot\Ibf_q)$ where $\mathfrak{c}$ is chosen to adjust $\mathsf{SNR}$ to the specified levels.}
  \label{heat}
\end{figure*}

Now we would like to lift the assumption \eqref{matchbase} of aligned eigenvectors. Although intractable analytically in this case, $\Rcal^{\mathsf{avg}}$ can be optimized using backpropagation (see \Cref{app:obtainfully}). Many of the observations from \Cref{sectionSO} carry over to this case in \Cref{heat}, through a combination of eigenvector alignment and spectrum adjustment. By changing $q$, we can transition between the hard- and soft-selection regimes. Similarly, by adjusting the SNR, we can change the relative contributions of $\mathcal{V}$ and $\mathfrak{B}^\mathsf{avg}$ to the risk (see Eq.~\eqref{avgg}). When the SNR is high, $\mathfrak{B}^\mathsf{avg}$ dominates $\mathcal{V}$, and vice versa.

In the high-SNR regime, the bias component \(\mathfrak{B}^\mathsf{avg}\) dominates. Here, \(\Bhat\)'s top-\(q\) eigenvectors align with the leading eigenvectors of \(\BBstar\) to minimize bias, corresponding to the feature selection phenomenon predicted in \Cref{thm2}. At the same time, there is a pronounced alignment with the bottom eigenvectors of \(\Sigmabf\) beyond the \(q\)th eigenvector, reflecting the impact of the \(\mathcal{V}\) component. In the hard selection regime (with \(q=50\)), a clear spectral gap is present at \(q=50\), along with a sharp transition in the alignment pattern before and after the \(q\)th eigenvector. In contrast, the soft selection regime (with \(q=150\)) does not have a spectral gap and transition in eigenvector alignment pattern is less sharp. This behavior aligns with the hard and soft-selection phase transition predicted in \Cref{thm2}. In the low-SNR regime, the \(\mathcal{V}\) component becomes dominant. As a result, we observe a reverse alignment pattern with eigenvectors of \(\Sigmabf\), as expected from \Cref{thm2}.

The above results interpolate between optimizing $\mathcal{V}$ and $\mathfrak{B}^{\mathsf{avg}}$. We now investigate $\Bhat$'s behavior when optimizing each individually. We reproduce \Cref{heat} under exact settings except that we optimize for $\mathfrak{B}^{\mathsf{avg}}$  only. This gives us  \Cref{heatBiasVar}, (a)-(f). We do the same but optimize for $\mathcal{V}$ only. This gives us \Cref{heatBiasVar}, (g)-(l). The spectrum-gap at $q=50$ in \Cref{heat}, (c) may be attributed to ${\mathfrak{B}}^{\mathsf{avg}}$, which, when optimized alone, yields exactly $q=50$ non-zero eigenvalues (\Cref{heatBiasVar}, (c)), consistent with hard-selection behavior in the spectrum-only case. In contrast, minimizing $\mathfrak{B}^{\mathsf{avg}}$ in the soft-selection regime yields no spectrum-gap at $q$; instead, the number of non-zero eigenvalues ($\sim$120) is smaller than $q=150$ (\Cref{heatBiasVar}, (f)), consistent with theoretical prediction in the spectrum-only case (i.e. $\mathrm{rank}(\Bhat)\equiv \ho\le h_1$). Observe in (h), (k), the eigenvectors of $\Bhat \Bhat^\top$ align with eigenvectors of $\Sigmabf$ in exactly the reverse order. 
\begin{figure}
  \centering
  \includegraphics[width=0.501\columnwidth] {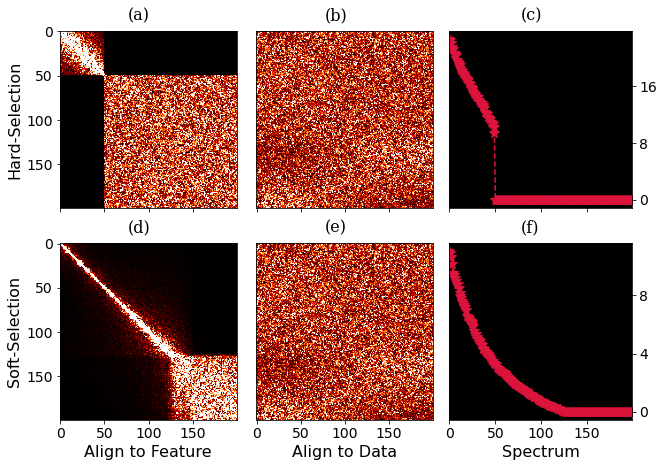} 
  \includegraphics[width=0.49\columnwidth] {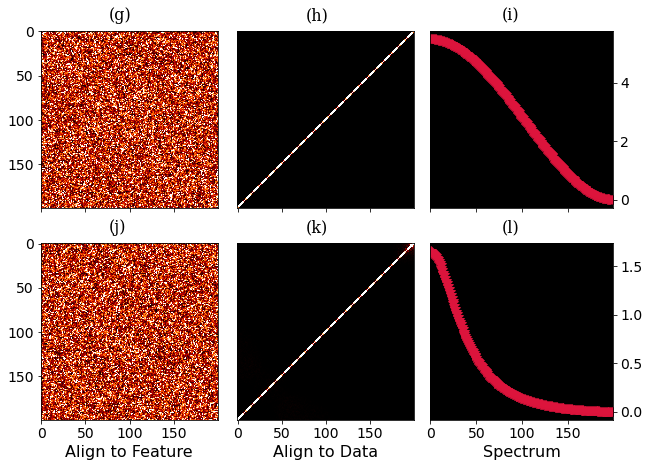} 
  \vspace{-5mm}
  \caption{\textbf{(a)-(f)}: Optimized for $\mathfrak{B}^\mathsf{avg}$ only. \textbf{(g)-(l)}: Optimized for $\mathcal{V}$ only. Otherwise, same settings as \Cref{heat}}
  \label{heatBiasVar}
\end{figure}

A final remark is on the role of $\Sigmabf$ when $\mathfrak{B}^\mathsf{avg}$ is minimized alone. \Cref{thm2} predicts that when minimizing $\mathfrak{B}^\mathsf{avg}$ in the soft-selection regime, $\Bhat$ should also have a tendency to align to top eigenvectors of $\Sigmabf$. This effect is overshadowed by the alignment to $\Bstar$ in \Cref{heatBiasVar}. However, once we set $\Bstar$ to be non-informative, i.e. $\Bstar {\Bstar}^\top =\Ibf_p$, we can indeed observe this effect emerging. 
\begin{figure}
  \centering
  \includegraphics[width=0.6\columnwidth] {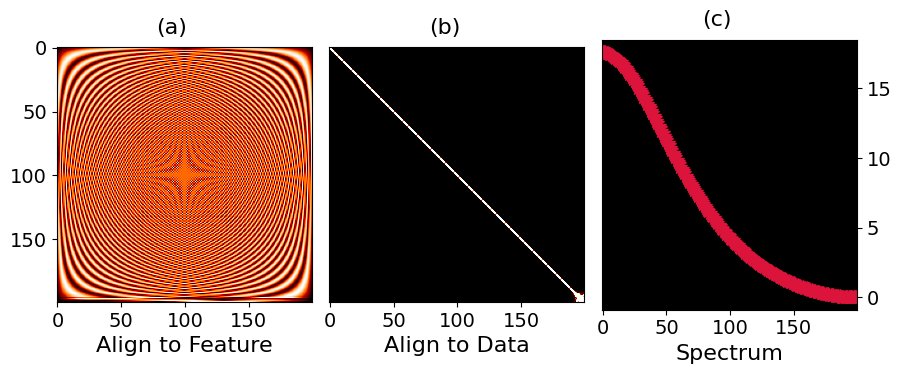} 
  \caption{Optimize $\mathfrak{B}^\mathsf{avg}$ only. Set $\Bstar {\Bstar}^\top =\Ibf_p$ and otherwise same as \Cref{heat}}
  \label{heatSigma}
\end{figure}
Observe that in \Cref{heatSigma} (b), the top eigenvectors of $\Bhat$ aligns precisely with top eigenvectors of $\Sigmabf$ as predicted by \eqref{rialternative} in the spectrum-only case.


\section{Proofs of Main Results} 
Before presenting the proofs, we briefly outline the key technical contributions of this paper. First, we use random matrix theory to derive asymptotic characterization of fine-grained bias and variance. We first note that that the fine-grained bias-variance components relate to the classical decomposition as follows:  
\begin{equation}
    \E_{\Xbf } \BSC = B + V_{\Xbf}, \qquad \E_{\Xbf } \VSC = V_{\Xbf,\epbf} + V_{\epbf}. \label{DCb}
\end{equation}
The asymptotic expressions for \( \E_{\Xbf } \BSC \) and \( \E_{\Xbf } \VSC \) follow from the analysis in \cite{hastie2022surprises}, and the term \( V_{\epbf} \) vanishes due to symmetry. Thus, it suffices to derive the limiting expression for fine-grained bias \( B \). We observed that \( B \) involves two independent copies of a random matrix quantity rather than a single instance and applied resolvent methods separately to each copy (see \eqref{twocopies}).  

The second major contribution lies in the optimality analysis of asymptotic risk in the aligned case. A key insight is that the limiting expressions for fine-grained bias and variance can be reformulated as a convex optimization problem over the spectrum of the feature matrix, where the fixed point equation \eqref{fp} emerging from RMT is cast as a linear constraint. This reduces the analysis to solving the KKT conditions of the convex problem. The primary challenge is in conjecturing a solution that satisfies these conditions—particularly when optimizing \( \Bhat \), where the solution exhibits sparsity and undergoes a phase transition. Our approach may be of independent interest for studying optimality question in other statistics or machine learning models where the risk asymptotics involve fixed point equations \cite{sur2019modern,bayati2011lasso,donoho2016high,xu2019number}. 


\subsection{Proof of \Cref{thm}}\label{app:riskchar}
We prove \Cref{thm} in this section. As by-products, we provide bounds on asymptotic quantities such as $b_0, \mathcal{V}, \mathfrak{B} $ that will also be useful in later sections; we also provide characterization of fine-grained bias-variance decomposition.

\subsubsection{Preliminary Bounds on Asymptotic Quantities}

Recall that we used $\eta_{\min}^+$ to denote the smallest non-zero eigenvalue of $\Sigmabf$. Below we also denote the largest eigenvalue of $\Sigmabf$ as $\eta_{\max}\equiv \normop{\Sigmabf}$. Similarly, we use notation $${\that}^+_{\min}:=\min\{\that_i: \that_i>0, i=1,\ldots,p\}, \qquad \that_{\max}:=\max \{\that_i: i=1,\ldots,p\}.$$

Before we prove \Cref{thm}, we first prove a useful lemma that bounds ${\that}^+_{\min}$ and $\that_{\max}$. 
\begin{lemma}\label{bdthat}
    When $\Gammahat$ is non-singular, we have that
    \begin{equation*}
        {\that}^+_{\min}\ge \eta^+_{\min} \cdot \normop{\Gammahat}^{-1}, \qquad \that_{\max}\le \eta_{\max} \cdot \normop{\Gammahat^{-1}}.
    \end{equation*}
\end{lemma}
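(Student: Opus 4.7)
Both inequalities are standard consequences of sandwiching $\Gammahat^{-1}$ between $\normop{\Gammahat}^{-1}\Ibf$ and $\normop{\Gammahat^{-1}}\Ibf$ in the positive semidefinite (PSD) order, combined with the fact that the nonzero eigenvalues of $\Gammahat^{-1/2}\Sigmabf\Gammahat^{-1/2}$ and $\Sigmabf^{1/2}\Gammahat^{-1}\Sigmabf^{1/2}$ coincide (which is just the $AB$/$BA$ trick applied to $A=\Sigmabf^{1/2}\Gammahat^{-1/2}$). I will first handle the upper bound, which is a direct sandwich, and then the lower bound, which additionally requires restricting to $\mathrm{range}(\Sigmabf)$.

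\textbf{Upper bound on $\that_{\max}$.} Since $\Sigmabf\preceq \eta_{\max}\Ibf$ and conjugation by $\Gammahat^{-1/2}$ preserves the PSD order,
\begin{equation*}
\Gammahat^{-1/2}\Sigmabf\Gammahat^{-1/2}\ \preceq\ \eta_{\max}\,\Gammahat^{-1}\ \preceq\ \eta_{\max}\,\normop{\Gammahat^{-1}}\,\Ibf,
\end{equation*}
so taking the top eigenvalue yields $\that_{\max}\le \eta_{\max}\normop{\Gammahat^{-1}}$.

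\textbf{Lower bound on ${\that}^+_{\min}$.} Since $\Gammahat$ is non-singular, $\Gammahat^{-1/2}\Sigmabf\Gammahat^{-1/2}$ has the same rank $h$ as $\Sigmabf$, so its positive eigenvalues are $\that_1\ge \cdots\ge \that_h>0$. Applying the $AB$/$BA$ identity with $A=\Sigmabf^{1/2}\Gammahat^{-1/2}$, the matrices $\Gammahat^{-1/2}\Sigmabf\Gammahat^{-1/2}$ and $\Sigmabf^{1/2}\Gammahat^{-1}\Sigmabf^{1/2}$ have identical nonzero spectra; in particular ${\that}^+_{\min}=\lambda_h\bigl(\Sigmabf^{1/2}\Gammahat^{-1}\Sigmabf^{1/2}\bigr)$. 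From $\Gammahat^{-1}\succeq \normop{\Gammahat}^{-1}\Ibf$, PSD monotonicity of conjugation gives
\begin{equation*}
\Sigmabf^{1/2}\Gammahat^{-1}\Sigmabf^{1/2}\ \succeq\ \normop{\Gammahat}^{-1}\,\Sigmabf,
\end{equation*}
and by Weyl's monotonicity theorem the $h$-th largest eigenvalues satisfy
\begin{equation*}
\lambda_h\bigl(\Sigmabf^{1/2}\Gammahat^{-1}\Sigmabf^{1/2}\bigr)\ \ge\ \normop{\Gammahat}^{-1}\lambda_h(\Sigmabf)\ =\ \normop{\Gammahat}^{-1}\eta^+_{\min},
\end{equation*}
which is the claimed bound.

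\textbf{Anticipated difficulty.} There is no real obstacle; the only mild subtlety is that one must not argue $\that^+_{\min}\ge \eta^+_{\min}/\normop{\Gammahat}$ via an unrestricted Rayleigh quotient on $\Gammahat^{-1/2}\Sigmabf\Gammahat^{-1/2}$, because the minimum over all $v$ is zero whenever $\Sigmabf$ is rank-deficient. Rewriting the problem on $\Sigmabf^{1/2}\Gammahat^{-1}\Sigmabf^{1/2}$ (equivalently, restricting to $\mathrm{range}(\Sigmabf)$) and invoking Weyl sidesteps this cleanly and avoids any case analysis based on whether $\Sigmabf$ is full-rank.
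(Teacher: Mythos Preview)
Your proof is correct. The upper bound is essentially the paper's argument in a slightly different phrasing (the paper uses sub-multiplicativity of the operator norm, you use PSD sandwiching; these amount to the same thing).

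For the lower bound, however, your route is genuinely different and noticeably cleaner than the paper's. The paper splits into two cases: when $\Sigmabf$ is non-singular it uses a direct Rayleigh-quotient argument, and when $\Sigmabf$ is singular it identifies the null-eigenvectors of $\Gammahat^{-1/2}\Sigmabf\Gammahat^{-1/2}$ as lying in the column space of $\Gammahat^{1/2}\mathbf{V}$ (where $\mathbf{V}$ spans $\ker\Sigmabf$), restricts the variational problem to the orthogonal complement, and then works through a chain of inequalities involving the eigendecomposition of $\mathbf{U}^\top\Gammahat^{-1}\mathbf{U}$. You bypass all of this by passing from $\Gammahat^{-1/2}\Sigmabf\Gammahat^{-1/2}$ to $\Sigmabf^{1/2}\Gammahat^{-1}\Sigmabf^{1/2}$ via the $AB$/$BA$ spectral identity; once on this side, the PSD sandwich $\Gammahat^{-1}\succeq\normop{\Gammahat}^{-1}\Ibf$ conjugated by $\Sigmabf^{1/2}$ and Weyl's monotonicity give the bound in one line, with no case split. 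The paper's approach is more hands-on but longer; yours is shorter, uniform over the rank-deficient case, and directly explains why the restriction to $\mathrm{range}(\Sigmabf)$ works.
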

\begin{proof}[Proof of \Cref{bdthat}]
    The second relation follows from sub-multiplicativity of operator norm
    \begin{equation*}
        \that_{\max}=\normop{\Gammahat^{-1/2} \Sigmabf \Gammahat^{-1/2}
        } \le \normop{\Gammahat^{-1/2} 
        }^2 \normop{ \Sigmabf 
        }=\eta_{\max} \cdot \normop{\Gammahat^{-1}}
    \end{equation*}
    where the last equality follows from
    \begin{equation*}
        \eta_{\max}=\normop{\Sigmabf},\qquad \normop{\Gammahat^{-1/2}}=\eigtop \qty (\Gammahat^{-1/2})=\eigtop \qty (\Gammahat^{-1})^{1/2}.
    \end{equation*}
We now show the first claim is true if $\Sigmabf$ is non-singular. Using variational representation of eigenvalues, we have
\begin{equation*}
\begin{gathered}
\that_{\min }^{+}=\min _{\mathbf{v} \in \mathbb{R}^p:\|\mathbf{v}\|_2^2 =1} \mathbf{v}^{\top} \Gammahat^{-\frac{1}{2}} \Sigmabf \Gammahat^{-\frac{1}{2}} \mathbf{v}=\left\|\mathbf{v}^{\top} \Gammahat^{-\frac{1}{2}}\right\|_2^2 \cdot \min _{\mathbf{w} \in \mathbb{R}^p:\|\mathbf{w}\|_2^2=1} \mathbf{w}^{\top} \Sigmabf \mathbf{w}=\mathbf{v}^{\top} \Gammahat^{-1} \mathbf{v} \cdot \eta_{\min }^{+} \\
\geq \eigbot \left(\Gammahat^{-1}\right) \cdot \eta_{\min }^{+}=\eta_{\min }^{+} \cdot\|\Gammahat\|_{\mathrm{op}}^{-1}
\end{gathered}
\end{equation*}
where in the first equality we used that $\hat{t}_i>0, \forall i \in\{1, \ldots, p\}$ when $\Sigmabf$ is non-singular and thus $\that_{\min }^{+}$is simply the smallest eigenvalue.

Now we assume that $\Sigmabf$ is singular with $\operatorname{rank}(\Sigmabf)=h<p$. Let columns of $\mathbf{V} \in \mathbb{R}^{p \times(p-h)}$ consist of eigenvectors of $\Sigmabf$ that have zero eigenvalues and columns of $\mathbf{U} \in \mathbb{R}^{p \times h}$ consists of the rest of the eigenvectors. The compact form of eigen-decomposition of $\Sigmabf$ is then
\begin{equation}\label{compasigmeg}
\Sigmabf=\mathbf{U} \bm{\Lambda}_{+} \mathbf{U}^{\top}
\end{equation}
where $\bm{\Lambda}_{+}$ is diagonal matrix consisting of non-zero eigenvalues of $\Sigmabf$. In particular, the smallest entry of $\bm{\Lambda}_{+}$ is $\eta^+_{\min}$. 

Now we claim that eigenvectors of $\Gammahat^{-\frac{1}{2}} \Sigmabf \Gammahat^{-\frac{1}{2}}$ corresponding to the zero eigenvalue contained in the column space of $\Gammahat^{\frac{1}{2}} \mathbf{V}$. This follows from the fact that $\Gammahat^{-\frac{1}{2}} \Sigmabf \Gammahat^{-\frac{1}{2}}\left(\Gammahat^{\frac{1}{2}} \mathbf{V}\right)=0$ and that the columns of $\Gammahat^{\frac{1}{2}} \mathbf{V}$ are linearly independent. Since eigenvectors are orthonormal, this claim implies that any eigenvector $\hat{\mathbf{w}}_i$ of $\Gammahat^{-\frac{1}{2}} \Sigmabf \Gammahat^{-\frac{1}{2}}$ corresponding to a non-zero eigenvalue $\hat{t}_i>0$ must satisfy
\begin{equation*}
\hat{\mathbf{w}}_i^{\top} \Gammahat^{\frac{1}{2}} \mathbf{V}=0 .
\end{equation*}

This implies that $\hat{\mathbf{w}}_i$ must be contained in column space of $\Gammahat^{-\frac{1}{2}} \mathbf{U}$ since any vector in the column space of $\Gammahat^{-\frac{1}{2}} \mathbf{U}$ must be orthogonal to any vector in the column space of $\Gammahat^{\frac{1}{2}} \mathbf{V}$. Using this fact and the variational representation of eigenvalues, we obtain that
\begin{equation*}
\hat{t}_{\min }^{+}=\min _{\mathbf{a} \in \mathbb{R}^h: \mathbf{a} \neq 0}\left(\frac{\Gammahat^{-\frac{1}{2}} \mathbf{U} \mathbf{a}}{\left\|\Gammahat^{-\frac{1}{2}} \mathbf{U} \mathbf{a}\right\|_2}\right)^{\top} \Gammahat^{-\frac{1}{2}} \Sigmabf \Gammahat^{-\frac{1}{2}}\left(\frac{\Gammahat^{-\frac{1}{2}} \mathbf{U} \mathbf{a}}{\left\|\Gammahat^{-\frac{1}{2}} \mathbf{U} \mathbf{a}\right\|_2}\right).
\end{equation*}
This and \eqref{compasigmeg} implies that
\begin{equation*}
\begin{aligned}
    t_{\min }^{+}&=\min _{\mathbf{a} \in \mathbb{R}^h: \mathbf{a}\neq 0}\left(\frac{\Gammahat^{-\frac{1}{2}} \mathbf{U} \mathbf{a}}{\left\|\Gammahat^{-\frac{1}{2}} \mathbf{U} \mathbf{a}\right\|_2}\right)^{\top} \Gammahat^{-\frac{1}{2}} \mathbf{U} \mathbf{\Lambda}_{+} \mathbf{U}^{\top} \Gammahat^{-\frac{1}{2}}\left(\frac{\Gammahat^{-\frac{1}{2}} \mathbf{U} \mathbf{a}}{\left\|\Gammahat^{-\frac{1}{2}} \mathbf{U} \mathbf{a}\right\|_2}\right) \\
&=\min _{\mathbf{a} \in \mathbb{R}^h: \mathbf{a}\neq 0}\left(\frac{\mathbf{U}^{\top} \Gammahat^{-1} \mathbf{U} \mathbf{a}}{\left\|\Gammahat^{-\frac{1}{2}} \mathbf{U} \mathbf{a}\right\|_2}\right)^{\top} \Lambda_{+}\left(\frac{\mathbf{U}^{\top} \Gammahat^{-1} \mathbf{U} \mathbf{a}}{\left\|\Gammahat^{-\frac{1}{2}} \mathbf{U} \mathbf{a}\right\|_2}\right) \geq \min _{\mathbf{a} \in \mathbb{R}^h: \mathbf{a}\neq 0} \frac{\eta_{\min }^{+}\left\|\mathbf{U}^{\top} \Gammahat^{-1} \mathbf{U} \mathbf{a}\right\|_2^2}{\left\|\Gammahat^{-\frac{1}{2}} \mathbf{U} \mathbf{a}\right\|_2^2} \\
&=\eta_{\min }^{+} \cdot \min _{\mathbf{a} \in \mathbb{R}^h: \mathbf{a}\neq 0} \frac{\mathbf{a}^{\top}\left(\mathbf{U}^{\top} \Gammahat^{-1} \mathbf{U}\right)^2 \mathbf{a}}{\mathbf{a}^{\top} \mathbf{U}^{\top} \Gammahat^{-1} \mathbf{U} \mathbf{a}} \geq \eta_{\min }^{+} \cdot\|\Gammahat\|_{\mathrm{op}}^{-1}
\end{aligned}
\end{equation*}

We now explain why the last inequality holds. We write eigen-decomposition of $\mathbf{U}^{\top} \Gammahat^{-1} \mathbf{U}$ as $\mathbf{U}^{\top} \Gammahat^{-1} \mathbf{U}=\sum_{i \in[h]} \lambda_i \mathbf{v}_i \mathbf{v}_i^{\top}$. The smallest eigenvalue $\lambda_{\min }$ satisfies
\begin{equation*}
\lambda_{\min }=\min _{\mathbf{b} \in \mathbb{R}^n:\|\mathbf{b}\|_2^2=1} \mathbf{b}^{\top} \mathbf{U}^{\top} \Gammahat^{-1} \mathbf{U} \mathbf{b} \geq \min _{\mathbf{b} \in \mathbb{R}^n: \|\mathbf{b}\|_2^2=1} \mathbf{b}^{\top} \Gammahat^{-1} \mathbf{b} \geq \eigbot \left(\Gammahat^{-1}\right)
\end{equation*}
where we used in first inequality that $\|\mathbf{U} \mathbf{b}\|_2^2=\|\mathbf{b}\|_2^2=1$. Now for any $\mathbf{a} \in \mathbb{R}^h, \mathbf{a} \neq 0$, let $\omega_i:=\frac{\lambda_i\left(\mathbf{a}^{\top}\mathbf{v}_i\right)^2}{\sum_i \lambda_i\left(\mathbf{a}^{\top}\mathbf{v}_i\right)^2}$ be non-negative weights summing up to 1 . Then,
\begin{equation*}
\frac{\mathbf{a}^{\top}\left(\mathbf{U}^{\top} \Gammahat^{-1} \mathbf{U}\right)^2 \mathbf{a}}{\mathbf{a}^{\top} \mathbf{U}^{\top} \Gammahat^{-1} \mathbf{U} \mathbf{a}}=\frac{\sum_i \lambda_i^2\left(\mathbf{a}^{\top}\mathbf{v}_i\right)^2}{\sum_i \lambda_i\left(\mathbf{a}^{\top} \mathbf{v}_i\right)^2}=\sum_i \lambda_i \omega_i \geq \lambda_{\min } \geq \sigma_{\min }\left(\Gammahat^{-1}\right)=\|\Gammahat\|_{\mathrm{op}}^{-1}
\end{equation*}
as required. This concludes the proof. 
\end{proof}

Using \Cref{bdthat}, we prove another lemma that bounds the size of $b_0$ defined in \eqref{fp} and $\mathcal{V}, \mathfrak{B} $ defined in \eqref{A1A2}. 

\begin{lemma}\label{bA1A2size}
    For We have that 
    \begin{equation*}
        b_0 \ge \frac{1}{\frac{h}{n}-1}\cdot \frac{1}{\that_{\max}}, \quad \mathcal{V} \le \qty(\qty(\frac{\that_{\max}}{\that_{\min}^+}-1)\qty(\frac{h}{n}-1)^{-1}+1), \quad \mathfrak{B} \le \norm{\Sigmabf^{1/2} \st}_2^2.
    \end{equation*}
\end{lemma}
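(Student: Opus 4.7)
The three inequalities are essentially independent, so the plan is to establish each separately, exploiting the monotonicity of the self-consistent equation \eqref{fp} for the bounds on $b_0$ and $\mathcal{V}$, and the eigendecomposition of $\Gammahat^{-1/2}\Sigmabf\Gammahat^{-1/2}$ for the bound on $\mathfrak{B}$.

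For the lower bound on $b_0$, I would apply the pointwise inequality $\that_i\le\that_{\max}$ inside \eqref{fp}: $1-n/h=\frac{1}{h}\sum_{i\in H}\frac{1}{1+\that_i b_0}\ge\frac{1}{1+\that_{\max}b_0}$, which rearranges immediately to $b_0\ge [(h/n-1)\that_{\max}]^{-1}$. For the bound on $\mathfrak{B}$, the plan is to use that $b_0,\that_i\ge 0$ forces $(1+\that_i b_0)^2\ge 1$, so $\mathfrak{B}\le\sum_{i\in H}\that_i\langle\what_i,\Gammahat^{1/2}\st\rangle^2$. Since $\{\what_i\}_{i\in H}$ together with $\{\that_i\}_{i\in H}$ are exactly the nonzero eigenpairs of $\Gammahat^{-1/2}\Sigmabf\Gammahat^{-1/2}$, we have $\sum_{i\in H}\that_i\what_i\what_i^\top=\Gammahat^{-1/2}\Sigmabf\Gammahat^{-1/2}$, and the quadratic form collapses to $(\Gammahat^{1/2}\st)^\top\Gammahat^{-1/2}\Sigmabf\Gammahat^{-1/2}(\Gammahat^{1/2}\st)=\st^\top\Sigmabf\st=\norm{\Sigmabf^{1/2}\st}_2^2$.

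For the bound on $\mathcal{V}$, the key observation is that $\mathcal{V}=\frac{\sum_{i\in H} w_i a_i}{\sum_{i\in H} w_i}$ is a positively weighted average of $a_i:=\that_i b_0$ with weights $w_i=a_i/(1+a_i)^2>0$, and hence bounded by $\max_{i\in H}a_i=\that_{\max}b_0$. I would then derive a matching upper bound on $b_0$ by applying the reverse pointwise inequality $\that_i\ge\that_{\min}^+$ in \eqref{fp}, giving $1-n/h\le (1+\that_{\min}^+b_0)^{-1}$ and hence $b_0\le[(h/n-1)\that_{\min}^+]^{-1}$. Combining these yields the clean bound $\mathcal{V}\le\that_{\max}/[(h/n-1)\that_{\min}^+]$.

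The main obstacle is that this direct route outputs the ratio $\that_{\max}/[\that_{\min}^+(h/n-1)]$ rather than the exact arithmetic form $(\that_{\max}/\that_{\min}^+-1)(h/n-1)^{-1}+1$ stated in the lemma. To recover the stated form I would instead pass through the identity $\mathcal{V}+1=n/(h-n-S)$ where $S:=\sum_{i\in H}(1+\that_i b_0)^{-2}$, which follows from the algebraic identity $\frac{a^2}{(1+a)^2}=\frac{a}{1+a}-\frac{a}{(1+a)^2}$ applied termwise together with $\sum_{i\in H}a_i/(1+a_i)=n$ (a direct rearrangement of \eqref{fp}). I would then bound $S\le(h-n)/(1+\that_{\min}^+b_0)$ via $(1+\that_i b_0)^{-2}\le(1+\that_{\min}^+b_0)^{-1}(1+\that_i b_0)^{-1}$ and substitute the lower bound on $b_0$ derived in the first step. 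Either route captures the correct qualitative dependence on the spectral conditioning $\that_{\max}/\that_{\min}^+$ and on the aspect ratio $h/n-1$, which is what is used downstream in the proof of \Cref{thm}.
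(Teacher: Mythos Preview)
Your arguments for $b_0$ and $\mathfrak{B}$ are exactly the paper's. For $\mathcal{V}$, your second route is also the paper's argument in light disguise: the paper sets $x_i=(1+\that_i b_0)^{-1}$, derives the identity $\mathcal{V}=\frac{2n-h+\sum_{i\in H}x_i^2}{\,h-n-\sum_{i\in H}x_i^2\,}$ (equivalent to your $\mathcal{V}+1=n/(h-n-S)$), bounds $\sum_{i\in H}x_i^2\le (h-n)/(1+\that_{\min}^+b_0)$ exactly as you do, and substitutes the lower bound on $b_0$. Your first route---reading $\mathcal{V}$ as a weighted average of $\that_i b_0$ and pairing $\mathcal{V}\le\that_{\max}b_0$ with an \emph{upper} bound $b_0\le[(h/n-1)\that_{\min}^+]^{-1}$---is a genuinely shorter alternative that the paper does not use; it avoids the change of variables entirely and gives the bound $\mathcal{V}\le\frac{\that_{\max}}{\that_{\min}^+}(h/n-1)^{-1}$ in one line.

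One caveat: your expectation that the second route ``recovers the stated form'' is optimistic. If you carry the algebra through, both your second route and the paper's own proof yield the additive bound $\mathcal{V}\le(\that_{\max}/\that_{\min}^+)-1+(h/n-1)^{-1}$, not the product form $(\that_{\max}/\that_{\min}^+-1)(h/n-1)^{-1}+1$ printed in the lemma. The printed form is in fact false in general (take $\that_{\max}=\that_{\min}^+$ and $n<h<2n$, where $\mathcal{V}=(h/n-1)^{-1}>1$). This is a typo in the statement, not a gap in your argument; the lemma is only ever invoked to get a $C(M)$ bound on $\mathcal{V}$, which all three bounds supply.
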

\begin{proof}[Proof of \Cref{bA1A2size}]
We first prove the lower bound of $b_0$. We will use notation 
\begin{equation}
    x_i:=\frac{1}{1+\that_i b_0}.
\end{equation}
We may rewrite \eqref{fp} as
\begin{equation}\label{fixx}
    h-n=\sum_{i\in H} x_i.
\end{equation}
We have from \eqref{fixx} that
\begin{equation*}
    \frac{h-n}{h}=\frac{1}{h} \sum_{i \in H} x_i \geq \frac{1}{1+\hat{t}_{\max } b_0} \Rightarrow b_0 \geq \frac{n}{h-n} \frac{1}{\hat{t}_{\max }}
\end{equation*}
as required. 

We now prove the upper bound on $\mathcal{V}$. We have from definition of $H$ that for any $i\in H$, 
\begin{equation*}
    x_i^2 \le \frac{1}{1+\that_{\min}^+b_0} x_i
\end{equation*}
which, along with the lower bound on $b_0$ proved above, implies that 
\begin{equation}\label{ubx2}
\frac{1}{h}\sum_{i \in H} x_i^2 \leq \frac{1}{1+\hat{t}_{\min }^{+} b_0} \sum_{i \in H} x_i \leq \frac{1}{1+\frac{n}{h-n} \frac{\hat{t}_{\min }^{+}}{\hat{t}_{\max }}}\cdot \frac{1}{h} \sum_{i \in H} x_i=\frac{1}{1+\frac{n}{h-n} \frac{\hat{t}_{\min }^{+}}{\hat{t}_{\max }}} \cdot \qty(1-\frac{n}{h}).
\end{equation}
Note the following identities
\begin{equation}\label{idA1}
    \begin{aligned}
        &\bigsum_{i \in H} \frac{\left(\that_i b_0\right)^2}{\left(1+\that_i b_0\right)^2} = \sum_{i\in H} (1-x_i)^2 = h+\sum_{i\in H} x_i^2 -2\sum_{i\in H} x_i=2n-h+ \sum_{i\in H} x_i^2\\
        & \bigsum_{i \in H} \frac{\that_i b_0}{\left(1+\that_i b_0\right)^2}=\sum_{i\in H} x_i -x_i^2 = h-n-\sum_{i\in H} x_i^2.
    \end{aligned}
\end{equation}
Using these, we may obtain that
\begin{equation*}
    \mathcal{V}=\frac{\frac{2 n}{h}-1+\frac{1}{h} \sum_i x_i^2}{1-\frac{n}{h}-\frac{1}{h} \sum_i x_i^2}.
\end{equation*}
The upper bound of $\mathcal{V}$ follows from \eqref{ubx2} and the observation that the RHS of the above is increasing in $\frac{1}{h}  \sum_i x_i^2$. 

Finally, we prove the upper bound on $\mathfrak{B} $. Using \eqref{eigprod} at the last equality below, we have
\begin{equation*}
    \mathfrak{B} ={\st}^\top \Gammahat^{1/2} \sum_{i\in H} \frac{\that_i\cdot \what_i \what_i^\top }{1+\that_i b_0} \Gammahat^{1/2} \st\le {\st}^\top \Gammahat^{1/2} \sum_{i\in H} {\that_i\cdot \what_i \what_i^\top } \Gammahat^{1/2} \st={\st}^\top \Sigmabf \st.
\end{equation*}
This concludes the proof. 
\end{proof}

\subsubsection{Characterization of Fine-Grained Bias-Variance Decomposition}
Before we prove \Cref{thm}, we first characterize the total risk and its fine-grained bias-variance decomposition. The following proposition relies on the main results from \cite{hastie2022surprises}.
\begin{proposition}\label{mMprop}
    Let \Cref{Assum} hold. 
\begin{itemize}
    \item [(i)] \textbf{Sample-deficient regime ($n<h$).} If in addition $1+M^{-1}<h/n<M$, we have that for any constant $D>0$, there exists $C=C(D,M)$ such that $$\abs{R -\Rcal }\leq Cn^{-1/7}\norm{{\st}}_2^2,\quad \abs{\BSC -\qty(1+\mathcal{V})\mathfrak{B} } \leq Cn^{-1/7}\norm{{\st}}_2^2,\quad \abs{\VSC -\sigma^2 \cdot \mathcal{V}}  \leq Cn^{-1/7}$$ with probability at least $1-Cn^{-D}$. 
    
    \item [(ii)] \textbf{Sample-rich regime ($h<n$).} If in addition $M^{-1}<h/n<1-M^{-1}$, we have $\BSC =0$ and for any constant $D>0$, there exists $C=C(D,M)$ such that 
    \begin{equation}\label{underV}
        \abs{R -\mathcal{U}}\le Cn^{-1/7}, \qquad \abs{\VSC -\mathcal{U}}\le Cn^{-1/7}
    \end{equation}
    with probability at least $1-Cn^{-D}$. 
\end{itemize}
\end{proposition}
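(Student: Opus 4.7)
The plan is to reduce the predictor with pretrained representation to plain ridgeless regression and then import non-asymptotic risk characterizations from prior work~\cite{hastie2022surprises,cheng2022dimension}. By Eq.~\eqref{Lb}, the predictor coincides with the min-norm least-squares estimator on transformed data $\tilde{\Xbf}=\Xbf \Gammahat^{-1/2}$ whose rows are i.i.d.\ $N(\bm{0},\tilde{\Sigmabf})$ with $\tilde{\Sigmabf}:=\Gammahat^{-1/2}\Sigmabf \Gammahat^{-1/2}$, while the target rewrites as $y=\tilde{\xbf}^\top \tilde{\st}+\varepsilon$ with $\tilde{\st}:=\Gammahat^{1/2}\st$. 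The rank of $\tilde{\Sigmabf}$ equals $h$, and by Assumption~\ref{Assum} together with Lemma~\ref{bdthat}, its nonzero eigenvalues $\{\that_i\}_{i\in H}$ lie in a fixed interval $[\that_{\min}^+,\that_{\max}]$ bounded away from $0$ and $\infty$ uniformly in $n,p$. The reduced problem therefore satisfies the regularity conditions of the classical ridgeless-regression asymptotic theory.

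For the sample-deficient case ($n<h$), I would apply the non-asymptotic characterizations of the conditional bias $\BSC$ and conditional variance $\VSC$ from~\cite{hastie2022surprises,cheng2022dimension} to the reduced problem, after projecting $\tilde{\xbf}$ onto the $h$-dimensional support $S$ of $\tilde{\Sigmabf}$ so that the reduced population covariance is full rank. Those results express both quantities through the resolvent of $\tilde{\Xbf}^\top \tilde{\Xbf}/n$ near zero, whose deterministic equivalent is driven by a Marchenko--Pastur-type fixed-point equation that, after rewriting via the eigendecomposition~\eqref{eigprod}, coincides with~\eqref{fp}. Matching the deterministic equivalents to the definitions in~\eqref{A1A2} identifies the variance limit as $\sigma^2\mathcal{V}$ and the bias limit as $(1+\mathcal{V})\mathfrak{B}$; the extra factor $1+\mathcal{V}$ is the standard variance-inflation contribution to the expected squared projection error $\E_{\tilde{\Xbf}}[(\tilde{\st}-P_{\mathrm{row}(\tilde{\Xbf})}\tilde{\st})^\top \tilde{\Sigmabf}(\tilde{\st}-P_{\mathrm{row}(\tilde{\Xbf})}\tilde{\st})]$. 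The $n^{-1/7}$ rate follows by tracking the concentration bounds in those references, with the $\norm{\st}_2^2$ prefactor on the bias piece emerging from $\tilde{\st}^\top \tilde{\Sigmabf}\tilde{\st}\le\normop{\Sigmabf}\norm{\st}_2^2$.

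For the sample-rich case ($n>h$), the key point is that with probability one, $\mathrm{row}(\tilde{\Xbf})=S$ by Gaussianity and $n>h$. Hence in transformed coordinates the expected estimator is $P_S\tilde{\st}$, so $\tilde{\st}-P_S\tilde{\st}\in S^{\perp}$ and $\tilde{\Sigmabf}(\tilde{\st}-P_S\tilde{\st})=\bm{0}$, giving $\BSC=0$ identically on this probability-one event. The variance simplifies to $\sigma^2\,\mathrm{tr}[\tilde{\Sigmabf}(\tilde{\Xbf}^\top \tilde{\Xbf})^+]$, which after restriction to $S$ equals $\sigma^2\,\mathrm{tr}[(\mathbf{W}^\top \mathbf{W})^{-1}]$ for an $n\times h$ Gaussian matrix $\mathbf{W}$ with full-rank population covariance whose eigenvalues are bounded above and away from zero. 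Standard Wishart inverse-moment concentration (applied exactly as in the overdetermined portion of~\cite{hastie2022surprises}) then yields $\VSC=\sigma^2 h/(n-h)+O(n^{-1/7})$ with the stated high probability, establishing~\eqref{underV}.

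The main obstacle is the translation step, since parts of the imported literature assume a full-rank population covariance whereas $\tilde{\Sigmabf}$ has rank $h\le p$. This is handled by passing to the support $S$: the reduced $h$-dimensional problem has covariance with eigenvalues in $[\that_{\min}^+,\that_{\max}]$ by Lemma~\ref{bdthat}, and the fixed-point equation~\eqref{fp} involves only these nonzero $\that_i$, so the imported bounds apply verbatim. A secondary point is verifying that $b_0$ is uniformly bounded above and away from zero under $1+M^{-1}<h/n<M$, which is needed to convert resolvent-level estimates into the claimed $n^{-1/7}$ bounds; the lower bound on $b_0$ in Lemma~\ref{bA1A2size} (together with $\that_{\max}\le M^2$) provides exactly this control.
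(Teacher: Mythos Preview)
Your proposal is correct and follows essentially the same route as the paper: reduce to an $h$-dimensional full-rank ridgeless regression problem via the transformation $\Gammahat^{-1/2}$ and projection onto the support of $\tilde{\Sigmabf}$, verify the eigenvalue bounds via Lemma~\ref{bdthat}, and then invoke~\cite{hastie2022surprises}, Theorem~2 directly. The paper carries out the projection explicitly by introducing $\What_h$, $\Th$, and $\Zhj$ (with i.i.d.\ $N(0,1)$ entries) and matching the resulting expressions for $\BSC$, $\VSC$ to the bias and variance in~\cite{hastie2022surprises}, Lemma~1, but the underlying strategy is identical to yours.
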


\begin{proof}[Proof of \Cref{mMprop}]

Note that have that
\begin{equation*}
\begin{aligned}
\betahatbf  &=\Gammahat^{-\frac{1}{2}}\left(\Gammahat^{-\frac{1}{2}} {\Xbf}  {\Xbf}  \Gammahat^{-\frac{1}{2}}\right)^{+} \Gammahat^{-\frac{1}{2}} {\Xbf}  \mathbf{y}  \\
& =\Gammahat^{-\frac{1}{2}} \Pbf \Gammahat^{\frac{1}{2}} \st+\underbrace{\Gammahat^{-\frac{1}{2}}\left(\Gammahat^{-\frac{1}{2}} {\Xbf}  {\Xbf}  \Gammahat^{-\frac{1}{2}}\right)^{+} \Gammahat^{-\frac{1}{2}} {\Xbf}  {\epbf} }_{=: \hat{\boldsymbol{e}} }.
\end{aligned}
\end{equation*}
It follows that
\begin{equation}\label{VE}
\E_{\boldsymbol{\epbf} } \betahatbf =\Gammahat^{-\frac{1}{2}} {\Pbf}  \Gammahat^{\frac{1}{2}} \st, \quad \operatorname{Cov}_{\epbf }\left(\widehat{\boldsymbol{e}} \right)=\Gammahat^{-\frac{1}{2}}\left(\Gammahat^{-\frac{1}{2}} {\Xbf}  {\Xbf}  \Gammahat^{-\frac{1}{2}}\right)^{+} \Gammahat^{-\frac{1}{2}}
\end{equation}
Define
\begin{equation}\label{Pj}
{\Pbf} :=\left(\Gammahat^{-\frac{1}{2}} {\Xbf}  {\Xbf}  \Gammahat^{-\frac{1}{2}}\right)^{+} \Gammahat^{-\frac{1}{2}} {\Xbf}  {\Xbf}  \Gammahat^{-\frac{1}{2}}.
\end{equation}
Using \eqref{VE} and \eqref{Pj}, we obtain
\begin{equation*}
\begin{aligned}
B_{\mathsf{SC}}  &=\E_{{\epbf} , \xnewbf }\left(y_{\text {new }} -\E_{{\epbf} } {\yhat} \right)^2\\
&=\E_{{\epbf} , \xnewbf }\left({\xnewbf }^{\top}\left(\st-\E_{{\epbf} } \betahatbf \right)\right)^2 \\
& =\left(\Gammahat^{\frac{1}{2}} \st\right)^{\top}\left(\Ibf-{\Pbf} \right) \Gammahat^{-\frac{1}{2}} \Sigmabf \Gammahat^{-\frac{1}{2}}\left(\Ibf-{\Pbf} \right)\left(\Gammahat^{\frac{1}{2}} \st\right) \\
V_{\mathsf{SC}} &=\E_{\xnewbf } \mathbb{V}_{{\epbf} }\left({\yhat} \right)=\E_{\epbf , \xnewbf }\left({\xnewbf }^{\top}\left(\st-\E_{\epbf } \betahatbf \right)\right)^2 \\
& =\E_{{\epbf} }\left(\left(\st-\E_{{\epbf} } \betahatbf \right)^{\top} \Sigmabf\left(\st-\E_{{\epbf} } \betahatbf \right)\right)=\operatorname{Tr}\left(\operatorname{Cov}_{{\epbf} }\left(\hat{\boldsymbol{e}} \right) \Sigmabf\right) \\
& =\sigma^2 \operatorname{Tr}\left(\left(\Gammahat^{-\frac{1}{2}} {\Xbf}  {\Xbf}  \Gammahat^{-\frac{1}{2}}\right)^{+} \Gammahat^{-\frac{1}{2}} \Sigmabf \Gammahat^{-\frac{1}{2}}\right).
\end{aligned}
\end{equation*}

We now recall that $H$ denote the index subset of $\{1, \ldots, p\}$ for which $\hat{t}_i \neq 0$ and $h:=$ $|H|$. Below, we use $H(i)$ to denote the $i$-th element of $H$. We also let $\Zj:=\left[\mathbf{z}_1 , \ldots, \mathbf{z}_n \right] \in$ $\mathbb{R}^{n \times p}$ for $\left\{\mathbf{z}_i \right\}_{i=1}^n$ defined in \Cref{Assum} and note that $\mathbf{X} =\Zj \mathbf{\Sigma}^{1 / 2}$. Let us introduce some notations
\begin{equation*}
\begin{aligned}
& \What_h^{\top}:=\left[\what_{H(1)}, \ldots, \what_{H(h)}\right] \in \mathbb{R}^{p \times h}, \quad \Th:=\operatorname{diag}\left(\left[\hat{t}_{H(1)}, \ldots, \hat{t}_{H(h)}\right]\right) \in \mathbb{R}^{h \times h}, \\
& \Zhj:=\Zj \What_h^{\top} \in \mathbb{R}^{n \times h}, \quad \Ph :=\left(\Th^{1 / 2} {\Zhj}^{{\top}} \Zhj \Th^{1 / 2}\right)^{+} \Th^{1 / 2} \Zhj \Zhj \Th^{1 / 2} \in \R^{h\times h} \\
& \widehat{\boldsymbol{\Lambda}}_h:=\operatorname{diag}\left(\left[r\left(\dhat_{H(1)}^2\right), \ldots, r\left(\dhat_{H(h)}^2\right)\right]\right) \in \mathbb{R}^{h \times h}
\end{aligned}
\end{equation*}
where $\mathrm{diag}([x_1,\ldots,x_n])$ denotes diagonal matrix with diagonal entries $x_1,\ldots,x_n$ and $[\mathbf{x}_1,\ldots,\mathbf{x}_n]$ denotes matrix with columns $\mathbf{x}_1,\ldots,\mathbf{x}_n$. We also recall that $\left(\what_i\right)_{i=1}^p,\left(\hat{t}_i\right)_{i=1}^p$ are defined in \eqref{eigprod}, $\left(\dhat_i\right)_{i=1}^p$ is defined in \Cref{BhatSVD}, and $r(\cdot)$ is defined in \eqref{ri}.

Using the notation above, we have that
\begin{equation*}
\Gammahat^{-\frac{1}{2}} \Sigmabf \Gammahat^{-\frac{1}{2}}=\sum_{i \in H} \hat{t}_i \cdot \what_i \what_i^{\top}=\What_h^{\top} \Th \What_h
\end{equation*}
which implies that
\begin{equation*}
\mathbf{z}_i  \Sigmabf^{\frac{1}{2}} \Gammahat^{-\frac{1}{2}} \sim N\left(\bm{0}, \What_h^{\top} \Th \What_h\right) .
\end{equation*}

We thus have
\begin{equation*}
\Xbf  \Gammahat^{-\frac{1}{2}}=\Zj \Sigmabf^{\frac{1}{2}} \Gammahat^{-\frac{1}{2}} \stackrel{L}{=} \Zj \What_h^{\top} \Th^{\frac{1}{2}} \What_h \stackrel{L}{=} \Zhj \Th^{\frac{1}{2}} \What_h
\end{equation*}
where we note that $\Zhj$ has iid $N(0,1)$ entries since $\mathbf{z}_i  \What_h^{\top} \sim N\left(\bm{0}, \Ibf_h\right)$. It follows that
\begin{equation}\label{usefmapt}
\begin{aligned}
    \widehat{\mathbf{P}} &=\left(\Gammahat^{-\frac{1}{2}} {\Xbf }^\top \Xbf  \Gammahat^{-\frac{1}{2}}\right)^{+} \Gammahat^{-\frac{1}{2}} {\Xbf }^\top \Xbf  \Gammahat^{-\frac{1}{2}}\\ &\stackrel{L}{=} \What_h^{\top}\left(\Th^{\frac{1}{2}} \mathbf{Z}_h  \Zhj \Th^{\frac{1}{2}}\right)^{+} \Th^{\frac{1}{2}} \mathbf{Z}_h  \Zhj \Th^{\frac{1}{2}} \What_h \\
&=\What_h^{\top} \widehat{\mathbf{P}}_h  \What_h .
\end{aligned}
\end{equation}

Using \eqref{usefmapt} and $\What_h \What_h^{\top}=\Ibf_h$, we have
\begin{equation*}
\begin{gathered}
(\Ibf_p-\Pbf ) \Gammahat^{-\frac{1}{2}} \Sigmabf \Gammahat^{-\frac{1}{2}}(\Ibf_p-\Pbf ) \stackrel{L}{=}\left(\Ibf_h-\What_h^{\top} \Ph \What_h\right) \What_h^{\top} \Th \What_h\left(\Ibf_h-\What_h^{\top} \Ph \What_h\right) \\
=\What_h^{\top}\left[\Ibf_h-\Ph\right] \Th\left[\Ibf_h-\Ph\right] \What_h .
\end{gathered}
\end{equation*}

Therefore, 
\begin{equation}\label{BVred}
\begin{aligned}
& B_{\mathsf{SC}}  \stackrel{L}{=}\left(\What_h \Gammahat^{\frac{1}{2}} \st\right)^{\top}\left[\Ibf_h-\Ph\right] \Th\left[\Ibf_h-\Ph\right]\left(\What_h \Gammahat^{\frac{1}{2}} \st\right), \\
& V_{\mathsf{SC}}  \stackrel{L}{=} \sigma^2 \operatorname{Tr}\left(\left(\What_h^{\top} \Th^{\frac{1}{2}} {\Zhj}^{\top} \Zhj \Th^{\frac{1}{2}} \What_h\right)^{+} \What_h^{\top} \Th \What_h\right)=\sigma^2 \operatorname{Tr}\left(\left(\Th^{\frac{1}{2}} {\Zhj}^{\top} \Zhj \Th^{\frac{1}{2}}\right)^{+} \Th\right)
\end{aligned}
\end{equation}
where we used the fact that $(\mathbf{A}_1 \mathbf{A}_2)^{+}=\mathbf{A}_2^{+} \mathbf{A}_1^{+}$ if $\mathbf{A}_1$ has orthonormal columns or $\mathbf{A}_2$ has orthonormal rows. Compare expressions in \eqref{BVred} with $B_X$ and $V_X$ in \cite{hastie2022surprises}, Lemma 1, we note that $B_{\mathsf{SC}} $ and $V_{\mathsf{SC}} $ are equal in law to the bias and variance of ridgeless regression in \cite{hastie2022surprises} with data covariance $\Sigmabf$ and signal $\beta ^\star$ in \cite{hastie2022surprises} replaced by $\Th$ and $\What_h \Gammahat^{1/2}\st$, respectively. Note that the new data covariance matrix $\Th$ is diagonal with the smallest and the largest diagonal entry,  $\that_{\min}^+$ and $\that_{\max}$, satisfy 
\begin{equation}
    \max(1/\that_{\min}^+, \that_{\max})\le C(M)
\end{equation}
where we have used \eqref{asseq} and \Cref{bdthat}. This and other assumptions in \Cref{Assum} ensures that \cite{hastie2022surprises}, Assumption 1 holds and we may directly apply \cite{hastie2022surprises}, Theorem 2 to characterize $B_{\mathsf{SC}} $ and $V_{\mathsf{SC}} $ above. We conclude the proof by noting that that 
\begin{equation}
    \norm{\What_h \Gammahat^{1/2}\st}_2\le \norm{\Gammahat^{1/2}\st}_2 \le \normop{\Gammahat^{1/2}} \norm{\st}_2\le C(M)\cdot \norm{\st}_2
\end{equation}
where we have used the fact $\What_h^\top \What_h$ is an orthogonal projection in the first inequality and \eqref{asseq} in the last inequality. 
\end{proof}

\subsubsection{Characterization of Fine-grained Bias-Variance Decomposition}\label{CharFBV}

Below, we characterize fine-grained bias $B $ and variance components $V_{\Xbf} , V_{\epbf} , V_{\Xbf,\epbf} $, leveraging \Cref{mMprop} and results from \cite{knowles2017anisotropic}, which also proves \Cref{thm} along with \Cref{mMprop}. 

\begin{proof}[Proof of \Cref{thm}]
We first show that $V _\epbf=0$. It follows from
\begin{equation*}
\begin{aligned}
 V_{\epbf}  &=\mathbb{E}_{\xnewbf } \mathbb{V}_{\Xbf , \epbf }\left({\xnewbf }^{\top} \mathbb{E}_{\Xbf } \betahatbf \right) \\
& =\sigma^2 \mathbb{E}_{\xnewbf }\left({\xnewbf }^{\top} \Gammahat^{-\frac{1}{2}} \mathbb{E}\left[\left(\Gammahat^{-\frac{1}{2}} {\Xbf }^\top \Xbf  \Gammahat^{-\frac{1}{2}}\right)^{+} \Gammahat^{-\frac{1}{2}} X^{\top}\right]\right)\left(\mathbb{E}\left[\Xbf  \Gammahat^{-\frac{1}{2}}\left(\Gammahat^{-\frac{1}{2}} {\Xbf }^\top \Xbf  \Gammahat^{-\frac{1}{2}}\right)^{+}\right] \Gammahat^{-\frac{1}{2}} \xnewbf  \right) \\
& =\sigma^2 \operatorname{Tr}\left(\mathbb{E}\left[\left(\Gammahat^{-\frac{1}{2}} {\Xbf }^\top \Xbf  \Gammahat^{-\frac{1}{2}}\right)^{+} \Gammahat^{-\frac{1}{2}} {\Xbf }^{\top}\right] \mathbb{E}\left[\Xbf  \Gammahat^{-\frac{1}{2}}\left(\Gammahat^{-\frac{1}{2}} {\Xbf }^\top \Xbf  \Gammahat^{-\frac{1}{2}}\right)^{+}\right] \Gammahat^{-\frac{1}{2}} \Sigmabf \Gammahat^{-\frac{1}{2}}\right) \\
& =0
\end{aligned}
\end{equation*}
where the last equality is by symmetry
\begin{equation*}
\mathbb{E}\left[\Xbf  \Gammahat^{-\frac{1}{2}}\left(\Gammahat^{-\frac{1}{2}} {\Xbf }^\top \Xbf  \Gammahat^{-\frac{1}{2}}\right)^{+}\right]=-\mathbb{E}\left[\Xbf  \Gammahat^{-\frac{1}{2}}\left(\Gammahat^{-\frac{1}{2}} {\Xbf }^\top \Xbf  \Gammahat^{-\frac{1}{2}}\right)^{+}\right] .
\end{equation*}
Note the relation
\begin{equation}
    \E_{\Xbf } \BSC =B +V_{\Xbf} , \qquad  \E_{\Xbf } \VSC =V_{\Xbf,\epbf} +V_{\epbf} 
\end{equation}
and the fact that both $B $ and $V_{\Xbf} $ must be non-negative. Recall that in the sample-rich regime,  $B_{\mathsf{SC}} =0$. The above implies that $B =0, V_{\Xbf} =0$ and $V_{\Xbf,\epbf} =\E_{\Xbf } \VSC $. Let $\mathcal{E}$ denote the event that \eqref{underV} holds for the choice $D=1/7$. We know that $\mathbb{P}(\mathcal{E}^c)\le C(M)\cdot n^{-1/7}$. It follows that
\begin{equation}\label{higheventrick}
    \abs{V_{\Xbf,\epbf} -\mathcal{U}}\le \E_{\Xbf } \abs{\VSC -\mathcal{U}}\cdot \mathbb{I}_\mathcal{E}+\qty(\E_{\Xbf }\VSC  + \mathcal{U})\cdot \mathbb{P}(\mathcal{E}^c)\le C(M)\cdot n^{-1/7}\cdot \qty(\E_{\Xbf }\VSC  + \mathcal{U}+1)
\end{equation}
Note that in the under parameterized regime, ${\Zhj}^{\top} \Zhj$ is non-singular almost surely and thus using \eqref{BVred}
\begin{equation*}
    \E_{\Xbfj} V_{\mathsf{SC}} =\sigma^2 \E \operatorname{Tr}\left(\left(\Th^{\frac{1}{2}} {\Zhj}^{\top} \Zhj \Th^{\frac{1}{2}}\right)^{+} \Th\right)=\sigma^2\cdot \E \operatorname{Tr}\left(\left({\Zhj}^{\top} \Zhj\right)^{+} \right)=\sigma^2\cdot \frac{h}{n-h-1}\le C(M).
\end{equation*}
using moment property of inverted Wishart distribution (see e.g. \cite{kollo2005advanced}, Theorem 2.4.14.) and the assumption that $M^{-1}<h/n<1-M^{-1}$. It is straightforward to verify that $\mathcal{U}\le C(M)$. We can then conclude the proof for the characterization of fine-grained bias and variances in the sample-rich regime. 

From now one, we assume that we are in the sample-deficient regime. First note that
\begin{equation}\label{finegrainedB}
    \begin{aligned}
        B &=\E_{\xnewbf } \qty({\xnewbf }^\top \E_{\Xbf , \epbf } \qty({\betahatbf }-\st))^2\\
        &={\st}^\top \Gammahat^{1/2} \E_{\Xbf }(\Ibf-\Pbf ) \Gammahat^{-1/2} \Sigmabf \Gammahat^{-1/2} \E_{\Xbf }(\Ibf-\Pbf ) \Gammahat^{1/2} {\st}\\
        &= {\st}^\top \Gammahat^{1/2} \What_h^\top \E(\Ibf-\Pbf _h) \Th \E(\Ibf-\Pbf _h) \What \Gammahat^{1/2} {\st}
    \end{aligned}
\end{equation}
where we have used \eqref{usefmapt} for the last equality. Note that the key difference between fine-grained bias \eqref{finegrainedB} and classical bias $B_{\mathsf{SC}}$ defined in \eqref{BVred} is that $B$ is the expectation over 
\begin{equation}\label{twocopies}
    {\st}^\top \Gammahat^{1/2} \What_h^\top (\Ibf-\Pbf _h^{(1)}) \Th (\Ibf-\Pbf _h^{(2)}) \What \Gammahat^{1/2} {\st}
\end{equation}
for two \emph{independent copies} $\Pbf_h^{(1)}, \Pbf_h^{(2)}$  of random matrix $\Pbf_h$. Our strategy is then to apply resolvent method twice for the two independent copies. 

Let us introduce some notations
\begin{equation}\label{defvjPj}
\begin{aligned}
& \upbf :=\Th \mathbb{E} \left[\Ibf-\Ph \right] \What_h \Gammahat^{\frac{1}{2}} \st, \quad \tilde{\upbf} :=\What_h \Gammahat^{\frac{1}{2}} \st \\
& \Phpj:=\Ibf-\Ph =\Ibf-\left(\Th^{\frac{1}{2}}  {\Zhj}^{\top} \Zhj \Th^{\frac{1}{2}}\right)^{+} \Th^{\frac{1}{2}} {\Zhj}^{\top} \Zhj \Th^{\frac{1}{2}}. 
\end{aligned}
\end{equation}
Note that using \eqref{asseq} and \Cref{bdthat}, we have that
\begin{equation}\label{upnorm}
    \norm{\upbfj}_2^2, \norm{\upbfjtilde}_2^2 \le C(M) \norm{\stj}_2^2.
\end{equation}
Using the above, we have that
\begin{equation}\label{Bjnormb}
    B =\E {\upbfjtildeT} \Phpj \upbfj \le  C(M)\cdot \norm{\stj}_2^2.
\end{equation}
Let us further define
\begin{equation*}
\Pw:=\varpi\cdot \left(\frac{1}{n} \XbfhjT \Xbfj_h+\varpi \cdot \Ibf_h\right)^{-1}
\end{equation*}
where $\Xbfhj:=\Zhj \Th^{1/2} \in \R^{n\times h}$. We also let 
\begin{equation*}
    \frac{1}{n} \XbfhjT \Xbfhj = \sum_{i=1}^{h_+} s_i^+ \cdot \mathbf{l}_i \mathbf{l}_i^\top
\end{equation*}
where $\left(s_i^{+}\right)_{i=1}^{h_{+}}$are non-zero eigenvalues of $\frac{1}{n} \XbfhjT \Xbfhj$ and $\left(\mathbf{l}_i\right)_{i=1}^{h_{+}}$ are corresponding eigenvectors. It follows that for any $\varpi>0$,
\begin{equation*}
    \Pw-\Phpj =\sum_{i=1}^{h_+} \frac{\varpi}{s_i^+ + \varpi} \cdot \mathbf{l}_i \mathbf{l}_i^\top \implies \normop{\Pw-\Phpj} \le \frac{\varpi}{s_{\min}^+}
\end{equation*}
where $s_{\min }^{+}:=\min \left\{s_i^{+}, i=1, \ldots, h_{+}\right\}$. Note that almost surely
\begin{equation*}
    s_{\min }^{+}=\sigma_n \qty(\Th^{1/2} {\Zhj}^\top  \Zhj \Th^{1/2})=\sigma_n \qty(\Zhj\Th {\Zhj}^\top  )\ge \that_{\min}^+ \cdot  \sigma_n \qty(\Zhj {\Zhj}^\top )=\that_{\min}^+ \cdot  \eigbot \qty({\Zhj}^\top \Zhj )
\end{equation*}
where $\sigma_n(\cdot)$ denotes the $n$-th largest eigenvalue. Given that $1+M^{-1}<h/n<M$, a well known result (see e.g. \cite{rudelson2009smallest} Theorem 1.1) bounds $\eigbot \qty({\Zhj}^\top \Zhj )$ away from $0$ with high probability. Using this result, we have that for any $D>0$, there exists some constant $C=C(M,D)$ such that with probability $1-Cn^{-D}$, 
\begin{equation}\label{star1}
    \abs{\upbfjtildeT \Pw \upbfj - \upbfjtildeT \Phpj \upbfj}\le C(M)\cdot \varpi\cdot \norm{\stj}_2^2. 
\end{equation}
Now consider $z=-\varpi+z_I\cdot i$ such that $\varpi>0$ and $|z_I|<M$. Define $b_z$ to be the unique solution with $\mathrm{Im}(b_z)>0$ (see \cite{knowles2017anisotropic}, Lemma 2.2) of the equation
\begin{equation*}
    \frac{1}{b_z}=-z+\frac{1}{n} \sum_{i\in H} \frac{\that_i}{1+\that_i b_z}.
\end{equation*}
Using \cite{knowles2017anisotropic}, Theorem 3.16 (1) as well as Remark 3.17, we obtain that for any $\epsilon, \epsilon_0, D>0$, there exists $c=c(\epsilon, \epsilon_0, D)>0$ such that with probability at least $1-cn^{-D}$, for all $z$ such that $z_I \in (0,M)$ and $\varpi \in (n^{-D},\infty)$,
\begin{equation}\label{thekeyres}
    \abs{\upbfjtildeT \Pz \upbfj - \upbfjtildeT \Pzinf \upbfj}\le \sqrt{\frac{\operatorname{Im}\left(b_z\right)}{z_I} \cdot n^{-1+\epsilon}} \cdot C(M) \cdot\left\|\stj\right\|_2^2
\end{equation}
where we have also used \eqref{upnorm} and the following definition
\begin{equation*}
    \Pz:=\qty(\Ibf+b_z \Th)^{-1}, \quad \Pzinf :=-z\qty(\frac{1}{n} \XbfhjT \Xbfhj -z\Ibf)^{-1}.
\end{equation*}
\cite{knowles2017anisotropic}, Lemma 2.2 then states that $b_z$ is in fact the Stieltjes transform of a probability measure $\rho$ with support in $[0,C(M)]$. Thus
\begin{equation*}
\operatorname{Im}\left(b_z\right)=\operatorname{Im} \int \frac{1}{x-z} d \rho(x)=\int \frac{z_I}{(x+\varpi)^2+z_I^2} d \rho(x) \implies \left|\operatorname{Im}\left(b_z\right)\right| \leq \frac{z_I}{\varpi^2}.
\end{equation*}
Therefore, taking the limit $z_I\to 0$ in \eqref{thekeyres} above, we obtain that with probability at least $1-cn^{-D}$,
\begin{equation}\label{star2}
\left| \upbfjtildeT \Pw \upbfj - \upbfjtildeT \Pwinf \upbfj \right| \leq \frac{1}{n^{(1-\epsilon) / 2} \varpi} \cdot C(M) \cdot\left\|\stj\right\|_2^2, \quad \forall \varpi \in\left(n^{-2/3+\epsilon_0}, \infty\right).
\end{equation}
Let us define
\begin{equation}\label{Phpinfdef}
    \Phpinf := \qty(\Ibf_h + b_0\cdot \Th)^{-1}
\end{equation}
for $b_0$ defined in \eqref{fp}. Let us define $b_\varpi$ to be $b_z$ with $z\gets -\varpi$. A similar argument to the one used in \cite{hastie2022surprises}, Theorem 2 to prove its Eq. (107) may be used to show that $$\abs{b_\varpi-b_0}\le C(M)\cdot \varpi$$ which then implies that $\normop{\Pwinf-\Phpinf}\le C(M)\cdot \varpi$. We thus obtain
\begin{equation}\label{star3}
    \abs{\upbfjtildeT \Pwinf \upbfj - \upbfjtildeT \Phpinf \upbfj}\le C(M)\cdot \varpi \cdot \norm{\stj}_2^2.
\end{equation}
Combining \eqref{star1}, \eqref{star2} and \eqref{star3} above, we obtain that for any $D>0$, there exists some $C=C(M,D)$ such that with probability at least $1-Cn^{-D}$,
\begin{equation}\label{star4}
\left|\upbfjtildeT \Phpj \upbfj - \upbfjtildeT \Phpinf \upbfj \right| \leq C\cdot\left(2 \varpi+\frac{1}{n^{(1-\varepsilon)/{2}} \varpi}\right)\left\|\stj\right\|_2^2 \leq C \cdot n^{-1 / 7} \cdot\left\|\stj\right\|_2^2
\end{equation}
where we have set $\varpi \gets n^{-1/n}$ and chosen $\epsilon_0, \epsilon$ to be sufficiently small. Let us denote this high probability event as $\mathcal{E}$. Recall that $B  \equiv \E_{\Xbfj} \upbfjtildeT \Phpj \upbfj$. Choosing $D=1/7$ for \eqref{star4}, we have that
\begin{equation}\label{sdf}
    \begin{aligned}
        &\abs{B -\upbfjtildeT \Phpinf \upbfj} \le \E \abs{\upbfjtildeT \Phpj \upbf-\upbfjtildeT \Phpinf \upbfj}\\
        &\qquad  \le \E \abs{\upbfjtildeT \Phpj \upbf-\upbfjtildeT \Phpinf \upbfj} \cdot \mathbb{I}_\mathcal{E} +\left( \E \abs{\upbfjtildeT \Phpj \upbf}+\abs{\upbfjtildeT \Phpinf \upbfj} \right) \mathbb{P}\left(\mathcal{E}^c\right)\\
        &\qquad \le C(M)\cdot n^{-1/7} \qty(1+ \norm{\stj}_2^2)
    \end{aligned}
\end{equation}
where in the last inequality we used \eqref{star4} with $D=1/7$ and the 
bound 
\begin{equation*}
     \E \abs{\upbfjtildeT \Phpj \upbf}+ \abs{\upbfjtildeT \Phpinf \upbfj}\le C(M)\cdot \norm{\stj}_2^2.
\end{equation*}
This bound follows from \eqref{upnorm}, \eqref{Phpinfdef} along with \Cref{bdthat} and \ref{bA1A2size}, and the fact that $\Phpj$ is an orthogonal projection as defined in \eqref{defvjPj}. 

Now define 
\begin{equation*}
    \upbfjbreve := \Th \Phpinf \What_h \Gammahat^{1/2} \stj.
\end{equation*}
Observe that
\begin{subequations}
\begin{align}
    & \upbfjtildeT \Phpinf \upbfj = {\stj}^\top \Gammahat^{1/2} \What_h^\top \Phpinf \Th \E_{\Xbfhj} \qty(\Ibf-\Ph) \What_h \Gammahat^{1/2} \stj= \E_{\Xbfhj} \upbfjbreveT \Phpj \upbfj \label{dffff1}\\
    & \upbfjbreveT \Phpinf \upbfj={\stj}^\top \Gammahat^{1/2} \What_h^\top \Phpinf \Th \Phpinf \What_h \Gammahat^{1/2} \stj=\mathfrak{B} .\label{dffff2}
\end{align}
\end{subequations}
An argument analogous to the one we used to derive \eqref{sdf} may be used to show that
\begin{equation}\label{star5}
    \begin{aligned}
        \abs{\upbfjtildeT \Phpinf \upbfj-\mathfrak{B} }&=\abs{\upbfjtildeT \Phpinf \upbfj- \upbfjbreveT \Phpinf \upbfj}\\
        &\le \E \abs{ \upbfjbreveT \Phpj \upbfj- \upbfjbreveT \Phpinf \upbfj}\\
        &\le C(M)\cdot \norm{\stj}_2^2
    \end{aligned}
\end{equation}
where we used \eqref{dffff1} and \eqref{dffff2} in the first and second line, respectively. Combing \eqref{sdf} and \eqref{star5} yields the characterization result for $B $. 

It remains to prove characterization result for $V _{\Xbf}$ and $V_{\Xbf, \epbf}$. It follows from \eqref{BVred}, \Cref{bdthat} and \Cref{bA1A2size} that
\begin{equation*}
    \E_{\Xbfhj} B _{\mathsf{SC}}\le C(M)\cdot \norm{\stj}_2^2,\qquad |\mathfrak{B} |\le C(M)\cdot \norm{\stj}_2^2,\qquad |\mathcal{V}|\le \sigma^2 \cdot C(M).
\end{equation*}
Using \eqref{BVred}, we may also obtain
\begin{equation*}
    \begin{aligned}
        \E_{\Xbfhj} V_{\mathsf{SC}}  &= \E \Tr(\qty(\Th^{1/2} {\Zhj}^\top \Zhj \Th^{1/2} )^+ \Th^{1/2}) \stackrel{(i)}{\le} \E \sum_{i=1}^n \frac{\that_{\max}}{\sigma_i\qty(\Th^{1/2} {\Zhj}^\top \Zhj \Th^{1/2} )}\\
        & = \E \sum_{i=1}^n \frac{\that_{\max}}{\sigma_i\qty( \Zhj \Th {\Zhj}^\top  )}\stackrel{(ii)}{\le} \frac{\that_{\max}}{\that_{\min}^+} \cdot \E \sum_{i=1}^n \frac{1}{\sigma_i \qty(\Zhj {\Zhj}^\top)} \stackrel{(iii)}{\le} \frac{\that_{\max}}{\that_{\min}^+} \cdot \frac{n}{h-n-1}\stackrel{(iv)}{\le}  C(M)
    \end{aligned}
\end{equation*}
where $\sigma_i(\cdot)$ denotes the $i$-th largest eigenvalue. Here, we used at $(i)$ Von Neumann's trace inequality, at (ii) the variational representation of eigenvalues and the fact that $\eigbot(\Th)=\that_{\min}^+$, at (iii) moment property of inverted Wishart distribution (see e.g. \cite{kollo2005advanced}, Theorem 2.4.14.) and at (iv) the assumption that $1+M^{-1}<h/n<M$ along with \Cref{bdthat}. Using these bounds, an approach analogous to \eqref{higheventrick} yields that
\begin{equation*}
    \abs{\E_{\Xbfhj} B _{\mathcal{SC}} -(\mathcal{V}+1)\cdot \mathfrak{B} }\le C(M)\cdot n^{-1/7}\cdot \norm{\stj}_2^2, \quad \abs{\E_{\Xbfhj} V_{\mathsf{SC}} }\le \sigma^2\cdot C(M)\cdot n^{-1/7}.
\end{equation*}
The characterization result for $V _{\Xbf}$ and $V_{\Xbf, \epbf}$ follows from the above, the following relations
\begin{equation*}
    \E_{\Xbf } \BSC =B +V_{\Xbf} , \qquad  \E_{\Xbf } \VSC =V_{\Xbf,\epbf} +V_{\epbf} 
\end{equation*}
and a straightforward application of triangle inequality. This concludes the proof of \Cref{thm}.
\end{proof}

\subsection{Proof of \Cref{thm2}}\label{app:characterizesol}
We prove \Cref{thm2} in this section, using \Cref{Convexity} and \Cref{Equivalence} proved in the previous section. The proof draws extensive use of the Karush-Kuhn-Tucker (KKT) optimality conditions; see \cite{boyd2004convex}, Section 5 for a review. 

Recall from \eqref{pos} and \eqref{vsdef} the following definition
\begin{equation}
    h_1:=\abs{\{i: \vs_i \equiv \eta_i \cdot \ubf^\top_i \Bstar \Sigmabf_{\alphastar} {\Bstar}^\top \ubf_i \neq 0\}}, \quad \theta_i:=\ubf^\top_i \Bstar \Sigmabf_{\alphastar} {\Bstar}^\top \ubf_i, \quad i=1,\ldots,p.
\end{equation}
For convenience, we will introduce the following notation
\begin{equation*}
    \vs_i=\eta_i\cdot \theta_i, i=1,...,p.
\end{equation*}
Recall that we assumed without loss of generality that $\vs_i,i=1,\ldots,p$ is in descending order. It follows that
$$H=\qty{1,...,h}.$$

\subsubsection{Existence and Uniqueness of $\ho$}
We prove in this section that $\ho$ defined in \eqref{ineqho} indeed exists and is unique. 

\begin{proposition}\label{hodef}
Suppose that $n<h_1$. There exists a unique integer $\ho \in\{n,\ldots,h_1\}$ such that $\tilde{h}\le h_0$ if and only if
\begin{equation}
    \frac{1}{\hind-n} \sum_{i=1}^{\hind} \frac{\eta_{\hind}\theta_{\hind}}{\eta_i\theta_i}\ge 1.
\end{equation}
\end{proposition}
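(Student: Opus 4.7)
\textbf{Proof plan for Proposition~\ref{hodef}.}
The natural approach is to rephrase the condition $\frac{1}{\tilde h-n}\sum_{i=1}^{\tilde h}\frac{\eta_{\tilde h}\theta_{\tilde h}}{\eta_i\theta_i}\ge 1$ as the non-negativity of a single sequence and then show that sequence is monotone in $\tilde h$. Concretely, I would introduce
\begin{equation*}
    \Phi(\tilde h) \;:=\; \eta_{\tilde h}\theta_{\tilde h}\sum_{i=1}^{\tilde h}\frac{1}{\eta_i\theta_i} \;-\; (\tilde h - n), \qquad \tilde h\in\{n,\ldots,h_1\},
\end{equation*}
so that, for $\tilde h>n$, the displayed inequality is equivalent to $\Phi(\tilde h)\ge 0$ (and at $\tilde h=n$ one has $\Phi(n)=\eta_n\theta_n\sum_{i\le n}(\eta_i\theta_i)^{-1}>0$, consistent with viewing the ratio as $+\infty$).

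The key computation is the one-step difference
\begin{equation*}
    \Phi(\tilde h+1)-\Phi(\tilde h) \;=\; \bigl(\eta_{\tilde h+1}\theta_{\tilde h+1}-\eta_{\tilde h}\theta_{\tilde h}\bigr)\sum_{i=1}^{\tilde h}\frac{1}{\eta_i\theta_i},
\end{equation*}
where the extra term $\eta_{\tilde h+1}\theta_{\tilde h+1}\cdot(\eta_{\tilde h+1}\theta_{\tilde h+1})^{-1}=1$ produced by adding the $(\tilde h+1)$-th summand cancels the $-1$ coming from $-(\tilde h+1-n)+(\tilde h-n)$. Because we assume $\{\eta_i\theta_i\}$ is nonincreasing and strictly positive for $i\le h_1$, the prefactor is $\le 0$ and the sum is strictly positive, so $\Phi$ is nonincreasing on $\{n,\ldots,h_1\}$.

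Given this monotonicity, the set $S:=\{\tilde h\in\{n,\ldots,h_1\}:\Phi(\tilde h)\ge 0\}$ is an initial segment of $\{n,\ldots,h_1\}$: once $\Phi$ falls below zero it stays below zero. Since $\Phi(n)>0$ we have $n\in S$, so $S$ is non-empty and bounded above by $h_1$; I would then simply define $h_0:=\max S$. By construction $h_0$ is the unique integer in $\{n,\ldots,h_1\}$ such that $\tilde h\le h_0$ iff $\Phi(\tilde h)\ge 0$, which is the desired equivalence. The only mild subtlety I anticipate is handling the boundary value $\tilde h=n$, where the stated ratio form is formally $0/0$; this is resolved cleanly by working with $\Phi$ throughout and noting that $\Phi(n)>0$ corresponds to the natural limiting interpretation $f(n)=+\infty$.
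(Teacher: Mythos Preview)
Your proposal is correct and in fact cleaner than the paper's argument. The paper works with the normalized quantity
\[
\Upsilon(\tilde h)=\frac{1}{\tilde h}\sum_{i\le \tilde h}\frac{\eta_{\tilde h}\theta_{\tilde h}}{\eta_i\theta_i}+\frac{n}{\tilde h}-1=\frac{1}{\tilde h}\,\Phi(\tilde h),
\]
which has the same sign as your $\Phi$ but is \emph{not} monotone in $\tilde h$. Because of the extra $1/\tilde h$ factor, the paper needs two separate claims: Claim~1 shows $\Upsilon(\tilde h+1)\le \Upsilon(\tilde h)$ \emph{provided} $\Upsilon(\tilde h)\ge 0$, while Claim~2 shows $\Upsilon(\tilde h)<0\Rightarrow\Upsilon(\tilde h+1)<0$; together these force the sign pattern to be ``nonnegative then negative.'' By contrast, your choice of $\Phi$ removes the $1/\tilde h$ normalization, and the one-step difference telescopes exactly to $(\eta_{\tilde h+1}\theta_{\tilde h+1}-\eta_{\tilde h}\theta_{\tilde h})\sum_{i\le \tilde h}(\eta_i\theta_i)^{-1}\le 0$, giving unconditional monotonicity in a single line. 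So the two proofs are equivalent in spirit (same equivalence $\Upsilon\ge 0\Leftrightarrow\Phi\ge 0$, same handling of $\tilde h=n$), but your route is more elementary: it replaces the paper's case split with a direct monotonicity computation.
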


\begin{proof}[Proof of \Cref{hodef}]
Recall from the statement of \Cref{hodef} that $h_1>n$. Now let us state the following claim.  

\textbf{Claim 1.} If for some $\hind \in \qty{n,...,h_1-1}$
\begin{equation*}
    \frac{1}{\hind} \sum_{i\le \hind} \frac{\vs_{\hind}}{\vs_{i}} + \frac{n}{\hind} -1\ge 0,
\end{equation*}
we would have that
\begin{equation*}
    \frac{1}{\hind+1} \sum_{i\le \hind+1} \frac{\vs_{\hind+1}}{\vs_{i}} + \frac{n}{\hind+1} -1 \le \frac{1}{\hind} \sum_{i\le \hind} \frac{\vs_{\hind}}{\vs_{i}} + \frac{n}{\hind} -1.
\end{equation*}
To see claim 1, note that
\begin{equation*}
    \qty(\frac{1}{\hind+1} \sum_{i\le \hind+1} \frac{\vs_{\hind+1}}{\vs_{i}} + \frac{n}{\hind+1} -1 ) - \qty(\frac{1}{\hind} \sum_{i\le \hind} \frac{\vs_{\hind}}{\vs_{i}} + \frac{n}{\hind} -1)=\frac{1}{\hind+1} \qty(1-\frac{n}{\hind} - \qty(\vs_{\hind} +\frac{\vs_{\hind}}{\hind}-\vs_{\hind+1}) \cdot \sum_{i\le \hind} \frac{1}{\vs_i} ) 
\end{equation*}
and that, by $\vs_{\hind} \ge \vs_{\hind+1}$, 
\begin{equation*}
    \qty(\vs_{\hind} +\frac{\vs_{\hind}}{\hind}-\vs_{\hind+1}) \cdot \sum_{i\le \hind} \frac{1}{\vs_i}\ge \frac{\vs_{\hind}}{\hind} \cdot \sum_{i\le \hind} \frac{1}{\vs_i} \ge 1-\frac{n}{\hind}.
\end{equation*}
Combining the above proves Claim 1.

Now we state another claim.

\textbf{Claim 2.} If for some $\hind \in \qty{n,...,h_1-1}$
\begin{equation}\label{claim2cond}
    \frac{1}{\hind} \sum_{i\le \hind} \frac{\vs_{\hind}}{\vs_{i}} + \frac{n}{\hind} -1<0,
\end{equation}
we would have that
\begin{equation*}
    \frac{1}{\hind+1} \sum_{i\le \hind+1} \frac{\vs_{\hind+1}}{\vs_{i}} + \frac{n}{\hind+1} -1<0.
\end{equation*}
To see claim 2, note that
    \begin{equation*}
    \begin{aligned}
        \frac{1}{\hind+1} \sum_{i\le \hind+1} \frac{\vs_{\hind+1}}{\vs_{i}} + \frac{n}{\hind+1} -1
        &= \frac{\hind}{\hind+1} \qty(\frac{\vs_{\hind+1}}{\vs_{\hind}}\cdot \qty( \frac{\vs_{\hind}}{\hind} \sum_{i\le \hind} \frac{1}{\vs_{i}} ) -\qty(1-\frac{n}{\hind}))\\
        &=\frac{\hind}{\hind+1} \cdot \qty(\frac{\vs_{\hind+1}}{\vs_{\hind}}\cdot \qty(\frac{1}{\hind} \sum_{i\le \hind} \frac{\vs_{\hind}}{\vs_{i}} + \frac{n}{\hind} -1) - \qty(1-\frac{\vs_{\hind+1}}{\vs_{\hind}}) \qty(1-\frac{n}{\hind}))\\
        & <0
\end{aligned}
\end{equation*}
where we used \eqref{claim2cond} and $\vs_{\hind} \ge \vs_{\hind+1}$ in the last line. This proves Claim 2.

Now let us denote
\begin{equation*}
    \Upsilon(\hind):=\frac{1}{\hind} \sum_{i\le \hind} \frac{\vs_{\hind}}{\vs_{i}} + \frac{n}{\hind} -1, \quad \hind=n,...,h.
\end{equation*}
We have the initial condition that $\Upsilon(n)>0$. If we have the terminal condition that $\Upsilon(h)\ge 0$, it follows from Claim 1 that $\Upsilon(\hind)\ge 0,\forall \hind \in \{n,...,h\}$. If we have the terminal condition that $\Upsilon(h_1)< 0$, combination of Claim 1 and Claim 2 then show that $\Upsilon(\hind)$ decreases monotonically as $\hind$ increases from $n$ to $h_1$. This implies that there exists a unique $\ho$ such that for all $\hind \le \ho$, $\Upsilon(\hind)\le 0$ and for all $\hind > \ho, \Upsilon(\hind)< 0$. We conclude the proof by noting the equivalence
\begin{equation*}
    \Upsilon(\hind)\ge 0 \iff \frac{1}{\hind-n} \sum_{i=1}^{\hind} \frac{\vs_{\hind}}{\vs_i}\ge 1, \quad \hind=n,...,h_1.
\end{equation*}

\end{proof}

\subsubsection{Analysis of KKT condition}
\begin{proof}[Proof of \Cref{thm2}]
We will first prove optimization result for $\mathcal{V}$ and then optimization result for $\mathfrak{B}$.

\textbf{Optimize $\mathcal{V}$.} We first consider the following optimization problem
\begin{equation}\label{optpf}
    \begin{aligned}
        &\min_{\xH \in [0,1]^h} \tilde{\mathcal{V}}=\frac{2n-h+\sum_{i\in H} x_i^2}{h-n-\sum_{i\in H} x_i^2} \\ 
        &\quad \mathrm{subject \; to\;} \frac{1}{h}\sum_{i\in H} x_i =1-\frac{n}{h}
    \end{aligned}
\end{equation}
for $\tilde{\mathcal{V}}$ defined in \eqref{deftildeA}. Recall from \Cref{Convexity} and its proof that $\tilde{\mathcal{V}}=g\qty(\sum_{i\in H} x_i)$ where $g$ is a strictly increasing convex function on the range of $\sum_{i\in H} x_i$. It is therefore sufficient to consider the convex optimization problem of minimizing $\frac{1}{2h} \sum_{i\in H} x_i^2$ under the linear constraint $ \frac{1}{h}\sum_{i\in H} x_i =1-\frac{n}{h}$. The corresponding Lagrangian is  
\begin{equation*}
    \frac{1}{2h} \sum_{i\in H} x_i^2 + \rho \cdot \qty(\frac{1}{h}\sum_{i\in H} x_i -1+\frac{n}{h}). 
\end{equation*}
where $\rho \in \R$ is the Lagrange multiplier. Minimizing the Lagrangian yields the optimal solution $x_i=1-n/h, \forall i \in H$. The result then follows from \Cref{Equivalence}. 

\textbf{Optimize $\mathfrak{B}^{\mathsf{avg}}$.}  We first consider the case where $h_1>n$. 

\textbf{Case 1: $h_1>n$.} 
Consider the following optimization problem
\begin{equation}
    \begin{aligned}
        &\min_{\xH \in [0,1]^h} \tilde{\mathfrak{B}}^{\mathsf{avg}}=\frac{1}{q}\sum_{i\in H} \vs_i  x_i^2 \\ 
        &\quad \mathrm{subject \; to\;} \frac{1}{h}\sum_{i\in H} x_i =1-\frac{n}{h}.
    \end{aligned}
\end{equation}
To find the optimizer, we may consider the Lagrangian
\begin{equation*}
    \frac{1}{2} \sum_{i\in H} \vs_i \cdot x_i^2 +\rho \qty(\frac{1}{h} \sum_{i\in H} x_i -1+\frac{n}{h}) + \sum_{i\in H} \rzero \cdot (-x_i) +\sum_{i\in H} \rone (x_i-1)
\end{equation*}
where $\rho, \qty{\rzero}_{i\in H}, \qty{\rone}_{i\in H}$ are KKT multipliers. The associated KKT conditions are then
\begin{itemize}
    \item \textbf{(Stationarity)} $\vs_i x_i +\rho/h+\rzero-\rone=0,\forall i \in H$
    \item \textbf{(Primal feasibility)} $h^{-1} \sum_{i\in H} x_i = 1-n/h, x_i \in [0,1],\forall i \in H$
    \item \textbf{(Dual feasibility)} $\rzero, \rone \ge 0, \forall i \in H$
    \item \textbf{(Complementary slackness)} $\rzero x_i=0, \rone (x_i-1)=0, \forall i \in H$
\end{itemize}

The stationarity condition holds if and only if
\begin{equation}\label{exx}
    x_i = \frac{\rzero-\rone}{\vs_i} - \frac{\rho}{h \vs_i}, \;\; \forall i \le h_1, \qquad \rzero =\frac{1}{h} \rho+\rone,\;\; \forall i > h_1.
\end{equation}
The primal feasibility condition holds if and only if for all $i\in H$,  $x_i \in [0,1]$ and
\begin{equation}\label{exrho}
    \frac{1}{h} \sum_{i\le h_1} \frac{\rzero-\rone}{\vs_i} - \frac{1}{h} \sum_{i\le h_1}  \frac{\rho}{h \vs_i}+\frac{1}{h}\sum_{h_1<i\le h} x_i=1-\frac{n}{h} \iff -\frac{\rho}{h}=\frac{1-\frac{n+\sum_{h_1<i\le h} x_i}{h} + \frac{1}{h} \sum_{i\le h_1} \frac{\rzero-\rone}{\vs_i}}{\frac{1}{h} \sum_{i\le h_1}  \frac{\rho}{ \vs_i}}.
\end{equation}
Note that \eqref{exx} and \eqref{exrho} above express $\qty{x_i}_{i=1}^{h_1},\rho$ in terms of $\qty{\rzero}_{i\in H},\qty{\rone}_{i\in H}$ and $\qty{x_i}_{i=h_1+1}^{h}$ such that for any choice of $\qty{x_i}_{i=h_1+1}^{h}$ and any choice of $\rzero,\rone$ that satisfies
\begin{equation}\label{stationaryEQ}
    \rzero =\frac{1}{h} \rho+\rone, \forall i > h_1,
\end{equation}
the stationarity condition and $h^{-1} \sum_{i\in H} x_i = 1-n/h,\forall i \in H$ in the primal feasibility condition will hold. Now we claim that if we choose $\qty{x_i}_{i=h_1+1}^{h}$ and KKT multipliers as
\begin{equation}\label{multchoice}
 x_i=1,\;\; \forall i\in \{h_1+1,...,h\}, \qquad \rzero=0,\;\; \forall i\in H, \qquad \rone=\left\{\begin{array}{c}
0, \quad \forall i \leq \ho, \\
-\vs_i+\frac{1-\frac{n}{\ho}}{\frac{1}{\ho} \sum_{j \le \ho} \frac{1}{\vs_j}}, \quad \forall i\in \{\ho+1,...,h\}
\end{array}\right.
\end{equation}
for $\ho \in \{n,...,h_1\}$ defined in \Cref{hodef}, then the KKT conditions hold. 

We first show that the complementary slackness and stationarity condition hold. Plugging the choice of $\qty{x_i}_{i=h_1+1}^{h}$ and the KKT multipliers from \eqref{multchoice} into \eqref{exrho}, we obtain that
\begin{equation}\label{exrhoplut}
    -\frac{\rho}{h} =\frac{1-\frac{n}{\ho}}{\frac{1}{\ho} \sum_{j\le \ho} \frac{1}{\vs_j}}.
\end{equation}
We see that plugging $\rho$ from \eqref{exrhoplut} and $\qty{\rzero}_{i\in H},\qty{\rone}_{i\in H}, \qty{x_i}_{i=h_1+1}^{h}$ from \eqref{multchoice} into the expression of $x_i$ in \eqref{exx} yields
\begin{equation}\label{xidef}
x_i=\left\{\begin{array}{c}
\frac{1-\frac{n}{\ho}}{\frac{1}{\ho} \sum_{j\le \ho} \frac{\vs_i}{\vs_j}}, \quad \forall i \in \qty{1,...,\ho}, \\
1, \quad \forall i \in \qty{\ho+1,...,h}
\end{array}\right..
\end{equation}
It is now clear that the complementary slackness condition holds. Meanwhile, it is easy to see that \eqref{stationaryEQ} holds, given that $\vs_i=0,\forall i \in \qty{\ho+1,...,h}$

Now we show that $x_i \in [0,1],\forall i \in H$ in the primal feasibility. This holds for all $i \in \{\ho+1,...,h\}$ since $x_i=1$ and for $i \in \{1,...,\ho\}$, we have
\begin{equation*}
    \frac{1}{\ho} \sum_{j\le \ho} \frac{\vs_i}{\vs_j} \ge \frac{1}{\ho} \sum_{j\le \ho} \frac{\vs_{\ho}}{\vs_j}\ge 1-\frac{n}{\ho}
\end{equation*}
where we used $\vs_i \ge \vs_{\ho}, \forall i \in \{1,...,\ho \}$  in the first inequality and \Cref{hodef} in the second inequality. This implies that $x_i \in [0,1],\forall i \in \{1,...,\ho\}$. We may then conclude that the primal feasibility condition holds. 

Finally, we prove that the dual feasibility condition holds. Note that we already have $\rzero=0,\forall i \in H$ and $\rone=0,\forall i \in \{1,...,\ho\}$. We only need to show that $\rone \ge 0.\forall i \in \{\ho+1,...,h\}$. To see this, note that for all $i \in \{\ho,...,h\}$, we have that 
\begin{equation*}
    \begin{aligned}
        \frac{\vs_{i}}{\ho} \sum_{j\le \ho} \frac{1}{\vs_{j}} &\le \frac{\ho+1}{\ho} \frac{\vs_{\ho+1}}{\ho+1} \sum_{j\le \ho} \frac{1}{\vs_{j}}\\ &= \frac{\ho+1}{\ho} \cdot \frac{\vs_{\ho+1}}{\ho+1} \cdot \qty( \sum_{j\le \ho} \frac{1}{\vs_j}+\frac{1}{\vs_{\ho+1}} ) -\frac{1}{\ho} \\&<\frac{\ho+1}{\ho}\cdot \qty(1-\frac{n}{\ho+1})-\frac{1}{\ho} \\&=1-\frac{n}{\ho}
    \end{aligned}
\end{equation*}
where we used $\vs_i \le \vs_{\ho}, \forall i \in \{\ho+1,...,h \}$ in the first inequality and \Cref{hodef} in the second inequality. The above then implies that 
$$\rone=-\vs_i+\frac{1-\frac{n}{\ho}}{\frac{1}{\ho} \sum_{j \le \ho} \frac{1}{\vs_j}}>0, \quad \forall i \in \{\ho+1,...,h \}$$
as required. This proves that the dual feasibility condition holds and therefore the claim. 

We have shown above that for $\qty{x_i}_{i\in H}$ defined in \eqref{xidef} and KKT multipliers defined in \eqref{multchoice}, the KKT conditions hold. Note that the objective of \eqref{optpf} is differentiable and recall from \Cref{Convexity} that it is also convex. Furthermore, we have proved $\qty{x_i}_{i\in H}$ defined in \eqref{xidef} satisfies the constraint of \eqref{optpf}; this along with the fact that all constraints of \eqref{optpf} are linear implies that the Slater's condition holds (see \cite{boyd2004convex} Section 5.2.3 for a review of the Slater's condition). Given the above, we know from \cite{boyd2004convex} Section 5.5.3 that KKT conditions are sufficient for optimality. It follows that $\qty{x_i}_{i\in H}$ defined in \eqref{xidef} is the optimal solution of \eqref{optpf}. The optimal choice of $\qty{\rhat_i}$ then follows \Cref{Equivalence}.

We now consider the case where $h_1<n$. We adopt a direct approach of optimizing $\mathfrak{B}^{\mathsf{avg}}$; a similar approach to Case 1 optimizing \eqref{optpf} using KKT condition yields the same result. 

\textbf{Case 2: $h_1\le n$.} We first discuss the case $h_1<n$. Let us consider the following choice of $\qty{\rhat_i}_{i\in H}$
    \begin{equation}\label{finite}
        \rhat_i = \begin{cases}
             c_i  & \text{for } i\leq h_1 \\
            \frac{\eta_i}{r} & \text{for } h_1<i\leq h \\
             \text{any value} & \text{for } i> h 
        \end{cases},
    \end{equation}
for $r>0$ and $c_i\in [0, +\infty)$. With this choice, it follows from \eqref{fp} that
\begin{equation}\label{fpdiff}
    h-n=\sum_{i\le h_1} \frac{c_i}{\eta_i b_0}+\frac{h-h_1}{1+r b_0}.
\end{equation}
From this we obtain a lower bound on $b_0$
\begin{equation*}
    h-n-\frac{h-h_1}{1+rb_0}=\sum_{i\le h_1} \frac{c_i}{ \eta_i b_0}\ge 0 \implies b_0 \ge \frac{1}{r} \frac{n-h_1}{h-n}.
\end{equation*}
Using \eqref{simplifedAs}, we then have
\begin{equation*}
    \mathfrak{B}^{\mathsf{avg}}=\frac{1}{q}\sum_{i\in H} \frac{\vs_i}{\qty(1+\frac{\eta_i}{\rhat_i} b_0)^2}=\frac{1}{q} \sum_{i\le h_1} \frac{c_i^2\vs_i}{\qty(c_i+\eta_i b_0)^2}\le \frac{1}{q} \sum_{i\le h_1} \frac{c_i^2\vs_i r^2}{\qty(c_ir+\eta_i \frac{n-h_1}{h-n})^2}.
\end{equation*}
where we used lower bound on $b_0$ at the last inequality. Note that $\mathfrak{B}^{\mathsf{avg}}\to 0$ as we take $r\to 0$. Since $\mathfrak{B}^{\mathsf{avg}}$ is a non-negative quantity, $\qty{\rhat_i}_{i\in H}$ in \eqref{PCRest}, which is the limit of $\qty{\rhat_i}_{i\in H}$ in \eqref{finite} as $r\to 0$, must be optimal. 

For the case where $h_1=n$, we may obtain from \eqref{fpdiff} that
\begin{equation*}
b_0=\frac{1}{2}\left(\sqrt{\left(\frac{1}{h-n} \sum_{i \leq n} \frac{c_i}{\eta_i}\right)} \sqrt{\left(\frac{1}{h-n} \sum_{i \leq n} \frac{c_i}{\eta_i}\right)+\frac{4}{r}}+\frac{1}{h-n} \sum_{i \leq n} \frac{c_i}{\eta_i}\right).
\end{equation*}
which diverges to $+\infty$ as we take $r\to 0$. We thus have
\begin{equation*}
    \mathfrak{B}^{\mathsf{avg}}=\frac{1}{q}\sum_{i\in H} \frac{\vs_i}{\qty(1+\frac{\eta_i}{\rhat_i} b_0)^2}=\frac{1}{q} \sum_{i\le h_1} \frac{c_i^2\vs_i}{\qty(c_i+\eta_i b_0)^2}
\end{equation*}
goes to $0$ as we take $r\to 0$ as $h_1<n$ case. This concludes the proof.

\end{proof}

\Cref{propgenA} below justifies the statement of \Cref{generalA1}.
\begin{proposition}[Fully optimized $\Bhat$ w.r.t. $\mathcal{V}$]\label{propgenA}
     Suppose $h>n$. $\mathcal{V}$ is minimized by
    \begin{equation}\label{eq_inv_align-1}
        \qbf_i=\ubf_i,\forall i \in \{1,...,p\}, \qquad \rhat_i = \begin{cases}
             c\eta_i & \text{for } i\leq h \\
            \text{any value} & \text{for } i> h 
        \end{cases},
    \end{equation}
    for any $c>0$, to the optimal value $(h/n-1)^{-1}$.
\end{proposition}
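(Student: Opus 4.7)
The plan is to reduce the fully general problem to the spectrum-only problem already solved in \Cref{thm2}, by exploiting that $\mathcal{V}$ depends on $(\Bhat, \bm{\lambda}, \Sigmabf)$ only through the positive eigenvalues $\{\that_i\}_{i \in H}$ of $\Gammahat^{-1/2}\Sigmabf\Gammahat^{-1/2}$. Inspection of the formula \eqref{A1A2} for $\mathcal{V}$ together with the fixed point equation \eqref{fp} for $b_0$ makes this dependence transparent. Moreover, since $\Gammahat^{-1/2}$ is nonsingular under \Cref{Assum}, $|H|$ always equals $h = \mathrm{rank}(\Sigmabf)$, independent of $\Bhat$.

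First I would observe that in the aligned parameterization $\qbf_i = \ubf_i$ one has
\[
    \Gammahat^{-1/2}\Sigmabf\Gammahat^{-1/2} \;=\; \sum_{i \in H} \frac{\eta_i}{\rhat_i}\, \ubf_i \ubf_i^\top,
\]
so the nonzero eigenvalues are exactly $\{\eta_i/\rhat_i\}_{i \in H}$. Using the strict monotonicity and continuity of $r(\cdot,\bm{\lambda})$ from \Cref{rire} together with an appropriate choice of $\bm{\lambda}$, the $\rhat_i$ may be prescribed to equal any positive values; hence the aligned case already realizes every positive eigenvalue multiset that could possibly arise from any general $(\Bhat, \bm{\lambda})$. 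Since $\mathcal{V}$ depends only on this multiset, and since every $\that_i$ produced by a general $(\Bhat,\bm{\lambda})$ is positive, it follows that
\[
    \min_{\Bhat,\bm{\lambda}} \mathcal{V}(\Bhat,\bm{\lambda}) \;=\; \min_{\text{aligned}\ \{\rhat_i\}} \mathcal{V},
\]
and the right-hand side equals $(h/n-1)^{-1}$ by the $\mathcal{V}$-part of \Cref{thm2}.

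Finally I would verify by direct substitution that the choice $\rhat_i = c\eta_i$ attains this optimum. Under this choice, $\that_i = 1/c$ for every $i \in H$, so \eqref{fp} collapses to $1-n/h = 1/(1+b_0/c)$, yielding $b_0 = nc/(h-n)$ and consequently $\that_i b_0 = n/(h-n)$ uniformly over $i \in H$. Plugging into \eqref{A1A2} reduces both the numerator and denominator to constants times $h$, leaving $\mathcal{V} = n/(h-n) = (h/n-1)^{-1}$. The values of $\rhat_i$ for $i>h$ do not enter the expression since $\eta_i = 0$ there. The main obstacle, though a minor one, is verifying that $\rhat_i = c\eta_i$ is simultaneously realizable within \Cref{Assum} by some $(\bm{\lambda}, \{\dhat_i^2\})$; this follows by choosing $c$ so that $\{c\eta_i\}_{i \in H}$ lies in a compact subinterval of $[\lambdab, \lambdab+\lambdaB]$ for suitable $\lambdab, \lambdaB$, and then invoking the bijectivity asserted in \Cref{rire} to extract the required $\dhat_i^2$.
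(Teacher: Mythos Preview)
Your proof is correct and follows essentially the same strategy as the paper's: both recognize that $\mathcal{V}$ depends on $(\Bhat,\bm{\lambda})$ only through the multiset $\{\that_i\}_{i\in H}$, both reduce to the spectrum-only problem, and both finish by directly verifying that the candidate $\rhat_i=c\eta_i$ yields $\that_i b_0 = n/(h-n)$ uniformly and hence $\mathcal{V}=(h/n-1)^{-1}$.

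The only cosmetic difference is in how the lower bound is argued. The paper works with the change of variables $x_i = 1/(1+\that_ib_0)$ directly---noting that for \emph{any} $(\Bhat,\bm{\lambda})$ the resulting $x_i$'s satisfy the constraints $x_i\in[0,1]$ and $\sum_{i\in H}x_i = h-n$, so the optimum of the constrained problem (already found in the proof of \Cref{thm2}) is automatically a lower bound on $\mathcal{V}$. You instead argue that the aligned parameterization can realize every positive eigenvalue multiset, hence the general and aligned infima coincide, and then cite \Cref{thm2}. Both routes are valid; the paper's is marginally more self-contained since it avoids the realizability discussion you flag at the end.
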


\begin{proof}[Proof of \Cref{propgenA}]
    Recall from the proof of \Cref{bA1A2size} in \Cref{app:riskchar} that given the change of variable
    $$x_i \gets \frac{1}{1+\that_i b_0},\forall i\in H$$
    we have
    \begin{equation}\label{optA1gen}
        \mathcal{V}:=\frac{\sum_{i\in H} \frac{(\that_i b_0)^2}{(1+\that_i b_0)^2}}{\sum_{i\in H} \frac{\that_i b_0}{(1+\that_i b_0)^2}}=\frac{\frac{2 n}{h}-1+\frac{1}{h} \sum_i x_i^2}{1-\frac{n}{h}-\frac{1}{h} \sum_i x_i^2}
    \end{equation}
    where $\qty{x_i}_{i\in H}$ is subjected to the constraints
    \begin{equation*}
        \frac{1}{h}\sum_{i\in H} x_i = 1-\frac{n}{h}, \qquad x_i \in [0,1],\forall i\in H.
    \end{equation*}
    This is precisely the optimization problem \eqref{optpf} we studied in the proof of \Cref{thm2} where we obtained the the optimal solution is $x_i=1-n/h, \forall i \in H$. However, since we no longer assumes \eqref{matchbase}, we cannot apply \Cref{Equivalence}. Instead, we observe the candidate solution
    $$\Gammahat\gets \sum_{i\in H} c\cdot \eta_i \cdot \ubf_i \ubf_i^\top + \sum_{i\in H^c} \rhat_i \cdot \ubf_i \ubf_i^\top$$
    for any $c>0,\rhat_i\neq 0$ satisfies that
    \begin{equation*}
    \Gammahat^{-1/2} \Sigmabf \Gammahat^{-1/2}=\sum_{i\in H}  c^{-1}\what_i \what_i^\top.
    \end{equation*}
    This implies that $\that_i = c^{-1}, \forall i \in H$. We may plug them into the fixed point equation \eqref{fp} to solve for $b_0$. We obtain
    \begin{equation*}
        b_0 =\frac{n}{h-n}\cdot c
    \end{equation*}
    which implies that $x_i=\frac{1}{1+\that_i b_0},\forall i\in H$. In short, choosing $\qbf_i \gets \ubf_i,\forall i$ as in \eqref{matchbase} and $\rhat_i\gets c\eta_i,\forall i$ as in \Cref{thm2} gives the optimal solution for \eqref{optA1gen}. This concludes the proof.

\end{proof}

\section{Discussion and Future Directions}
We find that despite its simplicity, our simple, exactly solvable model yields new insights regarding optimal pretraining in transfer learning. We discover the relative importance of learning shared structure in the features and covariates for controlling the downstream performance of any estimator, with their behavior even showing a phase transition.

In future work, we plan to extend our model to consider distribution shift in the covariates for different downstream tasks. This requires introducing a hierarchical model for the distribution shift, which was outside the scope of this work---although we expect our method of analysis to carry over. Another direction is to extend the results from linear models to random feature regression to make a tighter connection to neural networks.

\bibliographystyle{plain}
\bibliography{ref}

\appendix

\section{Proof of \Cref{explicitsol}}\label{app:explicit}
In this section, we prove \Cref{explicitsol}, which provides explicit form ${\alphahatbf} $ and ${\betahatbf} $.

\begin{proof}[Proof of \Cref{explicitsol}]
We first find $\alphabf\gets {\alphahatbf} (\betabf)$ that minimizes $\Loss (\betabf, \alphabf)$ in \eqref{loss} for a fixed $\betabf$. Dropping terms in $\Loss (\betabf, \alphabf)$  that does not involve $\alphabf$, we obtain that
\begin{equation}\label{alphapf}
    {\alphahatbf} (\betabf)=\argmin_{\alphabf} \Loss (\betabf, \alphabf)=\argmin_{\alphabf} \lambdaB\norm{\betabf-\Bhat\alphabf}_2^2+\lambdaa \norm{\alphabf}_2^2=\qty(\Bhat^\top \Bhat + \frac{2 \lambdaa}{\lambdaB} \Ibf)^{-1}\Bhat^\top \betabf
\end{equation}
where the last equality follows from standard ordinary least squares (OLS) formulas. We may then plug $\alphabf\gets {\alphahatbf} (\betabf)$ into $\Loss (\betabf, \alphabf)$ and obtain
\begin{equation*}
    \begin{aligned}
        \Loss (\betabf, {\alphahatbf} (\betabf)) &=\norm{\ybf -\Xbf  \betabf}_2^2+\lambdazero\cdot\bigg(\lambdaB \norm{\betabf-\Bhat {\alphahatbf}  (\betabf)}_2^2+\lambdaa \norm{{\alphahatbf}  (\betabf)}_2^2+\lambdab\norm{\betabf}_2^2 \bigg)\\
        &=\norm{\ybf -\Xbf  \betabf}_2^2+\lambdazero\cdot\bigg(\lambdaB \norm{\qty(\Ibf-\Bhat \qty(\Bhat^\top \Bhat + \frac{2 \lambdaa}{\lambdaB})^{-1}\Bhat^\top \betabf)\betabf}_2^2\\&\qquad \qquad \qquad \qquad \qquad \qquad  +\lambdaa \norm{\qty(\Bhat^\top \Bhat + \frac{2 \lambdaa}{\lambdaB})^{-1}\Bhat^\top \betabf}_2^2+\lambdab\norm{\betabf}_2^2 \bigg)\\
        &=\norm{\ybf -\Xbf  \betabf}_2^2+\lambdazero\cdot \betabf^\top \Gammahat \betabf
    \end{aligned}
\end{equation*}
where
\begin{equation}\label{GammaBhatform}
    \Gammahat=\lambdaB\cdot \qty(\Ibf-\Bhat \qty(\Bhat^\top \Bhat + \frac{2 \lambdaa}{\lambdaB})^{-1}\Bhat^\top)^2+\lambdaa \cdot \Bhat \qty (\Bhat^\top \Bhat +\frac{2\lambdaa}{\lambdaB}\cdot \Ibf)^{-2}\Bhat^\top + \lambdab\cdot \Ibf.
\end{equation}
It then follows from standard OLS formulas that
\begin{equation*}
    \betahatbf =\qty({\Xbf }^\top\Xbf +\lambdazero\cdot \Gammahat)^{-1} {\Xbf }^\top \ybf=\Gammahat^{1/2} \qty(\Gammahat^{-1/2}{\Xbf }^\top\Xbf  \Gammahat^{-1/2}+\lambdazero\cdot \Ibf)^{-1} \Gammahat^{-1/2} {\Xbf }^\top \ybf.
\end{equation*}
Using a well-known result of pseudo-inverse (see e.g. \cite{ward1977limit}, Section 1), we obtain that
\begin{equation}\label{betapf}
    \lim_{\lambda_0 \to 0} \betahatbf =\Gammahat^{1/2} \qty(\Gammahat^{-1/2}{\Xbf }^\top\Xbf  \Gammahat^{-1/2})^+ \Gammahat^{-1/2} {\Xbf }^\top \ybf=\Gammahat^{-1}{\Xbf }^{\top}\qty({\Xbf } \Gammahat^{-1} {\Xbf }^\top)^{+}\ybf  
\end{equation}
where the second equality follows from well-known identity of pseudo-inverse \cite{golub2013matrix}, Section 5.5.2. The claim \eqref{sol} then follows from \eqref{alphapf} and \eqref{betapf}. 

It remains to show $\Gammahat=\Qhat^\top \Lambdahat \Qhat$ for $\Lambdahat$ defined in \eqref{ri}. Let us first plug in SVD representation $\Bhat=\Qhat^\top \Dhat \Ohat$ into \eqref{GammaBhatform} where we recall that $\Qhat \in \R^{p\times p}, \Ohat \in \R^{k \times k}$ are orthogonal matrices and $\Dhat \in \R^{p\times k}$ is diagonal matrix. We obtain that
\begin{equation*}
    \Gammahat=\Qhat^\top \underbrace{\qty(\lambdaB\cdot \qty(\Ibf-\Dhat \qty(\Dhat^\top \Dhat+\frac{2\lambdaa}{\lambdaB}\cdot \Ibf)^{-1}\Dhat^\top)^2+\lambdaa\cdot \Dhat \qty(\Dhat^\top \Dhat+\frac{2\lambdaa}{\lambdaB}\cdot \Ibf)^{-2} \Dhat^\top+\lambdab\cdot \Ibf)}_{=:\Lambdahat} \Qhat.
\end{equation*}
Note that $\Lambdahat$ contains matrix quantities
\begin{equation*}
    \Dhat \qty(\Dhat^\top \Dhat+\frac{2\lambdaa}{\lambdaB}\cdot \Ibf)^{-1}\Dhat^\top,\qquad \Dhat \qty(\Dhat^\top \Dhat+\frac{2\lambdaa}{\lambdaB}\cdot \Ibf)^{-2}\Dhat^\top
\end{equation*}
Straightforward algebra manipulation shows that both are diagonal with diagonal entries as follows: for $r\in \{1,2\}$,
\begin{equation*}
    \qty(\Dhat \qty(\Dhat^\top \Dhat+\frac{2\lambdaa}{\lambdaB}\cdot \Ibf)^{-r}\Dhat^\top)_{ii}=\frac{\dhat_i^2}{\qty(\dhat_i^2+\frac{2\lambdaa}{\lambdaB})^r}, \quad i=1,\ldots,p.
\end{equation*}
where we recall $\dhat_i:=\qty(\Dhat \bm{1}_{k\times 1})_{i}, i=1,\ldots,p.$ It follows that $\Lambdahat$ is also diagonal with diagonal entries given by \eqref{ri}. This concludes the proof. 
\end{proof}

\section{Downstream Risk Concentration}\label{app:downstreamrisk}
We prove \Cref{concenprop} in this section. Our result is based on the Hanson-Wright inequality (see e.g. \cite{vershyninhanson}) which we recall for reader's convenience.
\begin{lemma}[Hanson-Wright Inequality]\label{Hanson}
    Let $\xibf$ be some random vector satisfying conditions in \Cref{conclemma} and $\mathbf{A}$ be any $n\times n$ real-valued matrix. There exists an absolute constant $c>0$ such that for any $x>0$, with probability larger than $1-\exp(-x)$, we have
    \begin{equation*}
        \xibf^\top \mathbf{A} \xibf-\mathbb{E}\qty(\xibf^\top \mathbf{A} \xibf) \leq c M^2\|\mathbf{A}\|_\mathrm{op} x+c M^2\|\mathbf{A}\|_{F} \sqrt{x}.
    \end{equation*}
    where $\norm{\cdot}_F$ denotes Frobenius norm. 
\end{lemma}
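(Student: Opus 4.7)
The plan is to follow the classical decoupling-plus-moment-generating-function route, split the quadratic form into diagonal and off-diagonal contributions, and bound each. Specifically, write $\mathbf{A} = \mathbf{D} + \mathbf{O}$, where $\mathbf{D}$ is the diagonal part and $\mathbf{O}$ is the off-diagonal part. Then the centered quadratic form splits as
\begin{equation*}
\xibf^\top \mathbf{A} \xibf - \E \xibf^\top \mathbf{A} \xibf \;=\; \underbrace{\sum_{i} \mathbf{A}_{ii}(\xi_i^2 - \E \xi_i^2)}_{=:S_D} \;+\; \underbrace{\sum_{i\neq j} \mathbf{O}_{ij}\xi_i \xi_j}_{=:S_O},
\end{equation*}
so by the triangle inequality and a union bound over the two parts it suffices to establish a Bernstein-type tail bound of the stated form for $S_D$ and for $S_O$ separately.

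For the diagonal piece $S_D$, the key observation is that $\xi_i^2 - \E\xi_i^2$ is a centered sub-exponential random variable whose sub-exponential norm is controlled by $M^2$ (squaring a sub-Gaussian yields a sub-exponential, with norm bounded by a constant multiple of the square of the sub-Gaussian norm). Since the $\xi_i$ are independent, $S_D$ is a sum of independent centered sub-exponentials with parameters proportional to $M^2 |\mathbf{A}_{ii}|$. A standard Bernstein inequality then yields, for some absolute constant $c>0$ and any $x>0$, the bound
\begin{equation*}
S_D \;\le\; c M^2 \max_i |\mathbf{A}_{ii}|\cdot x \;+\; c M^2 \sqrt{\sum_i \mathbf{A}_{ii}^2}\cdot \sqrt{x}
\end{equation*}
with probability at least $1 - \tfrac{1}{2}\exp(-x)$, which is bounded by the desired right-hand side since $\max_i|\mathbf{A}_{ii}|\le \|\mathbf{A}\|_\mathrm{op}$ and $\sqrt{\sum_i \mathbf{A}_{ii}^2}\le \|\mathbf{A}\|_F$.

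For the off-diagonal piece $S_O$, I would use a decoupling argument followed by a conditional sub-Gaussian estimate. Let $\xibf'$ be an independent copy of $\xibf$; by a standard decoupling inequality for quadratic forms in independent symmetric random variables, it suffices (up to an absolute constant in the MGF) to control $\xibf^\top \mathbf{O} \xibf'$. Condition on $\xibf'$: then $\xibf^\top(\mathbf{O}\xibf')$ is a weighted sum of independent sub-Gaussians, so is sub-Gaussian with variance proxy $\le c M^2 \|\mathbf{O}\xibf'\|_2^2$. Taking expectation over $\xibf$ gives
\begin{equation*}
\E\bigl[e^{\lambda \xibf^\top \mathbf{O}\xibf'}\,\big|\,\xibf'\bigr] \;\le\; \exp\!\left(c\lambda^2 M^2\, {\xibf'}^\top \mathbf{O}^\top \mathbf{O} \xibf'\right).
\end{equation*}
Now iterate the Hanson–Wright-style control on ${\xibf'}^\top \mathbf{O}^\top \mathbf{O}\xibf'$, using $\|\mathbf{O}^\top \mathbf{O}\|_\mathrm{op}=\|\mathbf{O}\|_\mathrm{op}^2$ and $\operatorname{tr}(\mathbf{O}^\top \mathbf{O})=\|\mathbf{O}\|_F^2$, to take expectation in $\xibf'$ and obtain an MGF bound of the form
\begin{equation*}
\E e^{\lambda S_O}\;\le\;\exp\!\left(c\lambda^2 M^4 \|\mathbf{O}\|_F^2\right) \quad \text{for } |\lambda|\le \frac{c'}{M^2\|\mathbf{O}\|_\mathrm{op}},
\end{equation*}
which is the standard sub-exponential MGF condition. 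A Chernoff argument, optimizing over $\lambda$ in the two regimes $x \le \|\mathbf{O}\|_F^2/\|\mathbf{O}\|_\mathrm{op}^2$ and $x$ larger, converts this into the desired two-term tail bound. Combining with the diagonal bound and using $\|\mathbf{O}\|_\mathrm{op}\le 2\|\mathbf{A}\|_\mathrm{op}$, $\|\mathbf{O}\|_F\le \|\mathbf{A}\|_F$ completes the proof.

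The main obstacle is the MGF-integration step in the off-diagonal bound: conditional on $\xibf'$ one easily gets a sub-Gaussian MGF, but taking expectation in $\xibf'$ requires controlling $\E\exp(c\lambda^2 M^2 \xibf'^\top \mathbf{O}^\top\mathbf{O}\xibf')$, which is itself an exponential of a quadratic form. The cleanest way is to note that for $|\lambda|$ sufficiently small (proportional to $1/(M^2\|\mathbf{O}\|_\mathrm{op})$) the matrix $c\lambda^2 M^2 \mathbf{O}^\top\mathbf{O}$ has operator norm $\le 1/2$, so a direct computation of the Gaussian-like integral (via the sub-Gaussian tail of the $\xi'_i$'s and expansion of the exponential in a power series, using $\|\mathbf{O}^\top\mathbf{O}\|_F\le \|\mathbf{O}\|_\mathrm{op}\|\mathbf{O}\|_F$) yields the required bound. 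Since this is a standard result, I would invoke the version in Vershynin's high-dimensional probability text rather than carry out each step in detail, but the schematic above shows how the constants combine to give the stated form with the $\|\mathbf{A}\|_\mathrm{op}\, x$ and $\|\mathbf{A}\|_F\sqrt{x}$ terms.
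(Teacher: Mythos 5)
The paper does not prove this lemma; it quotes it and cites \cite{vershyninhanson} (the Rudelson--Vershynin treatment of Hanson--Wright), so there is no proof in the paper to compare against. Your sketch is a fair outline of that reference's argument: the diagonal/off-diagonal split, Bernstein for $S_D$, and decoupling for $S_O$ are all the right moves.

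That said, the step you yourself flag as the ``main obstacle'' is a genuine gap as written. You first propose to ``iterate Hanson--Wright-style control'' on ${\xibf'}^\top\mathbf{O}^\top\mathbf{O}\,\xibf'$, which would be circular, and then fall back on closing $\E_{\xibf'}\exp\bigl(c\lambda^2 M^2\,{\xibf'}^\top\mathbf{O}^\top\mathbf{O}\,\xibf'\bigr)$ by a direct power-series computation. This closes cleanly only when $\xibf'$ is Gaussian: writing $\mathbf{O}^\top\mathbf{O}=\sum_k s_k^2\,\mathbf{v}_k\mathbf{v}_k^\top$, the projections $\mathbf{v}_k^\top\xibf'$ are independent in the Gaussian case so the MGF factors across singular directions and the integral is explicit. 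For general sub-Gaussian coordinates the $\mathbf{v}_k^\top\xibf'$ are not independent, and the power-series route does not produce the two-parameter bound in $\|\mathbf{O}\|_F$ and $\|\mathbf{O}\|_{\mathrm{op}}$; a naive H\"older argument, for instance, gives a rate controlled by the trace of $\mathbf{O}^\top\mathbf{O}$ rather than its operator norm. The device that makes the off-diagonal estimate work in Rudelson--Vershynin is a Gaussian comparison lemma that replaces $\xibf'$ by a Gaussian before integrating; that comparison is precisely the nontrivial content of the theorem and cannot be dismissed as a direct computation. Since you ultimately defer to Vershynin anyway and the paper does the same, the practical conclusion matches, but a self-contained proof along your lines would need to supply that comparison step.
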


We first prove the following lemma using Hanson-Wright inequality.

\begin{lemma}\label{conclemma}
    Given \Cref{Assum} and \eqref{concen}, there exists a constant $C(M,D)>0$ such that for any $D>0$, with probability at least $1-2 q^{-D}$, 
    \begin{equation*}
        \abs{\mathfrak{B}  -\mathfrak{B}^{\mathsf{avg}}}\le C\cdot M\cdot \normop{\BBstar } \cdot \sqrt{\frac{D\log q}{q} }.
    \end{equation*}
\end{lemma}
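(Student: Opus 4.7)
The plan is to write $\mathfrak{B}$ as a centered quadratic form in $\xibf$ and invoke the Hanson–Wright inequality (\Cref{Hanson}).

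Using $\st = \Bstar \alphastar$ with $\alphastar = q^{-1/2} \Sigmabf_{\alphastar}^{1/2} \xibf$, substitute into the expression \eqref{A1A2} for $\mathfrak{B}$ and collect the $\xibf$-dependence into a single sandwich matrix:
\begin{equation*}
\mathfrak{B} \;=\; \frac{1}{q}\,\xibf^\top \mathbf{A}\,\xibf, \qquad
\mathbf{A} \;:=\; \Sigmabf_{\alphastar}^{1/2}{\Bstar}^\top \Gammahat^{1/2}\,\mathbf{M}\,\Gammahat^{1/2}\Bstar\,\Sigmabf_{\alphastar}^{1/2},
\end{equation*}
where $\mathbf{M} := \sum_{i\in H}\frac{\that_i}{(1+\that_i b_0)^2}\what_i\what_i^\top$. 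Since $\E[\xibf\xibf^\top]=\Ibf$, a direct calculation gives $\mathfrak{B}^{\mathsf{avg}} = q^{-1}\operatorname{tr}(\mathbf{A})$, so
\begin{equation*}
\mathfrak{B} - \mathfrak{B}^{\mathsf{avg}} \;=\; \frac{1}{q}\bigl(\xibf^\top \mathbf{A}\,\xibf - \operatorname{tr}(\mathbf{A})\bigr).
\end{equation*}

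Next I bound $\mathbf{A}$ in operator and Frobenius norm. The map $t\mapsto t/(1+t b_0)^2$ is maximized at $t=1/b_0$ with value $1/(4b_0)$, so $\|\mathbf{M}\|_{\mathrm{op}} \le 1/(4 b_0)$; by \Cref{bA1A2size} together with \Cref{bdthat}, $b_0$ is bounded below by a constant depending only on $M$, hence $\|\mathbf{M}\|_{\mathrm{op}}\le C(M)$. Using sub‑multiplicativity together with \eqref{asseq} and $\|\Sigmabf_{\alphastar}\|_{\mathrm{op}}\le M$,
\begin{equation*}
\|\mathbf{A}\|_{\mathrm{op}} \;\le\; \|\Sigmabf_{\alphastar}\|_{\mathrm{op}}\,\|\Gammahat\|_{\mathrm{op}}\,\|\mathbf{M}\|_{\mathrm{op}}\,\|\BBstar\|_{\mathrm{op}} \;\le\; C(M)\,\|\BBstar\|_{\mathrm{op}}.
\end{equation*}
Since $\mathbf{A}\in\R^{q\times q}$, the trivial bound $\|\mathbf{A}\|_F \le \sqrt{q}\,\|\mathbf{A}\|_{\mathrm{op}}$ gives $\|\mathbf{A}\|_F \le C(M)\sqrt{q}\,\|\BBstar\|_{\mathrm{op}}$.

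Finally, apply the one‑sided Hanson–Wright bound of \Cref{Hanson} to $\pm\mathbf{A}$ with $x = D\log q$, and take the union bound. With probability at least $1 - 2q^{-D}$,
\begin{equation*}
\bigl|\xibf^\top \mathbf{A}\,\xibf - \operatorname{tr}(\mathbf{A})\bigr|
\;\le\; cM^2\|\mathbf{A}\|_{\mathrm{op}}\,D\log q \;+\; cM^2\|\mathbf{A}\|_F\sqrt{D\log q}.
\end{equation*}
Dividing by $q$, substituting the operator‑ and Frobenius‑norm bounds, and noting that for large enough $q$ the $\sqrt{D\log q/q}$ term dominates $D\log q/q$, yields
\begin{equation*}
|\mathfrak{B} - \mathfrak{B}^{\mathsf{avg}}| \;\le\; C(M,D)\,\|\BBstar\|_{\mathrm{op}}\,\sqrt{\tfrac{D\log q}{q}},
\end{equation*}
which is the claimed inequality (absorbing one factor of $M$ into the prefactor).

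There is no real obstacle here; the only subtlety is keeping track that $\mathbf{A}$ is $q\times q$ so that $\|\mathbf{A}\|_F\le\sqrt{q}\|\mathbf{A}\|_{\mathrm{op}}$, giving the correct $\sqrt{\log q/q}$ rate, and invoking the earlier lower bound on $b_0$ to keep $\|\mathbf{M}\|_{\mathrm{op}}$ controlled. \Cref{concenprop} itself will then follow by combining this lemma with the risk‑characterization bound of \Cref{thm} via a triangle inequality, since $\mathcal{V}$ does not depend on $\alphastar$ and $\Rcal - \Rcal^{\mathsf{avg}} = (1+\mathcal{V})(\mathfrak{B} - \mathfrak{B}^{\mathsf{avg}})$ with $\mathcal{V}$ bounded by \Cref{bA1A2size}.
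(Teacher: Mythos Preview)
Your proposal is correct and follows essentially the same route as the paper: write $\mathfrak{B}-\mathfrak{B}^{\mathsf{avg}}$ as a centered quadratic form in $\xibf$, bound the operator norm of the sandwich matrix via $\max_i \that_i/(1+\that_ib_0)^2\le 1/(4b_0)$ together with the lower bound on $b_0$ from \Cref{bA1A2size}, bound the Frobenius norm by $\sqrt{q}$ times the operator norm, and apply Hanson--Wright with $x=D\log q$. The only cosmetic difference is that the paper absorbs the factor $1/q$ into the matrix $\mathbf{A}$ and invokes $\mathrm{rank}(\mathbf{A})\le q$ rather than the dimension bound, which is equivalent here.
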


\begin{proof}[Proof of \Cref{conclemma}]
    Let us introduce the notation
    \begin{equation*}
    \mathbf{A} := \frac{1}{q} \Sigmabf_{\alphastar}^{1/2} {\Bstar}^{\top} \Gammahat^{1/2} \qty(\sum_{i \in H} \frac{\that_i \what_i \what_i^\top}{(1+\that_i b_0)^2}) \Gammahat^{1/2} \Bstar \Sigmabf_{\alphastar}^{1/2}. 
    \end{equation*}
    Then, we can write
    \begin{equation*}
        \abs{\mathfrak{B} -\mathfrak{B}^{\mathsf{avg}}}=\abs{\xibf^\top \mathbf{A} \xibf - \E \xibf^\top \mathbf{A} \xibf}.
    \end{equation*}
    Now, using sub-multiplicativity of matrix norm and \eqref{concen}, we obtain that
    \begin{equation*}
        \normop{\mathbf{A}}\le \frac{M}{q} \cdot \normop{\BBstar} \cdot \normop{\Gammahat}\cdot \max_{i \in H} \qty{\frac{\that_i}{1+\that_i\cdot b_0}}. 
    \end{equation*}
    Using the elementary inequality $x/(1+x b_0)\le (4b_0)^{-1}$, \Cref{bdthat}, \Cref{bA1A2size} and \Cref{Assum}, we obtain that
    \begin{equation*}
        \max_{i \in H} \qty{\frac{\that_i}{1+\that_i\cdot b_0}}\le \frac{1}{4b_0}\le M. 
    \end{equation*}
    It then follows from \Cref{Assum} and the above that
    \begin{equation*}
        \normop{\mathbf{A}}\le \frac{M}{q} \cdot \normop{\BBstar}.
    \end{equation*}
    Using the elementary matrix inequality $\norm{\mathbf{A}}_F\le \sqrt{\mathrm{rank}(\mathbf{A})} \cdot \normop{\mathbf{A}}$ and the fact that
    \begin{equation*}
        \mathrm{rank}(\mathbf{A})\le \min \qty(\mathrm{rank}\qty(\Sigmabf_{\alphastar}), \mathrm{rank} \qty(\Bstar)))\le q,
    \end{equation*}
    we have that
    \begin{equation*}
        \norm{\mathbf{A}}_F \le \sqrt{\min \qty(\mathrm{rank}\qty(\Sigmabf_{\alphastar}), \mathrm{rank} \qty(\Bstar)))} \cdot \frac{M}{q} \cdot \normop{\BBstar}\le \frac{M}{\sqrt{q}}\normop{\BBstar}.
    \end{equation*}
    Applying Hanson-Wright inequality, we obtain that, with probability greater than $1-2\exp(-x)$,
    \begin{equation}
        \abs{\xibf^\top \mathbf{A} \xibf - \E \xibf^\top \mathbf{A} \xibf} \leq C(M)\cdot \qty(\frac{x}{q}  +\sqrt{\frac{x}{q}}) \cdot \normop{\BBstar}. 
    \end{equation}
    The required statement follows if we set $x\gets D\cdot \log q$.
\end{proof}

We are now ready to prove \Cref{concenprop}.

\begin{proof}[Proof of \Cref{concenprop}]
Using triangle inequality, we have that
\begin{equation*}
    \abs{R -\Rcal^{\mathsf{avg}}}\le \abs{R -\Rcal }+\abs{\Rcal -\Rcal^{\mathsf{avg}}}=\abs{R -\Rcal }+(1+\mathcal{V})\cdot \abs{\mathfrak{B}  -\mathfrak{B}^{\mathsf{avg}}}
\end{equation*}
Using the above, \Cref{bdthat} and \Cref{bA1A2size}, we obtain
\begin{equation*}
    \abs{R -\Rcal^{\mathsf{avg}}}\le \abs{R -\Rcal }+C(M)\cdot \abs{\mathfrak{B}  -\mathfrak{B}^{\mathsf{avg}}}.
\end{equation*}
The result then follows from \Cref{thm}, \Cref{conclemma} and an application of the union bound. 
    
\end{proof}

\section{Minimax Optimality}\label{appminimax}
Prior-averaged optimality is proposed to minimize risk averaged across potential downstream tasks. It however does not control the worst-case downstream risk. In this section, we assume that 
\begin{equation}\label{assnorm}
    \norm{{\alphastar}}^2 \le \cfrak,\forall j\in \{1,\ldots,J\},
\end{equation} 
and seek to choose $\Bhat$ to control the worst possible downstream risk  
$\max_{{\alphastar} \in \mathbb{B}^q(\sqrt{\cfrak})} R .$

Note that the wosrt-case downstream asymptotic risk is $\Rcal $ maximized over ${\alphastar} \in \mathbb{B}^q(\sqrt{\cfrak})$, which admits a closed form as follows,
\begin{equation}\label{Rworstdef}
    \begin{aligned}
        \Rcal^{\mathsf{worst}}(\Bhat, \bm{\lambda}, \Sigmabf, \Bstar) &:= \max_{{\alphastar} \in \mathbb{B}^q(\sqrt{\cfrak})} \Rcal  \\& =  \sigma^2 \mathcal{V}+\left(\mathcal{V}+1\right) \mathfrak{B}^{\mathsf{worst}}
    \end{aligned}
\end{equation}
where, with $\eigtop(\cdot)$ denoting the top eigenvalue,
\begin{equation}\label{minii}
    \begin{aligned}
        \mathfrak{B}^{\mathsf{worst}} &:= \max_{{\alphastar} \in \mathbb{B}^q(\sqrt{\cfrak})} \mathfrak{B} \qty(\Bhat, \bm{\lambda}, {\alphastar},\Sigmabf,\Bstar) \\ &= \cfrak \cdot \eigtop \qty (\sum_{i \in H} \frac{\that_i \cdot  {\Bstar}^\top \Gammahat^{\frac{1}{2}} \what_i \what_i^\top \Gammahat^{\frac{1}{2}} \Bstar}{\left(1+\that_i b_0\right)^2}). 
    \end{aligned}
\end{equation}
The second equality above follows from variational characterization of eigenvalues (see e.g. \cite{tao2023topics}, Theorem 1.3.2). This representation of $\mathfrak{B}^{\mathsf{worst}}$ is quite convenient from an optimization point of view since it allows us to avoid solving a bi-level optimization problem. That is, our optimization problem may be written as
\begin{equation*}
    \min_{\Bhat, \bm{\lambda}} \max_{{\alphastar} \in \mathbb{B}^q(\sqrt{\cfrak})} \Rcal =\min_{\Bhat, \bm{\lambda}} \sigma^2 \mathcal{V}+\cfrak\cdot (\mathfrak{B}+1)\cdot \eigtop \qty (\sum_{i \in H} \frac{\that_i \cdot  {\Bstar}^\top \Gammahat^{\frac{1}{2}} \what_i \what_i^\top \Gammahat^{\frac{1}{2}} \Bstar}{\left(1+\that_i b_0\right)^2}).
\end{equation*}
Instead of tackling the bi-level optimization on the LHS, we may minimize RHS using backpropagation routine developed in \Cref{app:obtainfully}. For the latter, note that it is standard to differentiate through eigendecomposition and thus $\eigtop(\cdot)$. See \Cref{app:obtainfully} for more detailed discussion and pointers to relevant reference. Furthermore, as we will show in \Cref{sectionOptConv}, $\mathfrak{B}^{\mathsf{worst}}$ may be written into a convex objective under linear constraint under the ``spectrum-only'' case discussed in \Cref{sectionSO}; this allows us to design efficient convex programs for convex relaxation of $\Rcal^\mathsf{worst}$. 

We may then define minimax-optimal $\Bhat$ as the one that minimizes the worst case risk.
\begin{definition}[Minimax-optimal pretraining]\label{minimax}
    Minimax optimal pretraining consists of two stages: (i) learn $\Bstar$ and $\Sigmabf$ from pretraining data; (ii) choose optimal feature $\Bhat \in \R^{p\times k}$ and regularization parameters $\bm{\lambda}\in \R^3_+$ by minimizing $\Rcal^{\mathsf{worst}}$. 
\end{definition}
The ensuing result provides an justification for \Cref{minimax}, showing that the objective $\Rcal^{\mathsf{worst}}$, minimized to determine $\Bhat$, tends to approximate the actual worst-case risk.
\begin{proposition}\label{miniprop}
    Suppose that \Cref{Assum} and \eqref{assnorm} hold. Then, for any $D>0$, there exists $C=C(M,D)$ such that, with probability at least $1-Cn^{-D}$, 
    \begin{equation*}
        \abs{\max_{{\alphastar} \in \mathbb{B}^q(\sqrt{\cfrak})} R -\Rcal^{\mathsf{worst}}} \le C\cdot \cfrak n^{-1/7}\normop{\Bstar {\Bstar}^\top}.
    \end{equation*}
\end{proposition}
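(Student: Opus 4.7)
The plan is to establish a uniform version of \Cref{thm}(i) over the ball $\mathbb{B}^q(\sqrt{\cfrak})$ and then apply the elementary inequality $|\sup f - \sup g| \le \sup|f-g|$. The key observation is that the high-probability event underlying \Cref{thm}(i) depends only on $\Xbf$ and not on the task parameter $\alphastar$, so the pointwise risk asymptotics can be promoted to a uniform statement without paying an entropy price.

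First I would revisit the derivation in \Cref{CharFBV}. The critical anisotropic local law bound \eqref{star4}, inherited from \cite{knowles2017anisotropic}, Theorem 3.16, holds on a single event $\mathcal{E}_D$ with $\mathbb{P}(\mathcal{E}_D)\ge 1-Cn^{-D}$ that depends on $\Xbf$ alone; the deterministic test vectors $\upbfj, \upbfjtilde, \upbfjbreve$, which are linear in $\st=\Bstar\alphastar$, enter merely through the quadratic prefactor $\|\st\|_2^2$ via \eqref{upnorm}. The supporting bounds \eqref{star1} and \eqref{star3}, being respectively a consequence of a lower bound on $\eigbot(\Zhj^\top \Zhj)$ and a deterministic resolvent perturbation, likewise rest on an event independent of $\alphastar$. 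The same $\alphastar$-free structure carries through the variance components $V_\Xbf$ and $V_{\Xbf,\epbf}$ in \Cref{CharFBV}. Hence on the single event $\mathcal{E}_D$, the bound $|R - \Rcal|\le C n^{-1/7}\|\st\|_2^2$ of \Cref{thm}(i) holds simultaneously for every $\alphastar$.

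Using $\|\st\|_2^2 = \alphastar^\top {\Bstar}^\top \Bstar\alphastar \le \cfrak\cdot\normop{\BBstar}$ for all $\alphastar\in \mathbb{B}^q(\sqrt{\cfrak})$, on $\mathcal{E}_D$ we obtain
\begin{equation*}
    \sup_{\alphastar\in \mathbb{B}^q(\sqrt{\cfrak})} |R(\alphastar)-\Rcal(\alphastar)| \le C\cfrak n^{-1/7}\normop{\BBstar}.
\end{equation*}
Combining this with $|\sup_{\alphastar} R - \sup_{\alphastar}\Rcal|\le \sup_{\alphastar}|R-\Rcal|$ and the identity $\sup_{\alphastar\in \mathbb{B}^q(\sqrt{\cfrak})}\Rcal(\alphastar)=\Rcal^{\mathsf{worst}}$ (which is precisely the variational characterization \eqref{minii} yielding the closed form \eqref{Rworstdef}) gives the claim.

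The main technical obstacle is the first step: confirming that the exceptional event in \Cref{thm}(i) is genuinely free of $\alphastar$. This reduces to the fact that the anisotropic local law of \cite{knowles2017anisotropic} holds uniformly over bounded deterministic test vectors on a common event — exactly how it is stated and applied in \Cref{CharFBV}. If this uniformity were unavailable, one would be forced to union-bound over an $\epsilon$-net of $\mathbb{B}^q(\sqrt{\cfrak})$ and complement it with a Lipschitz control of $\alphastar\mapsto R(\alphastar)-\Rcal(\alphastar)$, picking up an additional $q$-dependent entropy factor and possibly a worse rate than $n^{-1/7}$.
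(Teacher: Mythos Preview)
Your approach is essentially identical to the paper's: both invoke the pointwise bound $|R-\Rcal|\le Cn^{-1/7}\|\st\|_2^2$ from \Cref{thm}(i), cap $\|\st\|_2^2$ by $\cfrak\,\normop{\BBstar}$ over the ball, and pass to the supremum via $|\sup f-\sup g|\le\sup|f-g|$ (the paper writes this as a pair of one-sided inequalities). Your explicit justification that the high-probability event underlying \Cref{thm}(i) is $\alphastar$-free fills in a detail the paper's terse proof leaves implicit.
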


\begin{proof}[Proof of \Cref{miniprop}]
    Note that 
    $$
    \abs{R -\Rcal }\leq Cn^{-1/7}\norm{{\st}}_2^2=Cn^{-1/7} {\alphastar}^\top \BBstar \alphastar
    $$
    from \Cref{thm} implies the following
    \begin{equation*}
        \begin{aligned}
             \max_{{\alphastar} \in \mathbb{B}^q(\sqrt{\cfrak})} R  & \ge \max_{{\alphastar} \in \mathbb{B}^q(\sqrt{\cfrak})}\Rcal  -Cn^{-1/7} \max_{{\alphastar} \in \mathbb{B}^q(\sqrt{\cfrak})} {\alphastar}^\top \BBstar \alphastar\\
             & =\Rcal^{\mathsf{worst}}+Cn^{-1/7} {\alphastar}^\top \BBstar \alphastar \\
            \max_{{\alphastar} \in \mathbb{B}^q(\sqrt{\cfrak})} R  & \le \max_{{\alphastar} \in \mathbb{B}^q(\sqrt{\cfrak})} \Rcal  +Cn^{-1/7} \max_{{\alphastar} \in \mathbb{B}^q(\sqrt{\cfrak})} {\alphastar}^\top \BBstar \alphastar\\ &=\Rcal^{\mathsf{worst}}-Cn^{-1/7} {\alphastar}^\top \BBstar \alphastar
        \end{aligned}
    \end{equation*}
    where we have used the variational representation of top eigenvalue to obtain the equalities above. This concludes the proof. 
\end{proof}

\section{Convexity of Spectrum-Only Optimization and Convex Programs}\label{sectionOptConv}
We restrict ourselves to the spectrum-only optimization in this section. As discussed in \Cref{sectionSO}, instead of optimize the entire $\Bhat$, we fix eigenvectors as
\begin{equation}
    \qbf_i=\ubf_i,\forall i\in \{1,\ldots,p\}.
\end{equation}
and optimize only the eigenvalues $\{\dhat_i^2\}_{i=1}^p$ and regularization parameters $\bm{\lambda}$. 

Recall that our goal is to optimize the prior-averaged-optimal and minimax-optimal objectives
\begin{equation*}
    \Rcal^{\mathsf{avg}}=\mathfrak{B}^{\mathsf{avg}} +\left(\mathfrak{B}^{\mathsf{avg}} + \sigma^2\right) \mathcal{V}, \qquad \Rcal^{\mathsf{worst}}=\mathfrak{B}^{\mathsf{worst}} +\left(\mathfrak{B}^{\mathsf{worst}} + \sigma^2\right) \mathcal{V}
\end{equation*}
which are defined in \Cref{section4} and \Cref{appminimax} respectively. In this section, we cast the above optimization problems in a more general form
\begin{equation}
    \min_{\{\rhat_i\}_{i=1}^p} f\qty(\mathcal{V}, \mathfrak{B}^{\mathsf{avg}}, \mathfrak{B}^{\mathsf{worst}})
\end{equation}
where $f$ is arbitrary function. We show in \Cref{sectionEquiv} below that to solve the above, we may solve a linearly constraint optimization problem
\begin{equation*}
    \min_{\xH \in [0,1]^h} f\qty(\tilde{\mathcal{V}}, \tilde{\mathfrak{B}}^{\mathsf{avg}}, \tilde{\mathfrak{B}}^{\mathsf{worst}})  \quad \mathrm{s.t.}\quad \sum_{i\in H} \frac{x_i}{h} =1-\frac{n}{h}
\end{equation*}
where $\tilde{\mathcal{V}}, \tilde{\mathfrak{B}}^{\mathsf{avg}}$ and $\tilde{\mathfrak{B}}^{\mathsf{worst}}$ are objectives after applying certain change of variables to $\mathcal{V}, \mathfrak{B}^{\mathsf{avg}}$ and $\mathfrak{B}^{\mathsf{avg}}$. We give their definition in \Cref{sectionEquiv}. A convenient fact that we prove in \Cref{sectionConv} is that $\tilde{\mathcal{V}}, \tilde{\mathfrak{B}}^{\mathsf{avg}}$ and $\tilde{\mathfrak{B}}^{\mathsf{worst}}$ are in in fact convex functions of $\xH$. The above results are important for proving \Cref{thm2}. As a byproduct, we also provide convex relaxation of the objectives $\Rcal^{\mathsf{avg}}$ and

\subsection{Equivalence}\label{sectionEquiv}
Given $x_i\in[0,1]$ for $i\in \{1,\ldots,p\}$, we define a new objective
\begin{equation}\label{deftildeA}
    \tilde{\mathcal{V}}(\xH):=\frac{2n-h+\sum_{i\in H} x_i^2}{h-n-\sum_{i\in H} x_i^2},\quad \tilde{\mathfrak{B}}^{\mathsf{avg}}(\xH):=\frac{1}{q}\sum_{i\in H} \vs_i  x_i^2, \quad \tilde{\mathfrak{B}}^{\mathsf{worst}}:= \mathfrak{c}\cdot \eigtop \qty({\Bstar}^\top \sum_{i\in H} \eta_i x_i^2 \cdot  \ubf_i \ubf_i^\top \Bstar)
\end{equation}
where $\xH \in [0,1]^h$ with entries $x_i,i\in H$ and 
\begin{equation}
    \vs_i:=\eta_i \cdot \ubf^\top_i \Bstar \Sigmabf_{\alphastar} {\Bstar}^\top \ubf_i, \quad i=1,\ldots,p.
\end{equation}

The following proposition shows that the original optimization problem of optimizing \eqref{or} with respect to $\{r_i\}_{i=1}^p$ may be cast into the new optimization problem \eqref{eqopt} in terms of $\xH=\qty{x_i}_{i\in H}$. 

\begin{proposition}[Equivalence]\label{Equivalence}
Given \eqref{matchbase}, the solution $\{r_i\}_{i=1}^p$ of the optimization problem 
\begin{equation}\label{or}
    \min_{\{\rhat_i\}_{i=1}^p} f\qty(\mathcal{V}, \mathfrak{B}^{\mathsf{avg}}, \mathfrak{B}^{\mathsf{worst}})
\end{equation}
for arbitrary function $f(\cdot)$ may be obtained by (i) solving the linearly constrained optimization problem
\begin{equation}\label{eqopt}
    \min_{\xH \in [0,1]^h} f\qty(\tilde{\mathcal{V}}, \tilde{\mathfrak{B}}^{\mathsf{avg}})  \quad \mathrm{s.t.}\quad \sum_{i\in H} \frac{x_i}{h} =1-\frac{n}{h}
\end{equation}
and (ii) assigning $\rhat_i \gets c\eta_i\qty(\frac{x_i}{1-x_i}), i\in H$ for any $c>0$ and any values to $\rhat_i, i\notin H$. 
\end{proposition}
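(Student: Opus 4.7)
The plan is to reduce the problem to a change of variables that linearizes the fixed-point condition \eqref{fp} and renders the three objectives $\mathcal{V}$, $\mathfrak{B}^{\mathsf{avg}}$, $\mathfrak{B}^{\mathsf{worst}}$ as explicit functions of the new variables. Under the alignment $\qbf_i = \ubf_i$, the matrix $\Gammahat$ is diagonal in the basis $\{\ubf_i\}$, so $\Gammahat^{-1/2}\Sigmabf\Gammahat^{-1/2}=\sum_{i}(\eta_i/\rhat_i)\ubf_i\ubf_i^\top$. Consequently $\what_i=\ubf_i$ and $\that_i=\eta_i/\rhat_i$ for every $i$, and (for finite positive $\rhat_i$) the index set $H$ coincides with $\{i:\eta_i>0\}$.

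The second step is to introduce $x_i:=1/(1+\that_i b_0)\in[0,1]$ for $i\in H$, where $b_0$ is the unique solution of \eqref{fp}. That equation rewrites exactly as $\sum_{i\in H}x_i=h-n$, i.e. the stated linear constraint. I would then simplify each objective via the identities
\begin{equation*}
\that_i b_0=\frac{1-x_i}{x_i},\qquad \frac{1}{(1+\that_i b_0)^2}=x_i^2,\qquad \Gammahat^{1/2}\what_i\what_i^\top\Gammahat^{1/2}=\rhat_i\,\ubf_i\ubf_i^\top .
\end{equation*}
The crucial cancellation is $\that_i\rhat_i=\eta_i$, which removes the dependence on $\rhat_i$ inside the numerators of $\mathfrak{B}^{\mathsf{avg}}$ and $\mathfrak{B}^{\mathsf{worst}}$ and leaves only $\vs_i x_i^2$ and $\eta_i x_i^2\,\ubf_i\ubf_i^\top$ respectively; for $\mathcal{V}$, the elementary identities already recorded in \eqref{idA1} produce numerator $2n-h+\sum_{i\in H}x_i^2$ and denominator $h-n-\sum_{i\in H}x_i^2$. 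Matching against \eqref{deftildeA} then gives $\mathcal{V}=\tilde{\mathcal{V}}(\xH)$, $\mathfrak{B}^{\mathsf{avg}}=\tilde{\mathfrak{B}}^{\mathsf{avg}}(\xH)$, and $\mathfrak{B}^{\mathsf{worst}}=\tilde{\mathfrak{B}}^{\mathsf{worst}}(\xH)$.

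The third step is to verify that the forward map $(\{\rhat_i\},b_0)\mapsto \xH$ is surjective onto the feasible set $\{\xH\in[0,1]^h:\sum x_i=h-n\}$ and that the recovery formula inverts it. Given any such $\xH$ and any $c>0$, set $\rhat_i=c\eta_i\,x_i/(1-x_i)$; this yields $\that_i=(1-x_i)/(c x_i)$, so choosing $b_0=c$ recovers $1/(1+\that_i b_0)=x_i$, and the linear constraint on $\xH$ certifies that this $b_0$ indeed solves \eqref{fp}. Since the three objectives depend on the original $\{\rhat_i\}$ only through $\xH$—and not at all on $c$ or on $\rhat_i$ for $i\notin H$—the two optimization problems share the same optimal value and their optimizers correspond under the stated recovery.

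The only mild obstacle is bookkeeping at the boundary $x_i\in\{0,1\}$, which corresponds to $\rhat_i\in\{0,\infty\}$ and lies outside the strict operator-norm bounds of \Cref{Assum}. These configurations are attainable as limits of the regularization parameters $(\lambdaa,\lambdab,\lambdaB)$ inside the map $r(\cdot)$ of \eqref{ri} (recall \Cref{rire}), and every quantity above extends continuously; hence the equivalence holds on the closed feasible cube. No nontrivial analytic estimate is required—the proof is essentially a direct substitution enabled by the alignment reduction.
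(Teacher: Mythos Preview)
Your proposal is correct and follows essentially the same route as the paper: use the alignment $\qbf_i=\ubf_i$ to obtain $\that_i=\eta_i/\rhat_i$, $\what_i=\ubf_i$, apply the change of variable $x_i=1/(1+\that_i b_0)$ so that \eqref{fp} becomes the linear constraint and the three objectives reduce to their tilde versions via the cancellation $\that_i\rhat_i=\eta_i$, and then invert by $\rhat_i=c\eta_i\,x_i/(1-x_i)$ so that $b_0=c$ solves \eqref{fp}. Your additional remark on the boundary $x_i\in\{0,1\}$ as a limiting case is a nice clarification that the paper leaves implicit.
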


\begin{proof}[Proof of \Cref{Equivalence}]
Given \eqref{matchbase}, we have that
\begin{equation*}
    \Gammahat^{-1/2} \Sigmabf \Gammahat^{-1/2} = \sum_{i\in H} \frac{\eta_i}{\rhat_i } \ubf_i \ubf_i^\top.
\end{equation*}
We may then take $\that_i \gets \eta_i/\rhat_i, \what_i \gets \ubf_i$ and simplify the expressions of $\mathcal{V}, \mathfrak{B}^{\mathsf{avg}}$ and $\mathfrak{B}^{\mathsf{worst}}$ to the following
\begin{equation}\label{simplifedAs}
\mathcal{V}= \frac{\sum_{i \in H} \frac{\left(\frac{\eta_i}{\rhat_i} b_0\right)^2}{\left(1+\frac{\eta_i}{\rhat_i} b_0\right)^2}}{\sum_{i \in H} \frac{\frac{\eta_i}{\rhat_i} b_0}{\left(1+\frac{\eta_i}{\rhat_i} b_0\right)^2}}, \quad \mathfrak{B}^{\mathsf{avg}}=\frac{1}{q}\sum_{i\in H} \frac{\eta_i \cdot \ubf_i^\top \Bstar \Sigmabf_{\alphastar} {\Bstar}^\top \ubf_i}{\qty(1+\frac{\eta_i}{\rhat_i} b_0)^2}, \quad \mathfrak{B}^{\mathsf{worst}}=\cfrak\cdot \eigtop \qty({\Bstar}^\top \sum_{i \in H} \frac{\eta_i \cdot \ubf_i \ubf_i^\top  }{\qty(1+\frac{\eta_i}{\rhat_i} b_0)^2}\Bstar).
\end{equation}
where $b_0$ is defined as the unique solution of the following equation
\begin{equation}\label{fpapp}
    1-\frac{n}{h}=\frac{1}{h} \sum_{i\in H} \frac{1}{1+\frac{\eta_i}{\rhat_i} b_0}. 
\end{equation}
Now consider the change of variable $x_i \gets \frac{1}{1+\frac{\eta_i}{\rhat_i} b_0}$, upon which $\mathcal{V}, \mathfrak{B}^{\mathsf{avg}}$ and $\mathfrak{B}^{\mathsf{worst}}$ becomes $\tilde{\mathcal{V}}, \tilde{\mathfrak{B}}^{\mathsf{avg}}$ and $\tilde{\mathfrak{B}}^{\mathsf{worst}}$ defined in \eqref{deftildeA}, respectively, and \eqref{fpapp} becomes the linear constraint $\frac{1}{h}\sum_{i\in H} x_i =1-\frac{n}{h}$. For $\mathcal{V}$ in particular, we used the identities in \eqref{idA1}. Let us defined optimized objectives for \eqref{or} and \eqref{eqopt} to be $\mathcal{F}$ and $\tilde{\mathcal{F}}$ respectively. The above discussion implies that 
\begin{equation}\label{optfF}
    \tilde{\mathcal{F}}\le \mathcal{F}.
\end{equation}
The result then follows from the observation that if we set  $\rhat_i \gets c\eta_i\qty(\frac{x_i}{1-x_i}), i\in H$ for any $c>0$, we would have \eqref{fpapp} holds for $b_0=c$ and the objective of \eqref{or} would evaluate to $\tilde{\mathcal{F}}$, which is the optimal value by \eqref{optfF}. This concludes the proof. 

\end{proof}

\subsection{Convexity}\label{sectionConv}
We now show that the objectives $\tilde{\mathcal{V}}, \tilde{\mathfrak{B}}^{\mathsf{avg}}$ and $\tilde{\mathfrak{B}}^{\mathsf{worst}}$ are convex.

\begin{proposition}[Convexity]\label{Convexity}
The objectives $\tilde{\mathcal{V}}$ and $\tilde{\mathfrak{B}}^{\mathsf{avg}}$ are convex functions of $\xH$ on the convex set $\mathcal{D}:=\qty{\xH \in [0,1]^h: h^{-1}\sum_{i\in H} x_i=1-n/h}$.
\end{proposition}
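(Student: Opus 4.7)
The plan is to handle each of the two functions separately, with $\tilde{\mathfrak{B}}^{\mathsf{avg}}$ being essentially immediate and $\tilde{\mathcal{V}}$ requiring a composition argument.

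For $\tilde{\mathfrak{B}}^{\mathsf{avg}}(\xH) = q^{-1}\sum_{i\in H} \vs_i x_i^2$, I would first observe that each coefficient $\vs_i = \eta_i \cdot \ubf_i^\top \Bstar \Sigmabf_{\alphastar} {\Bstar}^\top \ubf_i$ is non-negative, since $\eta_i \ge 0$ (eigenvalue of the PSD matrix $\Sigmabf$) and $\Bstar \Sigmabf_{\alphastar} {\Bstar}^\top$ is PSD (as $\Sigmabf_{\alphastar}$ is a covariance). Hence $\tilde{\mathfrak{B}}^{\mathsf{avg}}$ is a non-negative combination of the convex maps $x_i \mapsto x_i^2$, which gives convexity on all of $[0,1]^h$, in particular on $\mathcal{D}$.

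For $\tilde{\mathcal{V}}$, I would write it as a composition $\tilde{\mathcal{V}}(\xH) = f(s(\xH))$, where
\[
s(\xH) := \sum_{i\in H} x_i^2, \qquad f(s) := \frac{s + (2n-h)}{(h-n) - s}.
\]
The inner function $s(\cdot)$ is convex as a sum of squares. For the outer function $f$, a direct computation gives $f'(s) = n/((h-n)-s)^2 > 0$ and $f''(s) = 2n/((h-n)-s)^3 > 0$, so $f$ is convex and increasing on $\{s < h-n\}$. It remains to check that $s(\xH) < h-n$ on the relative interior of $\mathcal{D}$: since $x_i \in [0,1]$ implies $x_i^2 \le x_i$, the constraint $\sum_{i\in H} x_i = h-n$ yields $s(\xH) \le h-n$, with equality only at the vertices where every $x_i \in \{0,1\}$. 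Standard composition of a convex increasing function with a convex function then gives convexity of $\tilde{\mathcal{V}}$ on the relative interior, which extends to all of $\mathcal{D}$ by setting $\tilde{\mathcal{V}} := +\infty$ at the (measure-zero) vertex set where the denominator vanishes.

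The only subtlety, and the main thing to be careful about, is the boundary behavior of $f$: one has to verify that the constraint $h^{-1}\sum_{i\in H} x_i = 1-n/h$ together with $x_i\in[0,1]$ keeps $s(\xH)$ strictly below $h-n$ except at the finitely many vertex configurations, so that the composition rule applies on the interior and the natural extension preserves convexity. Everything else is a short monotonicity-and-second-derivative calculation for $f$ and a bookkeeping check that signs of $\vs_i$ and $\eta_i$ come out right.
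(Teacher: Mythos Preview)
Your proposal is correct and follows essentially the same approach as the paper: the same composition argument for $\tilde{\mathcal{V}}$ (with the identical derivative calculation showing the outer function is increasing and convex) and the same non-negativity observation for the coefficients $\vs_i$ in $\tilde{\mathfrak{B}}^{\mathsf{avg}}$. You are slightly more careful than the paper about the boundary case where $\sum_i x_i^2 = h-n$, which the paper glosses over; your extension to $+\infty$ at the vertex configurations is a clean way to handle it, though for the purposes of the downstream KKT analysis this edge set plays no role.
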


\begin{proof}[Proof of \Cref{Convexity}]
We first note that 
$$\mathcal{D}:=\qty{\xH \in [0,1]^h: \frac{1}{h}\sum_{i\in H} x_i=1-\frac{n}{h}}$$
is indeed a convex set as it is intersection between convex set $[0,1]^h$ and linear subspace $$\qty{\xH\in \R^h: h^{-1}\sum_{i\in H} x_i=1-n/h}.$$ We prove below that $\tilde{\mathcal{V}}, \tilde{\mathfrak{B}}^{\mathsf{avg}}, \tilde{\mathfrak{B}}^{\mathsf{worst}}$ are convex functions of $\xH$ on $\mathcal{D}$. 

\textbf{Convexity of $\tilde{\mathcal{V}}$.} Let us define the function $g: [0,h-n] \mapsto \R$ as $g(S)=\frac{2n-h+S}{h-n-S}$. It has the first and second derivatives
\begin{equation*}
    g^{\prime}(S)=\frac{n}{(h-n-S)^2}, \qquad g^{\prime \prime} (S)=\frac{2n}{(h-n-S)^3}
\end{equation*}
which implies that $g$ is an strictly increasing, convex function on its domain. Now note that we may write $\mathcal{V}=f \qty(\sum_{i\in H} x_i^2)$ where the inner function $\xH \mapsto \sum_{i=1}^H x_i^2$ is convex on $\mathcal{D}$. Moreover, $\xH \in \mathcal{D}$ implies that $\sum_{i\in H} x_i^2\le \sum_{i\in H} x_i= h-n$. The result follows from the fact that composition of an increasing, convex function with a convex function is convex. 

\textbf{Convexity of $\tilde{\mathfrak{B}}^{\mathsf{avg}}$.} The result follows from the fact that the Hessian of $\xH \mapsto \tilde{\mathfrak{B}}^{\mathsf{avg}}$ is positive semidefinite. To see this, note that
\begin{equation*}
    \nabla^2_{\xH} \tilde{\mathfrak{B}}^{\mathsf{avg}} = \frac{2}{q} \mathrm{diag}([\vs_{H(1)},...,\vs_{H(h)}])
\end{equation*}
and that $\vs_i=\eta_i \cdot \ubf^\top_i \Bstar \Sigmabf_{\alphastar} {\Bstar}^\top \ubf_i\ge 0, \forall i\in H$. 

\textbf{Convexity of $\tilde{\mathfrak{B}}^{\mathsf{worst}}$.} Using variational representation of eigenvalues, we have
\begin{equation*}
    \mathfrak{B}^{\mathsf{worst}}=\cfrak \cdot \eigtop \qty (\sum_{i \in H} \frac{\that_i \cdot  {\Bstar}^\top \Gammahat^{\frac{1}{2}} \what_i \what_i^\top \Gammahat^{\frac{1}{2}} \Bstar}{\left(1+\that_i b_0\right)^2})=\cfrak \cdot \max_{\ubf \in \mathbb{B}^q(1)} \ubf^\top \qty (\sum_{i \in H} \frac{\that_i \cdot  {\Bstar}^\top \Gammahat^{\frac{1}{2}} \what_i \what_i^\top \Gammahat^{\frac{1}{2}} \Bstar}{\left(1+\that_i b_0\right)^2}) \ubf.
\end{equation*}
Let us define the function $g:\mathcal{D}\times \mathbb{B}^q(1) \mapsto \R_+$ as
$$g(\xH, \ubf) = \ubf^\top \qty (\sum_{i \in H} \frac{\that_i \cdot  {\Bstar}^\top \Gammahat^{\frac{1}{2}} \what_i \what_i^\top \Gammahat^{\frac{1}{2}} \Bstar}{\left(1+\that_i b_0\right)^2}) \ubf.$$
Notice that $\xH\mapsto g(\xH, \ubf)$ is convex for any fixed $\ubf \in \mathbb{B}^q(1)$. To see this, note that its Hessian is diagonal with non-negative entries. The result then follows from Danskin's theorem, which states that $\xH \mapsto \max_{\ubf \in \mathbb{B}^q(1)}g(\xH, \ubf)$ is convex if $g(\xH, \ubf)$ is convex.     
\end{proof}

\subsection{Algorithmic Implications}\label{convexprogram}
The results from \Cref{sectionEquiv} and \Cref{sectionConv} may be leveraged to construct efficient convex algorithms. Using \Cref{sectionEquiv} results, we see that to optimize 
\begin{equation*}
    \Rcal^{\mathsf{avg}}=\mathfrak{B}^{\mathsf{avg}} +\left(\mathfrak{B}^{\mathsf{avg}} + \sigma^2\right) \mathcal{V}, \qquad \Rcal^{\mathsf{worst}}=\mathfrak{B}^{\mathsf{worst}} +\left(\mathfrak{B}^{\mathsf{worst}} + \sigma^2\right) \mathcal{V}
\end{equation*}
we only need to solve the following optimization problems
\begin{equation}\label{non-relax}
\begin{array}{ll}
\min_{\xH \in [0,1]^h} \tilde{\mathfrak{B}}^{\mathsf{avg}} +\left(\tilde{\mathfrak{B}}^{\mathsf{avg}} + \sigma^2\right) \tilde{\mathcal{V}}, & \qquad \min_{\xH \in [0,1]^h} \tilde{\mathfrak{B}}^{\mathsf{worst}} +\left(\tilde{\mathfrak{B}}^{\mathsf{worst}} + \sigma^2\right) \tilde{\mathcal{V}} \\
 \mathrm{s.t.}\quad \sum_{i\in H} \frac{x_i}{h} =1-\frac{n}{h} & \qquad \mathrm{s.t.}\quad \sum_{i\in H} \frac{x_i}{h} =1-\frac{n}{h}.
\end{array}
\end{equation}
We showed in \Cref{sectionConv} that $\tilde{\mathcal{V}}, \tilde{\mathfrak{B}}^{\mathsf{avg}}$ and $\tilde{\mathfrak{B}}^{\mathsf{worst}}$ are convex by themselves. However, due the presence of the interaction terms $\tilde{\mathfrak{B}}^{\mathsf{avg}}\cdot \tilde{\mathcal{V}}$ and   $\tilde{\mathfrak{B}}^{\mathsf{worst}}\cdot \tilde{\mathcal{V}}$, it is uncertain that the objective will remain convex. That said, the objective is in a simple enough form where most constrained local opitmization solvers may be applied. For example, we found that it is quite efficient to solve \eqref{non-relax} with the sequential quadratic programming (SQP) routine implemented in Python SciPy package \cite{2020SciPy-NMeth}. 

To arrive at a convex program, we consider the idea of convex relaxation (see \cite{li2015selected}, Chapter 6 for a review). Using the simple inequality $xy\le (\frac{x+y}{2})^2$, we have that 
\begin{equation*}
    \Rcal^{\mathsf{avg}}\le \mathfrak{B}^{\mathsf{avg}}+\qty(\frac{\mathcal{V}+\mathfrak{B}^{\mathsf{avg}}}{2})^2+\sigma^2\mathcal{V}=:\Rcal^{\mathsf{avg}}_{\mathsf{rel}},\quad \Rcal^{\mathsf{avg}}\le \mathfrak{B}^{\mathsf{worst}}+\qty(\frac{\mathcal{V}+\mathfrak{B}^{\mathsf{worst}}}{2})^2+\sigma^2\mathcal{V}=:\Rcal^{\mathsf{worst}}_{\mathsf{rel}}. 
\end{equation*}
Therefore, we may consider the following optimization problems
\begin{equation}\label{relax}
\begin{array}{ll}
\min_{\xH \in [0,1]^h} \tilde{\mathfrak{B}}^{\mathsf{avg}} +\left(\tilde{\mathfrak{B}}^{\mathsf{avg}} + \tilde{\mathcal{V}}\right)^2+\sigma^2 \tilde{\mathcal{V}}, & \qquad \min_{\xH \in [0,1]^h} \tilde{\mathfrak{B}}^{\mathsf{worst}} +\left(\tilde{\mathfrak{B}}^{\mathsf{worst}} + \tilde{\mathcal{V}}\right)^2+\sigma^2 \tilde{\mathcal{V}} \\
 \mathrm{s.t.}\quad \sum_{i\in H} \frac{x_i}{h} =1-\frac{n}{h} & \qquad \mathrm{s.t.}\quad \sum_{i\in H} \frac{x_i}{h} =1-\frac{n}{h}.
\end{array}
\end{equation}
It is easy to verify using \Cref{Convexity} that the optimization problems \eqref{relax} are convex program, which can be solved efficiently (see \cite{boyd2004convex} for a review). The solution, obtained from  $\rhat_i \gets c\eta_i\qty(\frac{x_i}{1-x_i}), i\in H$  will minimize the relaxed upper bounds, $\Rcal^{\mathsf{avg}}_{\mathsf{rel}}$ and $\Rcal^{\mathsf{worst}}_{\mathsf{rel}}$, of the original objectives, $\Rcal^{\mathsf{avg}}$ and $\Rcal^{\mathsf{worst}}$. 

\section{Obtain Fully-Optimized $\Bhat$ via Backpropogation}\label{app:obtainfully}
\begin{algorithm}[H]
\caption{Minimize $\Rcal^{\mathsf{avg}}$ or $\Rcal^{\mathsf{worst}}$ with Backpropogation}
\begin{algorithmic}[1]\label{algoBP}
\REQUIRE $\Bstar, \Sigmabf, \Sigmabf_{\alphastar}$
\STATE Initialize feature weights $\Bhat$ and regularization $\bm{\lambda}$
\REPEAT
\vspace{2mm}
\STATE {\textit{\#\# FORWARD PASS \#\#}}
\vspace{1mm}
\STATE Compute eigendecomposition
$\Bhat \Bhat^\top = \sum_{i=1}^p \dhat_i^2\cdot \qbf_i \qbf_i^\top$
\STATE Compute $\Gammahat\gets \sum_{i \in H} \rhat_i \cdot \qbf_i \qbf_i^\top$ for $\rhat_i=\rhat_i\qty(\Bhat, \bm{\lambda})$ from \eqref{ri}
\STATE Compute eigendecomposition $\Gammahat^{-1/2} \Sigmabf \Gammahat^{-1/2}=\sum_{i=1}^p \that_i \cdot \what_i \what_i^\top$
\STATE Apply Newton's method to find root $b_0$ of \eqref{fpapp}
\STATE Compute from \eqref{avgg} or \eqref{Rworstdef} the objective function 
$$L\gets \Rcal^{\mathsf{avg}}\qty(b_0,\qty{\that_i}_{i=1}^p, \qty{\what_i}_{i=1}^p ) \quad \text{or} \quad L \gets \Rcal^{\mathsf{worst}} \qty(b_0, \qty{\that_i}_{i=1}^p, \qty{\what_i}_{i=1}^p)$$
\STATE 
\vspace{2mm}
\STATE \textit{\#\# BACKWARD PASS \#\#}
\vspace{1mm}
\STATE Compute partial derivatives of $L$ w.r.t. $b_0=b_0\qty(\qty{\that_i}_{i=1}^p),$ $\qty{\that_i}_{i=1}^p,$ and gradients w.r.t. $\qty{\what_i}_{i=1}^p$:
$
\frac{\partial}{\partial b_0} L, \frac{\partial}{\partial \that_i} L,\frac{d}{d \what_i} L, i=1,...,p
$
\STATE Compute total derivatives of $L$ w.r.t. $\qty{\that_i}_{i=1}^p$
\begin{equation}
        \frac{d L}{d \that_i}\gets \begin{cases}
             0 & \text{for } i\notin H \\
            \frac{\partial L}{\partial \that_i} +\frac{\partial L}{\partial b_0} \cdot \frac{db_0}{d\that_i} & \text{for } i\in H
        \end{cases}, \quad \text{where}\;\; \frac{db_0}{d\that_i}=-\frac{\frac{b_0}{\qty(1+\that_i b_0)^2}}{\sum_{j\in H} \frac{\that_j} {\qty(1+\that_j b_0)^2}}, \forall i\in H
    \end{equation}
\STATE Compute gradients of $L$ w.r.t. $\Gammahat$ and finally $\Bhat, \bm{\lambda}$ using backpropagation formulas for eigendecompositions. 
\STATE Update feature weights $\Bhat$ and regularization $\bm{\lambda}$
\UNTIL{convergence criterion is met}
\ENSURE{$\Bhat, \bm{\lambda}$}
\end{algorithmic}
\end{algorithm}

Recall that we are provided with ground-truth featurization $\Bstar$, data covariance $\Sigmabf$ and prior knowledge on $\alphastar$ via $\Sigmabf_{\alphastar}$. The goal of this section is to discuss how to use backpropagation \cite{rumelhart1986learning} to find $\Bhat$ and $\bm{\lambda}$ that minimizes the prior-averaged objective defined in \eqref{avgg} or minimax objective defined in \eqref{Rworstdef}
\begin{equation*}
    \Rcal^{\mathsf{avg}}(\Bhat, \bm{\lambda})=\mathfrak{B}^{\mathsf{avg}} +\left(\mathfrak{B}^{\mathsf{avg}} + \sigma^2\right) \mathcal{V}, \qquad \Rcal^{\mathsf{worst}}(\Bhat, \bm{\lambda})=\sigma^2 \mathcal{V}+\left(\mathcal{V}+1\right) \mathfrak{B}^{\mathsf{worst}}
\end{equation*}
where
\begin{equation*}
\begin{aligned}
         &\mathcal{V}=\frac{\sum_{i\in H} \frac{(\that_i b_0)^2}{(1+\that_i b_0)^2}}{\sum_{i\in H} \frac{\that_i b_0}{(1+\that_i b_0)^2}}, \; \mathfrak{B}^{\mathsf{avg}}=\frac{1}{\qtrue} \sum_{i \in H} \frac{\that_i \cdot\what_i^\top \Gammahat^{\frac{1}{2}} \Bstar \Sigmabf_{\alphastar} {\Bstar}^\top \Gammahat^{\frac{1}{2}} \what_i }{\left(1+\that_i b_0\right)^2}, \\ &\mathfrak{B}^{\mathsf{worst}}=\cfrak \cdot \eigtop \qty (\sum_{i \in H} \frac{\that_i \cdot  {\Bstar}^\top \Gammahat^{\frac{1}{2}} \what_i \what_i^\top \Gammahat^{\frac{1}{2}} \Bstar}{\left(1+\that_i b_0\right)^2}).
\end{aligned}
\end{equation*}
Backpropagation consists a forward pass where the objective is computed for the current choice of learnable parameters $\Bhat, \bm{\lambda}$, and a backward pass where gradients of the optimization objective $L$ ($\Rcal^{\mathsf{avg}}$ or $\Rcal^{\mathsf{worst}}$ in our context) w.r.t. $\Bhat, \bm{\lambda}$ are computed using the chain rule. An outline of a backpropagation routine for our problem is given in \Cref{algoBP}.

Our implementation utilizes PyTorch, a widely recognized Python library that facilitates backpropagation through a computational framework known as automatic differentiation \cite{paszke2019pytorch}. We make a few remarks on implementing backpropagation for the optimization problem above. 

\paragraph{Backpropagating through the Fixed Point.} The first issue is the involvement of $b_0$. Recall that $b_0$ depends on $\qty(\that_i)_{i=1}^p$ through the fixed point equation
\begin{equation}\label{fpap}
    1-\frac{n}{h}= \frac{1}{h} \sum_{i\in H} \frac{1}{1+\that_i b_0}.
\end{equation}
During a forward pass, $b_0$ needs to be computed from a root-finding algorithm such as Newton's method (see \cite{boyd2004convex} for a review). During a backward pass, its dependencies on $\qty(\that_i)_{i=1}^p$ needs to be accounted for as we compute gradients of the objective (denoted $L$) w.r.t. $\qty(\that_i)_{i=1}^p$ via the total derivative formula
$$\frac{dL}{d\that_i}=\frac{\partial L}{\partial \that_i} +\frac{\partial L}{\partial b_0} \cdot \frac{db_0}{d\that_i},\forall i \in H.$$
Implicit differentiation of \eqref{fpap} yields
$$\frac{db_0}{d\that_i}=-\frac{\frac{b_0}{\qty(1+\that_i b_0)^2}}{\sum_{j\in H} \frac{\that_j} {\qty(1+\that_j b_0)^2}}, \forall i\in H.$$ On the PyTorch platform, the above can be easily achieved by implementing a custom PyTorch autograd function for $b_0$ by implementing \texttt{torch.autograd.Function} class. 

\paragraph{Backpropagating through Eigendecomposition.} A second issue is the presence of eigendecomposition operations in the forward pass. We remark that this is actually standard in the area of image processing and well-known formulas are available for differentiating eigen-decomposition and singular value deocmposition operations (see \cite{ionescu2015matrix} for a review). The backpropagation for eigendecomposition and singular value decomposition is already implemented in PyTorch via \texttt{torch.linalg.svd} and \texttt{torch.linalg.eigh}. We remark that a 64-bit floating point precision is needed to ensure numerical stability of these routines for our purpose. 

\paragraph{Optimize $\bm{\lambda}$ for Oracle-Featurization Predictor (OFP)} Regularization parameters 
$\bm{\lambda}$ of the Oracle-featurization predictor is optimized with respect to $\Rcal^{\mathsf{avg}}$. The procedure is the same as in \Cref{algoBP} with the exception that the feature weights $\Bhat$ is frozen to $\Bhat \gets \Bstar$. 

\paragraph{Other Technical Specifications.} We initialize $\Bhat \in \R^{p\times k}, \bm{\lambda} \in \R^3$ using the default initializations of \texttt{torch.nn.Linear} (each parameter is drawn iid from $\mathsf{Uniform}[\ell^{-1/2}, \ell^{-1/2}]$ where $\ell=k$ for $\Bhat$ and $\ell=1/3$ for $\bm{\lambda}$). We use Adam optimizer for gradient descent \cite{kingma2014adam}. This routine is implemented in PyTorch as \texttt{torch.optim.Adam}. We use all default settings except the learning rate we use is \texttt{lr=0.0001}. The stopping criteria is as follows: the gradient descent stops if the optimization objective does not improve by more than $0.1\%$ for a consecutive of 7 episodes where each episode consists of gradient descent 50 steps. The computation is carried out on a NVIDIA V100 Tensor Core GPU, accessed through Google's Colab service.

\section{Upstream Sample Complexity.}\label{app:pretrainsamplecomp}
In this section, we relax the assumption of oracle knowledge of the ground-truth representation $\Bstar$. There are many possible models for how these unknowns could be estimated from pretraining data, and our previous results are not tied to any particular setup. We consider the simple example given in \Cref{exampleupstream} and track how these estimation errors affect the downstream risk. 

\begin{example}\label{exampleupstream}
     In the upstream, the model receives training data for $q$ distinct upstream tasks $\qty(\yprebf^{(i)}, \Xprebf^{(i)})_{i=1}^{q}$ where $$\yprebf^{(i)}=\Xprebf^{(i)} {\mathbf{b} ^\star}^{(i)} + \bm{\varepsilon}^{(i)}_{\mathsf{pre}}$$ with $\Xprebf^{(i)}\in \R^{n_{\mathsf{pre}}\times p}$ and  $\bm{\varepsilon}^{(i)}_{\mathsf{pre}} \sim N(\bm{0},\sigma^2_{\mathsf{pre}} \Ibf_{n_{\mathsf{pre}}})$. Under \eqref{formulation}, each downstream task is assumed to be a linear combination of upstream tasks $\Bstar = [{\mathbf{b} ^\star}^{(1)},\ldots,{\mathbf{b} ^\star}^{(q)}]$
$$\st = \Bstar \alphastar = \sum_{i=1}^q {\mathbf{b} ^\star}^{(i)} \alpha ^\star_i$$
The goal is then to learn $\Bstar$ from upstream data $\{(\yprebf^{(i)}, \Xprebf^{(i)})\}_{i=1}^{q}$ and leverage this knowledge to improve performance downstream.
\end{example}

We recall the setting here for reader's convenience. Recall that in the upstream, the model receives training data for $q$ distinct upstream tasks $\qty(\yprebf^{(i)}, \Xprebf^{(i)})_{i=1}^{q}$ where $$\yprebf^{(i)}=\Xprebf^{(i)} {\mathbf{b} ^\star}^{(i)} + \bm{\varepsilon}^{(i)}_{\mathsf{pre}}$$ with $\Xprebf^{(i)}\in \R^{n_{\mathsf{pre}}\times p}$ and  $\bm{\varepsilon}^{(i)}_{\mathsf{pre}} \sim N(\bm{0},\sigma^2_{\mathsf{pre}} \Ibf_{n_{\mathsf{pre}}})$. Under \eqref{formulation}, each downstream task is assumed to be a linear combination of upstream tasks $\Bstar = [{\mathbf{b} ^\star}^{(1)},\ldots,{\mathbf{b} ^\star}^{(q)}]$
$$\st = \Bstar \alphastar = \sum_{i=1}^q {\mathbf{b} ^\star}^{(i)} \alpha ^\star_i$$
The goal is then to learn $\Bstar$ from upstream data $\{(\yprebf^{(i)}, \Xprebf^{(i)})\}_{i=1}^{q}$ and leverage this knowledge to improve performance downstream.

Assume that the pretraining data are abundant $n_{\mathsf{pre}}>p$ and the OLS estimators $\Bstarmis=[\tilde{\mathbf{b}}^{(1)},...,\tilde{\mathbf{b}}^{(q)}], \tilde{\mathbf{b}}^{(i)}=\qty({\Xprebf^{(i)}}^\top \Xprebf^{(i)})^{-1} {\Xprebf^{(i)}}^\top \yprebf^{(i)}$ 
are used to estimate $\Bstar$. Let $\tilde{\Rcal}^{\mathsf{avg}}$ be the objective $\Rcal^{\mathsf{avg}}$ based on the inaccurate estimate $\Bstarmis$. The result below characterizes the error in approximating $R $ with $\tilde{\Rcal}^{\mathsf{avg}}$. See proof in  \Cref{app:pretrainsamplecomp}.
\begin{theorem}\label{precomplexity}
    Suppose the assumptions in \Cref{concenprop} hold. Assume in addition that 
    $\eigbot^{-1}\qty({n_{\mathsf{pre}}^{-1}}\Xprebf^\top \Xprebf)\le M$. Then, for any $D>0$, there exists a constant $C=C(D,M)$ such that with probability at least $1-C(p^{-D}+n^{-D}+q^{-D})$, we have $\abs{R -\tilde{\Rcal}^{\mathsf{avg}}}\le C\cdot \mathcal{E}$ where
    \vspace{-2mm}
    \begin{equation*}
        \begin{aligned}
        \mathcal{E}:=\frac{1}{n^{1/7}}+\qty(\sqrt{\frac{\log q}{q}}+\sqrt{\frac{p}{n_{\mathsf{pre}}}\cdot \sigma^2_{\mathsf{pre}}})\cdot \normop{\BBstar}.
        \end{aligned}
    \end{equation*}
\end{theorem}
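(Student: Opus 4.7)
The plan is to proceed via the triangle inequality
$$|R - \tilde{\Rcal}^{\mathsf{avg}}| \le |R - \Rcal^{\mathsf{avg}}| + |\Rcal^{\mathsf{avg}} - \tilde{\Rcal}^{\mathsf{avg}}|,$$
where $\Rcal^{\mathsf{avg}}$ uses the true $\Bstar$ and $\tilde{\Rcal}^{\mathsf{avg}}$ uses the OLS estimate $\Bstarmis$. The first summand is handled directly by \Cref{concenprop}, yielding the $n^{-1/7}$ term together with the $\sqrt{\log q/q}\cdot\|\BBstar\|_{\mathrm{op}}$ term on an event of probability at least $1 - C(n^{-D} + q^{-D})$.

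For the second summand, the key observation is that $\Rcal^{\mathsf{avg}} = \mathfrak{B}^{\mathsf{avg}}(1+\mathcal{V}) + \sigma^2 \mathcal{V}$, and the quantities $\mathcal{V}$, $b_0$, $\hat{t}_i$, $\what_i$ depend only on $\Bhat, \bm{\lambda}, \Sigmabf$ and not on $\Bstar$. Hence
$$|\Rcal^{\mathsf{avg}} - \tilde{\Rcal}^{\mathsf{avg}}| = (1 + \mathcal{V})\cdot|\mathfrak{B}^{\mathsf{avg}}(\Bstar) - \mathfrak{B}^{\mathsf{avg}}(\Bstarmis)|,$$
and $(1+\mathcal{V})\le C(M)$ by \Cref{bA1A2size}. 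Writing $\mathfrak{B}^{\mathsf{avg}}(\mathbf{B}) = q^{-1}\operatorname{Tr}(\mathbf{A}\cdot \mathbf{B}\Sigmabf_{\alphastar}\mathbf{B}^\top)$ with the PSD matrix $\mathbf{A} := \sum_{i\in H}\hat{t}_i(1+\hat{t}_i b_0)^{-2}\Gammahat^{1/2}\what_i\what_i^\top\Gammahat^{1/2}$, both $\|\mathbf{A}\|_{\mathrm{op}}$ and $q^{-1}\operatorname{Tr}(\mathbf{A})$ are uniformly bounded by a $C(M)$ using the elementary bound $\hat{t}_i/(1+\hat{t}_i b_0)^2 \le 1/(4 b_0)$ together with \Cref{bdthat} and \Cref{bA1A2size}. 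A telescoping perturbation of $\mathbf{B}\Sigmabf_{\alphastar}\mathbf{B}^\top$ combined with $\|\Sigmabf_{\alphastar}\|_{\mathrm{op}}\le M$ and sub-multiplicativity then yields
$$|\mathfrak{B}^{\mathsf{avg}}(\Bstar) - \mathfrak{B}^{\mathsf{avg}}(\Bstarmis)| \le C(M)\cdot\|\Bstarmis - \Bstar\|_{\mathrm{op}}\cdot\bigl(\|\Bstar\|_{\mathrm{op}} + \|\Bstarmis\|_{\mathrm{op}}\bigr).$$

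The crux of the proof is therefore an operator-norm bound on the estimation error $\Bstarmis - \Bstar$. Each column $\tilde{\mathbf{b}}^{(i)} - {\mathbf{b}^{\star}}^{(i)} = (\Xprebf^{(i)\top}\Xprebf^{(i)})^{-1}\Xprebf^{(i)\top}\bm{\varepsilon}_{\mathsf{pre}}^{(i)}$ is, conditional on $\Xprebf^{(i)}$, a centered Gaussian vector with covariance $\sigma_{\mathsf{pre}}^2(\Xprebf^{(i)\top}\Xprebf^{(i)})^{-1}$ of operator norm at most $M\sigma_{\mathsf{pre}}^2/n_{\mathsf{pre}}$ by hypothesis, and these columns are independent across $i$. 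A standard concentration inequality for rectangular random matrices with independent sub-Gaussian columns (e.g.\ Theorem 4.6.1 of Vershynin) then gives
$$\|\Bstarmis - \Bstar\|_{\mathrm{op}} \le C(M,D)\sqrt{p\sigma_{\mathsf{pre}}^2/n_{\mathsf{pre}}}$$
with probability at least $1 - Cp^{-D}$. Combining this with $\|\Bstar\|_{\mathrm{op}}^2 = \|\BBstar\|_{\mathrm{op}}$ and the consequent bound $\|\Bstarmis\|_{\mathrm{op}}\le 2\|\Bstar\|_{\mathrm{op}}$ on the same event supplies the missing $\sqrt{p\sigma_{\mathsf{pre}}^2/n_{\mathsf{pre}}}\cdot\|\BBstar\|_{\mathrm{op}}$ contribution. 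A union bound across the three high-probability events delivers the advertised failure probability $C(p^{-D}+n^{-D}+q^{-D})$.

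The main obstacle I anticipate is sharply bounding the random matrix error $\|\Bstarmis - \Bstar\|_{\mathrm{op}}$: a naive Frobenius-norm bound from column-wise Gaussian concentration would introduce a spurious $\sqrt{q}$ factor, so care must be taken to invoke an operator-norm bound that exploits independence across columns and the uniform lower-eigenvalue hypothesis on $n_{\mathsf{pre}}^{-1}\Xprebf^{(i)\top}\Xprebf^{(i)}$. The remaining steps are perturbation-theoretic and use only results already established in the appendix.
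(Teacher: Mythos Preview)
Your overall architecture---split via the triangle inequality, invoke \Cref{concenprop} for $|R-\Rcal^{\mathsf{avg}}|$, and control $|\Rcal^{\mathsf{avg}}-\tilde{\Rcal}^{\mathsf{avg}}|$ through a perturbation of $\mathfrak{B}^{\mathsf{avg}}$---matches the paper. The substantive difference lies in \emph{how} the perturbation is controlled. The paper works pointwise in $\alphastar$: it bounds $|\mathfrak{B}-\tilde{\mathfrak{B}}|$ (for fixed $\alphastar$) by $\|(\Bstarmis-\Bstar)\alphastar\|_2$, observes that under a common pretraining design this vector is Gaussian with covariance $\sigma_{\mathsf{pre}}^2\|\alphastar\|_2^2(\Xprebf^\top\Xprebf)^{-1}$, applies Hanson--Wright to its squared norm, and only then averages over $\alphastar$. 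You instead go straight to the averaged quantity $\mathfrak{B}^{\mathsf{avg}}$ and reduce everything to the matrix-level estimate $\|\Bstarmis-\Bstar\|_{\mathrm{op}}$, which you bound by random-matrix concentration for independent sub-Gaussian columns. Your route is cleaner conceptually and avoids the slightly delicate ``condition on $\alphastar$, then integrate'' step, at the price of invoking a heavier tool than the single-vector Hanson--Wright the paper uses.

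Two small gaps in your writeup are worth flagging. First, your claim that $q^{-1}\operatorname{Tr}(\mathbf{A})\le C(M)$ is not correct in general: $\operatorname{Tr}(\mathbf{A})$ scales like $h$, so $q^{-1}\operatorname{Tr}(\mathbf{A})$ can be of order $h/q$. Fortunately your displayed perturbation inequality only needs $\|\mathbf{A}\|_{\mathrm{op}}\le C(M)$ (cycle the trace into a $q\times q$ matrix and use $|\operatorname{Tr}|\le q\cdot\|\cdot\|_{\mathrm{op}}$), so this erroneous side claim is harmless---just drop it. Second, your operator-norm bound on $\Bstarmis-\Bstar$ naturally produces a factor $\sqrt{p}+\sqrt{q}$ rather than $\sqrt{p}$, and your step $\|\Bstarmis\|_{\mathrm{op}}\le 2\|\Bstar\|_{\mathrm{op}}$ tacitly assumes $\sqrt{p\sigma_{\mathsf{pre}}^2/n_{\mathsf{pre}}}\lesssim\|\Bstar\|_{\mathrm{op}}$. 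Both are innocuous in the paper's intended regime ($q\le p$ and nontrivial signal), but you should either state these as standing assumptions or carry the extra quadratic term $\|\Bstarmis-\Bstar\|_{\mathrm{op}}^2$ through; the paper's vector-level argument sidesteps both issues because it never needs the full operator norm.
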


We prove \Cref{precomplexity} below. Recall definition of $\Bstarmis$ from \Cref{SectionPrertain}. Let us define the following notation
\begin{equation*}
     \tilde{\mathfrak{B}}=\sum_{i \in H} \frac{\that_i \left\langle\what_i, \Gammahat^{\frac{1}{2}} {\stmis} \right\rangle^2}{\left(1+\that_i b_0\right)^2}, \qquad \mathrm{where} \quad \stmis=\Bstarmis \alphastar.
\end{equation*}

We first prove the following lemma. 
\begin{lemma}\label{lemfdfs}
    We have the following inequality
    \begin{equation*}
        \abs{\mathfrak{B} - \tilde{\mathfrak{B}}}\le \frac{1}{4b_0}\cdot \normop{\Gammahat} \cdot \qty(\norm{(\Bstarmis-\Bstar)\alphastar}_2^2+2\norm{\st}\cdot \norm{(\Bstarmis-\Bstar)\alphastar}_2)
    \end{equation*}
\end{lemma}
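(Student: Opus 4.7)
The plan is to view $\mathfrak{B}$ and $\tilde{\mathfrak{B}}$ as quadratic forms in a common symmetric positive semi-definite matrix and then to expand the difference as a perturbation.

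Concretely, define
\[
\mathbf{M} := \sum_{i \in H} \frac{\that_i\, \what_i \what_i^\top}{(1+\that_i b_0)^2}, \qquad \Delta := \stmis - \st = (\Bstarmis - \Bstar)\alphastar,
\]
so that $\mathfrak{B} = {\st}^\top \Gammahat^{1/2} \mathbf{M} \Gammahat^{1/2} \st$ and $\tilde{\mathfrak{B}} = (\st+\Delta)^\top \Gammahat^{1/2} \mathbf{M} \Gammahat^{1/2}(\st+\Delta)$. Expanding the second expression and cancelling the common term yields
\[
\tilde{\mathfrak{B}} - \mathfrak{B} = 2\,{\st}^\top \Gammahat^{1/2} \mathbf{M} \Gammahat^{1/2}\Delta + \Delta^\top \Gammahat^{1/2} \mathbf{M} \Gammahat^{1/2}\Delta,
\]
so by the triangle inequality and Cauchy--Schwarz,
\[
|\mathfrak{B} - \tilde{\mathfrak{B}}| \le \normop{\Gammahat^{1/2}\mathbf{M}\Gammahat^{1/2}}\cdot\bigl(\norm{\Delta}_2^2 + 2 \norm{\st}_2 \norm{\Delta}_2\bigr).
\]

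The remaining step is to bound $\normop{\Gammahat^{1/2}\mathbf{M}\Gammahat^{1/2}}$. By sub-multiplicativity this is at most $\normop{\Gammahat}\cdot \normop{\mathbf{M}}$, since $\normop{\Gammahat^{1/2}}^2 = \normop{\Gammahat}$. Because $\mathbf{M}$ is spectrally decomposed in an orthonormal basis $\{\what_i\}_{i\in H}$, its operator norm equals $\max_{i\in H} \frac{\that_i}{(1+\that_i b_0)^2}$. The elementary pointwise inequality $\frac{t}{(1+t b_0)^2} \le \frac{1}{4 b_0}$ for all $t\ge 0$ (attained at $t = 1/b_0$, via the substitution $s = t b_0$ and the fact that $s/(1+s)^2 \le 1/4$) then gives $\normop{\mathbf{M}} \le 1/(4 b_0)$.

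Combining these two bounds delivers the claimed inequality. I do not anticipate any real obstacle: the proof is a straightforward perturbation expansion, and the only non-mechanical piece is the scalar inequality $t/(1+tb_0)^2 \le 1/(4b_0)$, which is a one-line calculus exercise and has in fact already been invoked in the proof of \Cref{conclemma} in \Cref{app:downstreamrisk}.
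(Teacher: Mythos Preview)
Your proof is correct and follows essentially the same approach as the paper: both expand the difference of quadratic forms, control the matrix $\mathbf{M}=\sum_{i\in H}\that_i\what_i\what_i^\top/(1+\that_i b_0)^2$ via the scalar inequality $t/(1+tb_0)^2\le 1/(4b_0)$, and bound the remaining terms through $\normop{\Gammahat}$ and Cauchy--Schwarz. The only cosmetic difference is that the paper packages the perturbation into a matrix $\Lbf$ and invokes a trace inequality, whereas you work directly with the bilinear-form bound $|\mathbf{u}^\top\mathbf{A}\mathbf{v}|\le\normop{\mathbf{A}}\norm{\mathbf{u}}_2\norm{\mathbf{v}}_2$; your route is arguably a touch cleaner.
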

\begin{proof}[Proof of \Cref{lemfdfs}]
First note that
\begin{equation*}
    \mathfrak{B} - \tilde{\mathfrak{B}}=\sum_{i\in H} \frac{\that_i \cdot \what_i^\top \Lbf \what_i}{\qty(1+\that_i b_0)^2}
\end{equation*}
where
\begin{equation*}
    \Lbf:=\Gammahat^{1/2} (\Bstarmis-\Bstar) \alphastar {\alphastar}^\top \qty(\Bstarmis-\Bstar)^\top \Gammahat^{1/2} +2\cdot \qty(\Gammahat^{1/2} \Bstar \alphastar) \qty(\Gammahat^{1/2}\qty(\Bstarmis-\Bstar)\alphastar)^\top.
\end{equation*}
From von Neumann's trace inequality, we have that
\begin{equation}\label{dfke}
    \abs{\mathfrak{B} - \tilde{\mathfrak{B}}} \le \normop{\sum_{i\in H}\frac{\that_i \what_i \what_i^\top}{\qty(1+\that_i b_0)^2}} \cdot \Tr(\Lbf)
\end{equation}
where
\begin{equation*}
    \Tr(\Lbf)=\norm{\Gammahat^{1/2} \qty(\Bstarmis-\Bstar)\alphastar}_2^2+2{\alphastar}^\top \qty(\Bstarmis-\Bstar)^\top \Gammahat \Bstar \alphastar.
\end{equation*}
For the RHS, we have that
\begin{equation*}
    \begin{aligned}
        & \normop{\sum_{i\in H}\frac{\that_i \what_i \what_i^\top}{\qty(1+\that_i b_0)^2}}\le \frac{1}{4b_0},\\
        &\norm{\Gammahat^{1/2} \qty(\Bstarmis-\Bstar)\alphastar}_2^2 \le \norm{\qty(\Bstarmis-\Bstar) \alphastar}_2^2 \cdot \normop{\Gammahat},\\
        &2{\alphastar}^\top \qty(\Bstarmis-\Bstar)^\top \Gammahat \Bstar \alphastar \le 2\normop{\Gammahat} \cdot \norm{\st}_2 \cdot \norm{(\Bstarmis-\Bstar)\alphastar}_2
    \end{aligned}
\end{equation*}
where we used the elementary inequality $x/(1+xb_0)^2 \le 1/(4b_0)$ in the first line. The result then follows from plugging the above into \eqref{dfke}. 
\end{proof}

We see from \Cref{lemfdfs} that $ \mathfrak{B} - \tilde{\mathfrak{B}}$ depends on the quantity $\norm{(\Bstarmis-\Bstar)\alphastar}_2$. The following lemma bounds this quantity using Hanson-Wright inequality, which we state in \Cref{Hanson}. 

\begin{lemma}\label{lemsdsd}
    Assume that $\frac{1}{n_\mathsf{pre}} \Xprebf^\top \Xprebf$ is non-singular. We have the following statement conditioned on $\alphastar$. For any $D>0$, there exists a constant $C(D)$ such that with probability at least $1-p^{-D}$, 
    \begin{equation*}
        \norm{\qty(\Bstar-\Bstarmis)\alphastar}_2^2\le C\cdot \frac{p}{n_{\mathsf{pre}}}\cdot \sigma^2_{\mathsf{pre}}\cdot \norm{\alphastar}_2^2 \cdot \eigbot^{-1} \qty(\frac{1}{n_\mathsf{pre}} \Xprebf^\top \Xprebf) \cdot \qty(1+2\sqrt{\frac{\log p}{p}}+2\frac{\log p}{p}).
    \end{equation*}
\end{lemma}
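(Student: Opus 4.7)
The plan is to reduce $\|(\Bstar - \Bstarmis)\alphastar\|_2^2$ to a Gaussian quadratic form and then invoke the Hanson-Wright inequality (\Cref{Hanson}). First, I would write the OLS residuals column-wise: since $\yprebf^{(i)} = \Xprebf \mathbf{b}^{\star(i)} + \bm{\varepsilon}^{(i)}_\mathsf{pre}$, the closed form of the estimator yields $\tilde{\mathbf{b}}^{(i)} - \mathbf{b}^{\star(i)} = (\Xprebf^\top \Xprebf)^{-1} \Xprebf^\top \bm{\varepsilon}^{(i)}_\mathsf{pre}$. Stacking columns, $\Bstarmis - \Bstar = (\Xprebf^\top \Xprebf)^{-1} \Xprebf^\top \bm{\mathcal{E}}$, where $\bm{\mathcal{E}} := [\bm{\varepsilon}^{(1)}_\mathsf{pre}, \ldots, \bm{\varepsilon}^{(q)}_\mathsf{pre}] \in \R^{n_\mathsf{pre} \times q}$ has i.i.d.\ $N(0,\sigma_\mathsf{pre}^2)$ entries.

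Next I would rewrite
\begin{equation*}
    \|(\Bstar - \Bstarmis)\alphastar\|_2^2 \;=\; (\bm{\mathcal{E}}\alphastar)^\top \mathbf{A}\, (\bm{\mathcal{E}}\alphastar), \qquad \mathbf{A} := \Xprebf (\Xprebf^\top \Xprebf)^{-2} \Xprebf^\top.
\end{equation*}
Conditionally on $\alphastar$, the vector $\bm{\mathcal{E}}\alphastar$ is Gaussian with covariance $\sigma_\mathsf{pre}^2 \|\alphastar\|_2^2\cdot \Ibf_{n_\mathsf{pre}}$, so one can write $\bm{\mathcal{E}}\alphastar = \sigma_\mathsf{pre} \|\alphastar\|_2 \cdot \tilde{\xibf}$ with $\tilde{\xibf}\sim N(\bm{0},\Ibf_{n_\mathsf{pre}})$, which satisfies the hypotheses of \Cref{Hanson} with sub-Gaussian constant bounded by an absolute constant. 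Thus the task reduces to deviation control for $\sigma_\mathsf{pre}^2 \|\alphastar\|_2^2 \cdot \tilde{\xibf}^\top \mathbf{A} \tilde{\xibf}$.

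Step three is to bound the three matrix quantities governing Hanson-Wright. Using the non-singularity of $\Xprebf^\top \Xprebf$,
\begin{equation*}
    \Tr(\mathbf{A}) = \Tr\!\left((\Xprebf^\top\Xprebf)^{-1}\right) \le \frac{p}{n_\mathsf{pre}}\cdot\eigbot^{-1}\!\!\left(\tfrac{1}{n_\mathsf{pre}}\Xprebf^\top\Xprebf\right),
\end{equation*}
and analogously $\|\mathbf{A}\|_\mathrm{op} \le n_\mathsf{pre}^{-1}\eigbot^{-1}(n_\mathsf{pre}^{-1}\Xprebf^\top\Xprebf)$ and $\|\mathbf{A}\|_F \le \sqrt{p}\cdot n_\mathsf{pre}^{-1}\eigbot^{-1}(n_\mathsf{pre}^{-1}\Xprebf^\top\Xprebf)$. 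Applying \Cref{Hanson} with $x = D\log p$, the event $1-p^{-D}$ implies
\begin{equation*}
    \tilde{\xibf}^\top \mathbf{A} \tilde{\xibf} \;\le\; \Tr(\mathbf{A}) + c\bigl(\|\mathbf{A}\|_\mathrm{op}\cdot D\log p + \|\mathbf{A}\|_F \sqrt{D\log p}\bigr),
\end{equation*}
and substituting the three bounds above yields exactly the claimed form, with the numerical constants $2$ absorbed into $C = C(D)$ alongside $c$ and $D$.

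The main obstacle is essentially bookkeeping rather than substance: one must cleanly separate the factor $n_\mathsf{pre}^{-1}$ used to rewrite $\eigbot^{-1}(\Xprebf^\top \Xprebf)$ as $n_\mathsf{pre}^{-1}\eigbot^{-1}(n_\mathsf{pre}^{-1}\Xprebf^\top \Xprebf)$, and track how the $\Tr$, $\|\cdot\|_\mathrm{op}$, $\|\cdot\|_F$ terms each contribute the factors $1$, $2\log p/p$, and $2\sqrt{\log p/p}$ respectively after dividing through by $(p/n_\mathsf{pre})\cdot \eigbot^{-1}$. Everything else is immediate from properties of the OLS estimator and the sub-Gaussianity of centered Gaussian entries.
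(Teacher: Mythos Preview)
Your argument is correct and follows essentially the same route as the paper: write the OLS error as a linear image of the Gaussian noise matrix, recognize the squared norm as a Gaussian quadratic form, and apply Hanson--Wright with $x=D\log p$. The only cosmetic difference is that you apply the quadratic form in $\R^{n_{\mathsf{pre}}}$ via $\mathbf{A}=\Xprebf(\Xprebf^\top\Xprebf)^{-2}\Xprebf^\top$ acting on the isotropic vector $\bm{\mathcal{E}}\alphastar$, whereas the paper passes to the $p$-dimensional Gaussian $(\Xprebf^\top\Xprebf)^{-1}\Xprebf^\top\mathbf{E}\alphastar$ with covariance proportional to $(n_{\mathsf{pre}}^{-1}\Xprebf^\top\Xprebf)^{-1}$; since the two matrices share the same nonzero spectrum, the trace, Frobenius, and operator norm bounds coincide and both computations land on the identical final estimate.
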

\begin{proof}[Proof of \Cref{lemsdsd}]
    Let us adopt the notation
    \begin{equation*}
        \mathbf{E}=\qty[\bm{\varepsilon}^{(1)}_{\mathsf{pre}},...,\bm{\varepsilon}^{(q)}_{\mathsf{pre}}].
    \end{equation*}

    Standard OLS theory then implies that
    \begin{equation*}
        \norm{\qty(\Bstarmis-\Bstar)\alphastar}_2^2=\norm{\qty(\Xprebf^\top \Xprebf)^{-1} \Xprebf^\top \mathbf{E} \alphastar}_2^2.
    \end{equation*}
    From the assumption that $\bm{\varepsilon}^{(i)}_{\mathsf{pre}} \stackrel{iid}{\sim} N(\bm{0},\Ibf_{n_{\mathsf{pre}}}), i=1,...,q$, we obtain that
    \begin{equation*}
        \qty(\Xprebf^\top \Xprebf) \Xprebf^\top \mathbf{E} \alphastar \sim N \qty(0, \sigma^2_{\mathsf{pre}}\cdot \frac{\norm{\alphastar}_2^2}{n_{\mathsf{pre}}} \cdot \qty(\frac{1}{n_{\mathsf{pre}}} \Xprebf^\top \Xprebf)^{-1}). 
    \end{equation*}
   This allows us to apply Hanson-Wright inequality, which yields that for any $x>0$, with probability at least $1-\exp(-x)$
   \begin{equation*}
       \begin{aligned}
           \norm{\qty(\Bstarmis-\Bstar)\alphastar}_2^2 &\le  \frac{c\cdot \sigma^2_{\mathsf{pre}}\cdot \norm{\alphastar}_2^2}{n_\mathsf{pre}} \cdot \qty(\sqrt{x}\cdot \norm{\frac{1}{n_\mathsf{pre}} \Xprebf^\top \Xprebf}_F+x\cdot \normop{\frac{1}{n_\mathsf{pre}} \Xprebf^\top \Xprebf})  \\
           & \le  \frac{c\cdot p\cdot  \sigma^2_{\mathsf{pre}}\cdot \norm{\alphastar}_2^2}{n_\mathsf{pre}} \qty(\frac{x}{p}+\sqrt{\frac{x}{p}})\cdot \eigbot^{-1} \qty(\frac{1}{n_\mathsf{pre}} \Xprebf^\top \Xprebf)
       \end{aligned}
   \end{equation*}
   where $c$ is some absolute constant. The result follows from taking $x\gets D\log p$. 
\end{proof}

Now we are ready to prove the main result \Cref{precomplexity}. 
\begin{proof}[Proof of \Cref{precomplexity}]
Combining \Cref{lemfdfs} and \Cref{lemsdsd} yields the following statement: conditioned on $\alphastar$, for any $D>0$, there exists a constant $C(D)$ such that with probability at least $1-p^{-D}$, 
\begin{equation*} 
        \abs{\mathfrak{B} - \tilde{\mathfrak{B}}}\le \frac{C}{4b_0}\cdot \normop{\Gammahat} \cdot \norm{\st} \cdot \sqrt{\frac{p}{n_\mathsf{pre}}}\cdot \sigma_{\mathsf{pre}} \cdot \norm{\alphastar}_2\cdot \eigbot^{-1/2} \qty(\frac{1}{n_\mathsf{pre}} \Xprebf^\top \Xprebf).
\end{equation*}
Using the assumption that 
$$\eigbot^{-1}\qty({n_{\mathsf{pre}}^{-1}}\Xprebf^\top \Xprebf)\le M,$$
along with \Cref{bdthat} and \Cref{bA1A2size}, we have the further upper bound that
\begin{equation*}
    \abs{\mathfrak{B} - \tilde{\mathfrak{B}}}\le C(D,M) \cdot  \sqrt{\frac{p}{n_\mathsf{pre}}\cdot  \sigma_{\mathsf{pre}}^2}\cdot \normop{\BBstar} \cdot\norm{\alphastar}_2^2.
\end{equation*}
Taking expectation with respect to $\alphastar$ and uses \eqref{concen}, the above becomes 
\begin{equation*}
    \E_{\alphastar} \abs{\mathfrak{B} - \tilde{\mathfrak{B}}}\le C(D,M) \cdot  \sqrt{\frac{p}{n_\mathsf{pre}}\cdot  \sigma_{\mathsf{pre}}^2}\cdot \normop{\BBstar}.
\end{equation*}
Now using the above, along with \Cref{bdthat} and \Cref{bA1A2size}, we have the following statement: for any $D>0$, there exists a constant $C(D,M)$ such that with probability at least $1-p^{-D}$, 
\begin{equation*}
    \abs{\tilde{\Rcal}^{\mathsf{avg}} - \Rcal^{\mathsf{avg}}}=\qty(1+\mathcal{V}) \cdot \abs{\E_{\alphastar} \qty(\mathfrak{B}  -  \tilde{\mathfrak{B}}) }\le C \cdot  \sqrt{\frac{p}{n_\mathsf{pre}}\cdot  \sigma_{\mathsf{pre}}^2}\cdot \normop{\BBstar}.
\end{equation*}
The result follows by combining this statement with \Cref{concenprop} via a union bound argument.

\end{proof}

\section{Supplementary Figures}

\subsection{Compare Fine-Grained Risk Components in \Cref{ablation}}\label{SIablationA}
\Cref{FG} plot asymptotic characterization (i.e. $\mathfrak{B} , \mathfrak{B}\cdot \mathcal{V}, \sigma^2 \mathcal{V} $) for fine-grained risk components (i.e. $B , V_{\Xbf} , V_{\Xbf,\epbf} $) averaged across $N=3000$ dranws of $\alphastar$. First row shows that as we increase $p/n$, bias monotonically increases and $V_{\Xbf,\epbf} $ monotonically decreases. 

Observe how EEP trades a small increase in bias for a large decrease invariance. The second row shows that $B $ and $V_{\Xbf} $ remain close to zero for OFP and EEP; this is expected as we are in the hard-selection regime where $q<n$. Notably, EEP can avoids the divergence of $V_{\Xbf,\epbf} $ at $n=q$ by slightly increasing its bias. The third row shows an interesting phenomenon not captured in \Cref{ablation}: $V_{\Xbf,\epbf} $ for EEP first increases before the width of $\Bhat$ reaches the capacity necessary to cotain $\Bstar$ (i.e. $q=50$). This may be explained by the following: when $k<q$, the model prioritizes using its additional resources to minimize $B $ and neglect the impact on $V_{\Xbf,\epbf} $; as $k$ surpasses $q$, the model has enough capacity to completely remove the bias and starts to use any additional resources to regulating variance. 
\begin{figure}[H] 
  \centering
  \includegraphics[width=0.9\columnwidth]{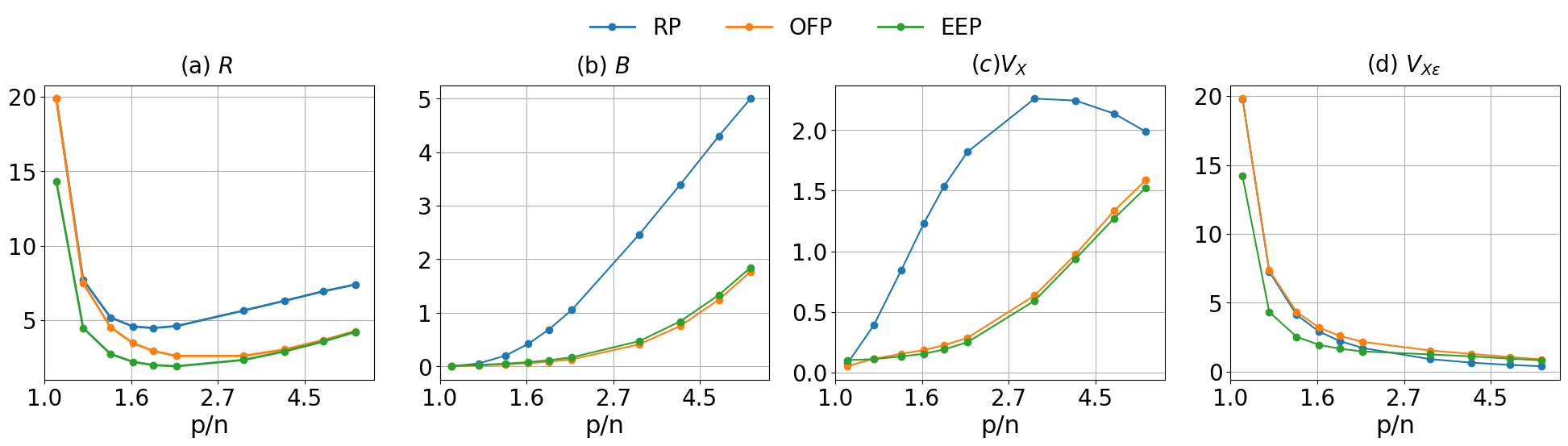} 

  \includegraphics[width=0.9\columnwidth]{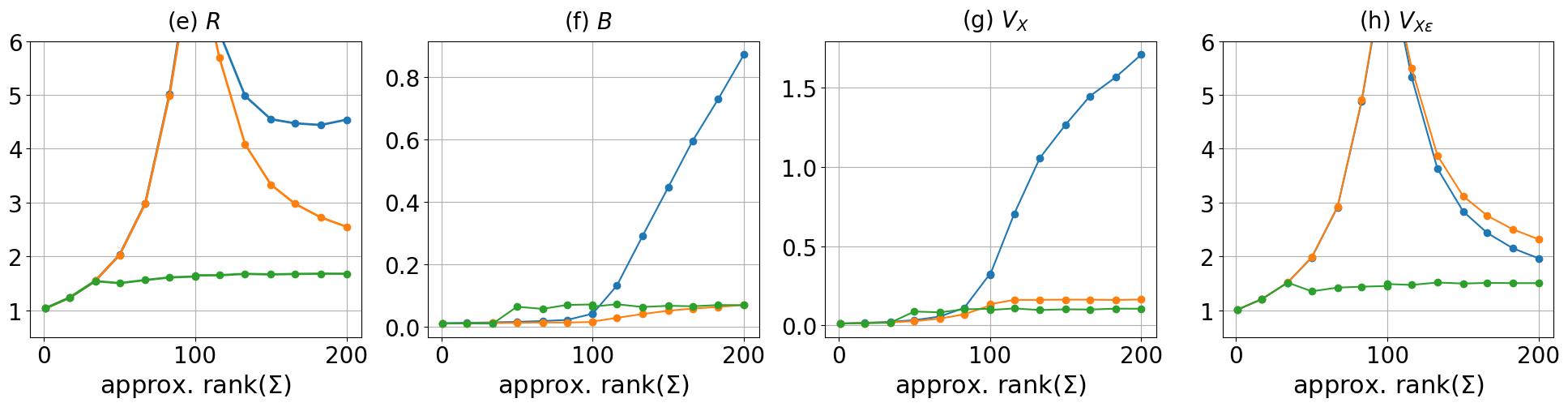} 

  \includegraphics[width=0.9\columnwidth]{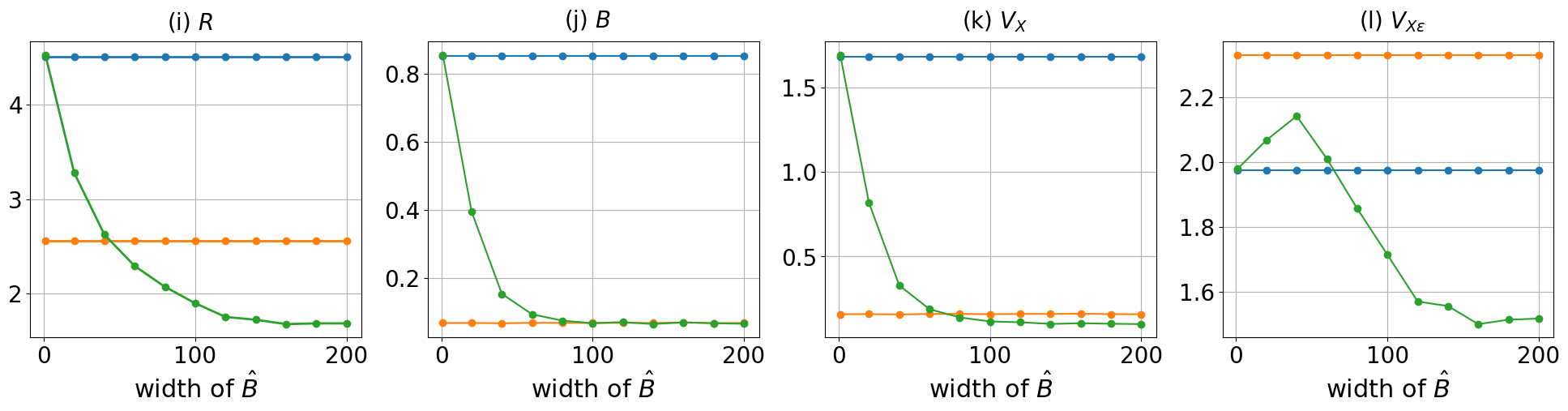}

  \caption{Row (I) is in the same settings as Column (I) in \Cref{ablation}. Row (II) is in the same settings as Column (II) in \Cref{ablation}. Row (III) is in the same settings as Column (III) in \Cref{ablation}}
  \label{FG}
\end{figure}

\subsection{\Cref{ablation} with different choices of $q$}\label{SIablation}
\Cref{ablationB} repeats \Cref{ablation} with a different choice of $q$ in each column: if $q<n$ in \Cref{ablation}, \Cref{ablationB} plots $q>n$ in the corresponding column and vice versa. 

Most notable observation is that when $q<n$, bias can be completely removed by OFP and EEP whereas when $q>n$, the same does not happen. For this reason, as $p/n$ or $\mathsf{rank}(\Sigmabf)$ increases, the risk typically holds flat for EEP when $q<n$ (compare Column (I) and (II) between \Cref{ablation} and \Cref{ablationB}) but keeps increasing when $q>n$. 

\begin{figure}[H] 
  \centering
  \includegraphics[width=0.75\columnwidth]{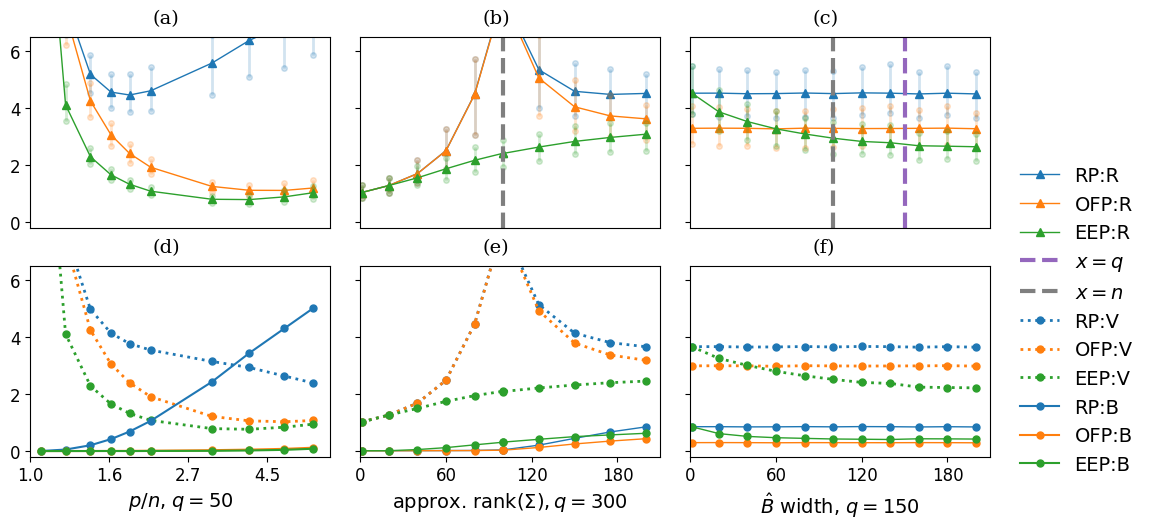} 
  \vspace{-0.5cm} 
  \caption{Same settings as \Cref{ablation}. Column (I) fixes $p=600, q=50$ and vary $n$ from $560$ to $100$. Columns (II) varies $m$ for $q=300$. Column (III) varies $k$, the width of $\Bhat$ with $q=150$.}
  \label{ablationB}
\end{figure}

\subsection{Ablating $q$, SNR and Common Structure in $\Bstar$}\label{SIablationC}
Columns of \Cref{ablationC} varies different problem or model parameters: width $q$ of $\Bstar$, SNR, and AR coefficient $\varrho$ of columns of $\Bstar$ where columns of $\Bstar \in \R^{p \times q}$ drawn independently from $N(\bm{0}, \Sigmabf^{\Bstar}), \Sigmabf^{\Bstar}_{ij}=\varrho^{|i-j|}$. A larger $\varrho$ roughly corresponds to stronger common structure in the ground-truth featurization $\Bstar$. 

In (a), (d), we observe that risk of OFP and EEP increases as $q$ increases, and that when $q$ is small, EEP's bias is closer to that of OFP and when $q$ is large it approaches that of RP, suggesting a shift of EEP's emphasis from minimizing bias towards minimizing variance. In (b), (d), risk of all predictors increase as SNR increases where EEP and OFP's risks increase at a slower rate as they maintain bias to be relatively flat. In (c), (f), risk of EEP and OFP decrease as they are able to leverage stronger common structure in $\Bstar$ to reduce bias; we also see that EEP's bias moves from that of RP to that of OFP as $\varrho$ increases, suggesting its stronger ability to shift emphasis between controlling variance and controlling bias.
\begin{figure}[H] 
  \centering
  \includegraphics[width=0.75\columnwidth]{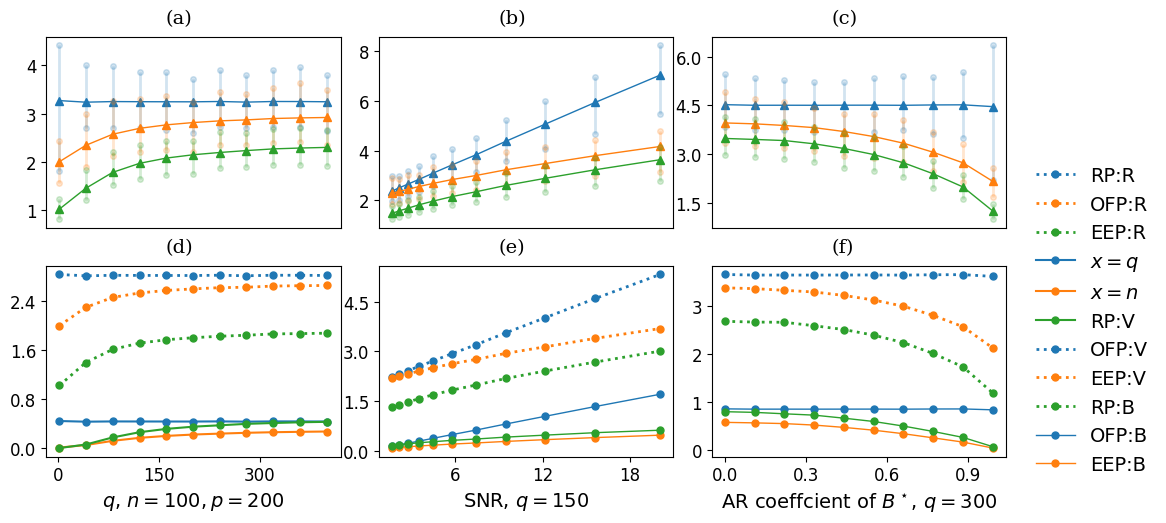} 
  \vspace{-0.5cm} 
  \caption{Same settings as \Cref{ablation}. Column (I) varies $q$ from 0 to 400 (recall $n=100,p=200$). Column (II) varies SNR from 1.6 to 20 with $q=150$. Column (III) varies AR coefficient $\varrho$ of $\Bstar$'s columns' common covariance: columns of $\Bstar \in \R^{p \times q}$ drawn independently from $N(\bm{0}, \Sigmabf^{\Bstar}), \Sigmabf^{\Bstar}_{ij}=\varrho^{|i-j|}$. }
  \label{ablationC}
\end{figure}


\end{document}